\theoremstyle{plain}
\newtheorem{theorem}{Theorem}[section]
\newtheorem{lemma}[theorem]{Lemma}
\theoremstyle{definition}
\theoremstyle{remark}
\newcommand{\eg}{\textit{e.g., }}
\newcommand{\ie}{\textit{i.e., }}
\definecolor{DarkCoral}{rgb}{0.8, 0.36, 0.27}
\title{Make Some Noise: Reliable and Efficient Single-Step Adversarial Training}
\author{
  Pau de Jorge \thanks{Correspondence to \texttt{pau@robots.ox.ac.uk}} \\
  University of Oxford \\
  NAVER LABS Europe \\
  \And
  Adel Bibi \\
  University of Oxford \\
  \And
  Riccardo Volpi \\
  NAVER LABS Europe \\
  \And
  Amartya Sanyal \\
  ETH Z\"urich \\
  ETH AI Center \\
  \And
  Philip H. S. Torr \\
  University of Oxford \\
  \And
  Gr\'egory Rogez \\
  NAVER LABS Europe \\
  \And
  Puneet K. Dokania \\
  University of Oxford \\
  Five AI Ltd. \\
}
\begin{document}

\maketitle

\begin{abstract}
Recently, \citet{RS-FGSM} showed that adversarial training with single-step FGSM leads to a characteristic failure mode named \textit{Catastrophic Overfitting} (CO), in which a model becomes suddenly vulnerable to multi-step attacks. Experimentally they showed that simply adding a random perturbation prior to FGSM (RS-FGSM) could prevent CO. However, \citet{grad_align} observed that RS-FGSM still leads to CO for larger perturbations, and proposed a computationally expensive regularizer (GradAlign) to avoid it.
In this work, we methodically revisit the role of noise and clipping in
single-step adversarial training. Contrary to previous intuitions, we find
that using a \textit{stronger noise} around the clean sample combined with \textit{not clipping} is highly effective in avoiding CO for
large perturbation radii. We then propose \textit{Noise-FGSM} (N-FGSM) that, while providing the benefits of single-step adversarial training, does not suffer from CO. Empirical analyses on
a large suite of experiments show that N-FGSM is able to match or surpass
the performance of previous state-of-the-art GradAlign, while achieving 3$\times$ speed-up. Code can be found in \url{https://github.com/pdejorge/N-FGSM}
\end{abstract}

%%%%%%%%%%%%%%%%%%%%%%%%%%%%%%%%%%%%%%%%%%%%%%%%%%%%%%%%%%%%%%%%%%%%%%%%%%%%%%%
%%%%%%%%%%%%%%%%%%%%%%%%%%%%%%%%%%%%%%%%%%%%%%%%%%%%%%%%%%%%%%%%%%%%%%%%%%%%%%%

\section{Introduction}
\label{introduction}

Deep neural networks have achieved remarkable performance on a variety of tasks \citep{he2015delving, go, bert}. However, it is well known that they are vulnerable to small worst-case perturbations around the input data  -- commonly referred to as \textit{adversarial examples}~\citep{szegedy2013intriguing}. The existence of such adversarial examples poses a security threat to deploying models in sensitive environments \citep{biggio2018wild, DBLP:journals/corr/abs-1903-05157}.
This has motivated a large body of work towards improving the \textit{adversarial robustness} of neural networks~\citep{fgsm, papernot2016distillation, tramer2018ensemble, DBLP:journals/corr/abs-1804-07090, parseval}.

The most popular family of 
% solutions to obtain
methods for learning
robust neural networks is based on the concept of \textit{adversarial training}~\citep{fgsm,PGD}. In a nutshell, adversarial training can be posed as a min-max problem where instead of minimizing some loss over a dataset of \textit{clean} samples, we augment the inputs with worst-case perturbations that are generated online during training. However, obtaining such perturbations is NP-hard~\citep{weng2018towards} and hence, 
% different approaches, commonly referred to as \textit{adversarial attacks}, 
different \textit{adversarial attacks}
have been suggested that approximate them. In their seminal work,~\citet{fgsm} proposed the {\em Fast Gradient Sign Method}~(FGSM), that generates adversarial attacks by performing a gradient ascent step on the loss function. Yet, while FGSM-based adversarial training provides robustness against single-step FGSM adversaries,~\citet{tramer2018ensemble} showed that these models are still vulnerable to multi-step attacks, namely those allowed to perform multiple gradient ascent steps. 
% instead of a single one.
%
% Notably,~\citet{PGD} introduced the multi-step \textit{Projected Gradient Descent} (PGD) attack.
%
Given their better (robust) performance, multi-step
% based 
attacks
% , notably introduced 
such as  \textit{Projected Gradient Descent} (PGD) \cite{PGD}
% , 
have now become the de facto standard for adversarial training. 

% yet, 
The main downside of multi-step adversarial training
% their cosa
is that the cost of these attacks
increases linearly with the number of steps, making their applicability often computationally prohibitive. 
% As a result
For this reason,
several works have focused on reducing the cost of adversarial training by approximating the worst-case perturbations with single-step attacks~\citep{RS-FGSM,free, dropout_fgsm}. In particular,~\citet{RS-FGSM} studied FGSM adversarial training and discovered that
it suffers from a characteristic failure mode, in which a model  suddenly becomes vulnerable to multi-step attacks despite remaining robust to single-step attacks. This phenomenon is referred to as \textit{Catastrophic Overfitting} (CO). 
% Moreover, 
As a solution,
they argued that adding a random perturbation prior to FGSM~(RS-FGSM) seemed sufficient to prevent CO and produce robust models. Yet, \citet{grad_align} recently observed that RS-FGSM still leads to CO as one increases the perturbation radii of the attacks. They suggested a regularizer (GradAlign) that
% , on the one hand,  
can avoid CO in 
% all 
the settings they considered, but 
% on the other hand requires the computation of
at the expense of computing
a double derivative -- significantly increasing the computational cost with respect to RS-FGSM. 
% This has motivated other works that aim at achieving the same level of robustness with a lower computational overhead
% \citep{zero_grad, AAAI}.

\begin{figure*}
\centering
\begin{subfigure}[b]{.27\textwidth}
\includegraphics[width=\linewidth]{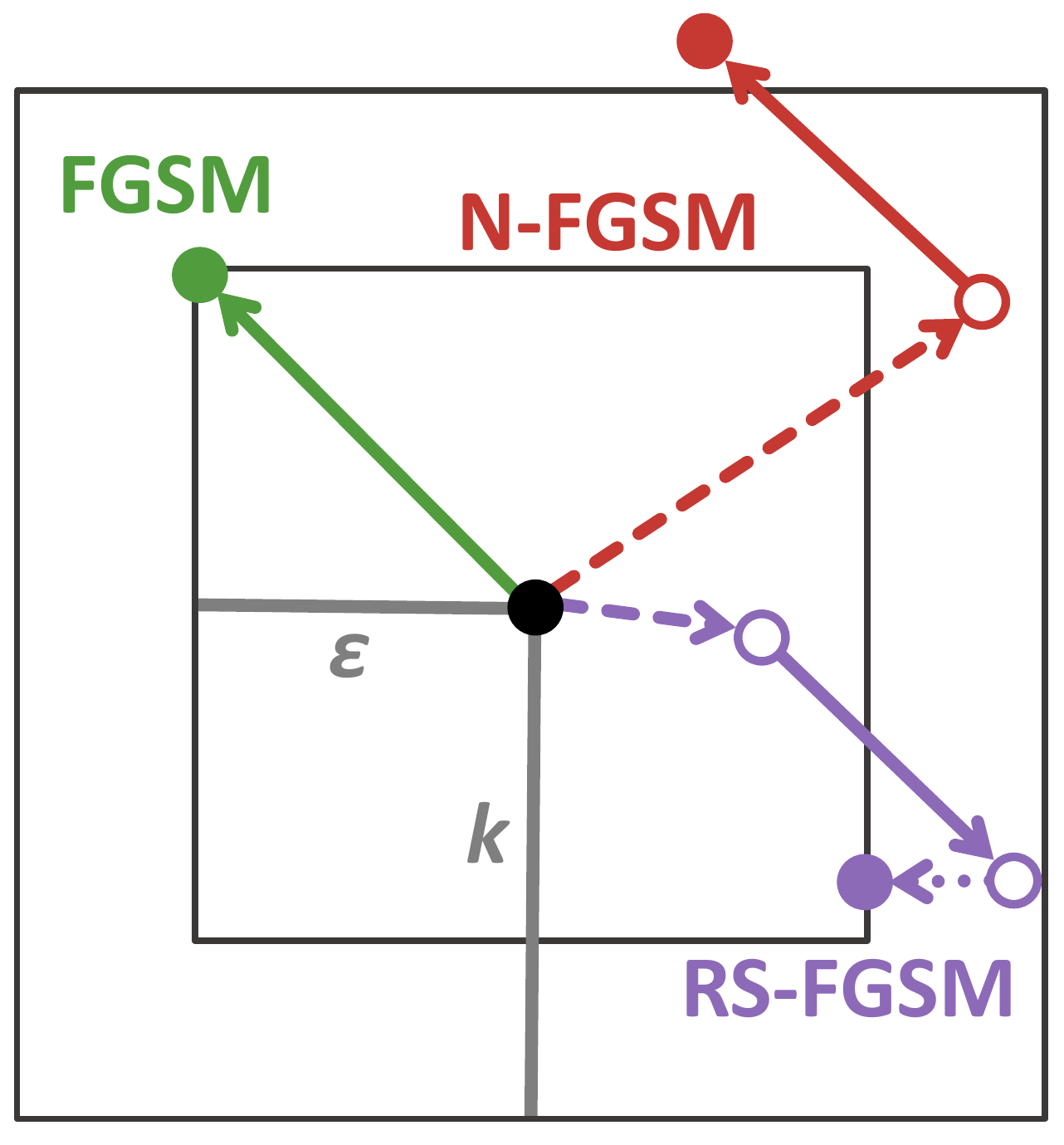}
\end{subfigure}
\hspace{5pt}
\begin{subfigure}[b]{.41\textwidth}
\includegraphics[width=\linewidth]{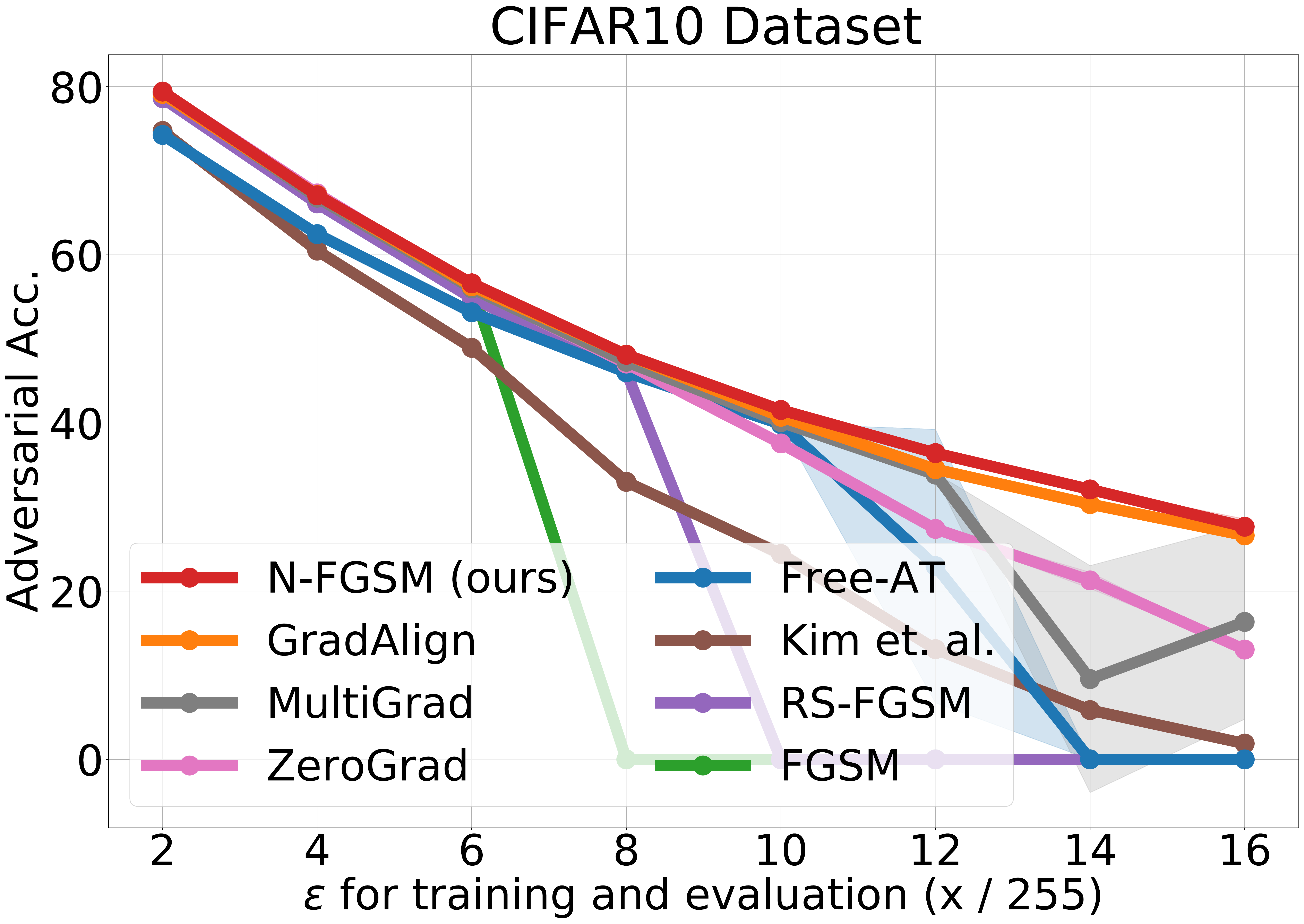}
\end{subfigure}
\hspace{5pt}
\begin{subfigure}[b]{.196\textwidth}
\includegraphics[width=\linewidth]{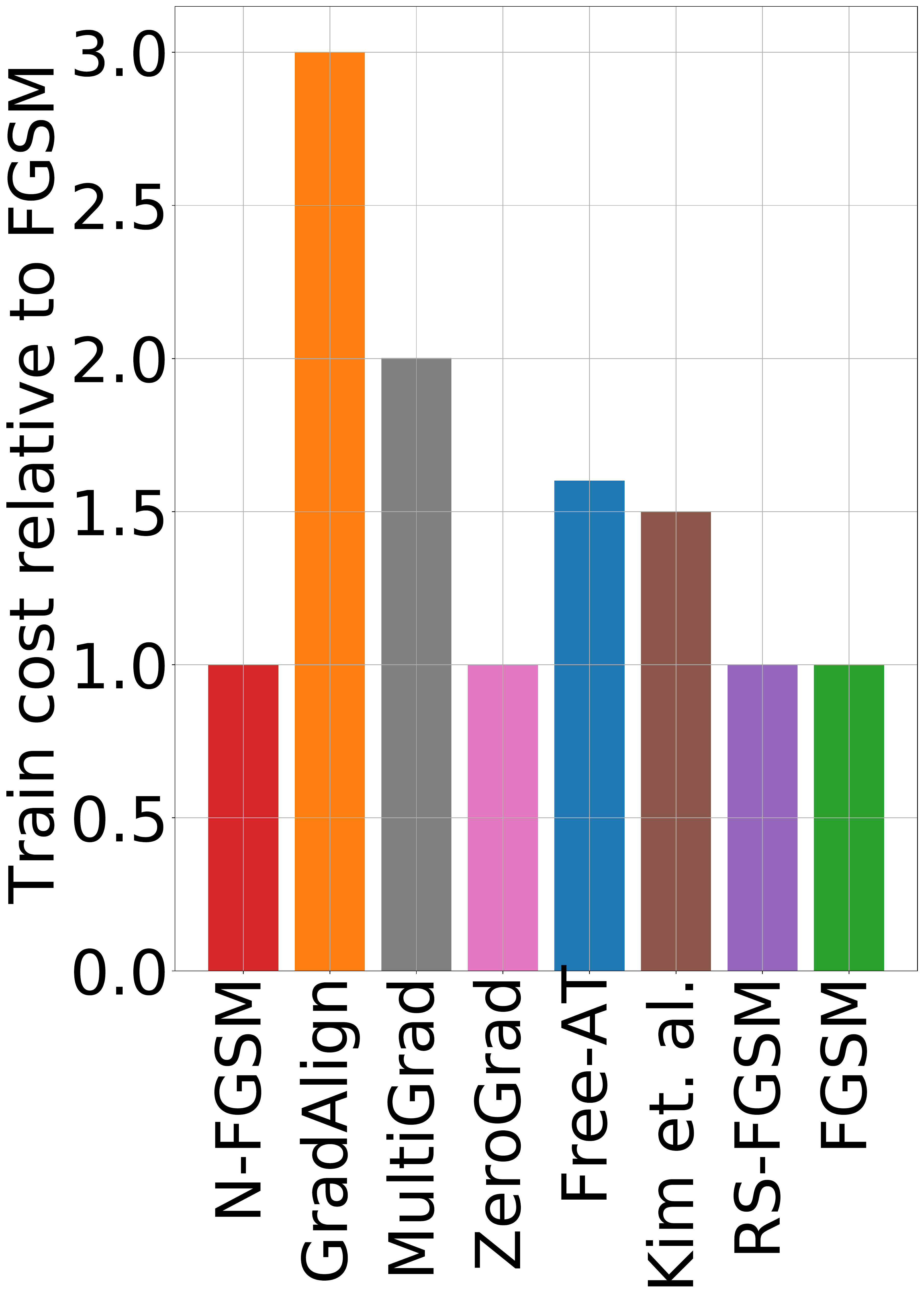}
\end{subfigure}

\caption{\textbf{Left:} Visualization of FGSM~\citep{fgsm}, RS-FGSM~\citep{RS-FGSM} and N-FGSM (ours) attacks. While RS-FGSM is limited to noise in the $\epsilon-l_\infty$ ball, N-FGSM draws noise from an arbitrary $k-l_\infty$ ball. Moreover, N-FGSM does not clip the perturbation around the clean sample. \textbf{Middle:} Comparison of single-step methods on CIFAR-10 with PreactResNet18 over different perturbation radii ($\epsilon$ is divided by 255).  Our method, N-FGSM, can match or surpass state-of-the-art results while \textit{reducing the cost by a $3\times$ factor}. Adversarial accuracy is based on PGD-50-10 and experiments are averaged over $3$ seeds. \textbf{Right}: Comparison of training costs relative to FGSM baseline based on the number of Forward-Backward passes, see~\cref{sec:train_cost} for details.
}
\label{figure:splash}
% \vspace{-5pt}
\end{figure*}

In this paper, we revisit the idea of including noise in single-step attacks. Differently from previous methods that consider the noise as part of the attack, we propose an adversarial training procedure where the noise is used as a form of \textit{data augmentation}. As we 
% later discuss 
detail
in~\cref{sec:noise_and_fgsm}, this motivates us to introduce two main changes with respect to previous methods: 1) We center adversarial perturbations with respect to noise-augmented samples and therefore, unlike previous RS-FGSM, we do not clip around the clean samples. 2) We use noise perturbations larger than the $\epsilon-$ball, since they are not restricted by the strength of the attack anymore.
% \rict{
% In this paper, we revisit the idea of combining a random step with FGSM.
% % \cite{tramer2018ensemble, RS-FGSM}.
% % , however, unlike 
% \rict{Differently from} previous methods that 
% % consider the random step as part of the attack, 
% limit the application of the random step to the generation of the attack~\cite{tramer2018ensemble, RS-FGSM},
% % we argue that performing a random step prior to the FGSM attack can also
% % be viewed as a form of data augmentation with additive noise. 
% \red{we study the role of noise injection on the training procedure} -- in terms of averting CO.  To that end, we propose an adversarial 
% training procedure where the noise is used as a form of \textit{data
% augmentation}, \red{where} the noise-perturbed samples \red{replace their clean counterparts as we craft the} adversarial perturbations. This comes with
% another important modification that -- as we will show -- yields 
% significant improvements: avoiding to clip the adversarial perturbation
% to the boundaries of a ball centered around the clean sample.}
% % From this perspective, we introduce two main changes: 1) We do not clip the perturbations around the clean samples. 2) We ablate the strength of the random step since it is a hyperparameter of the noise augmentations. This motivates us to propose a new method called Noise-FGSM~(N-FGSM), an illustration of which is presented in~\cref{figure:splash} (left). 
Our experiments 
% suggest 
show that 
% adding noise with a large magnitude in the {\em random step} 
performing data augmentation with sufficiently {\em strong noise}
and removing the {\em clipping step} improves model robustness and prevents CO, even against large perturbation radii. 
Our new method, termed N-FGSM, matches, or even surpasses, the robust accuracy of the regularized FGSM introduced by \citet{grad_align} (GradAlign), while \textit{providing a 3$\times$ speed-up}.

To corroborate the effectiveness of our solution, we present an experimental survey of recently proposed single-step attacks and empirically demonstrate that N-FGSM trades-off robustness and computational cost better than any other single-step approach, evaluated over a large spectrum of perturbation radii (see~\cref{figure:splash}, middle and right panels), over several datasets (CIFAR-10, CIFAR-100, and SVHN) and architectures (PreActResNet18 and WideResNet28-10). We will release our code to reproduce the experiments.

%%%%%%%%%%%%%%%%%%%%%%%%%%%%%%%%%%%%%%%%%%%%%%%%%%%%%%%%%%%%%%%%%%%%%%%%%%%%%%%
%%%%%%%%%%%%%%%%%%%%%%%%%%%%%%%%%%%%%%%%%%%%%%%%%%%%%%%%%%%%%%%%%%%%%%%%%%%%%%%
\section{Related Work}
\label{related_work}\looseness=-1
Since the discovery of adversarial examples, many defense mechanisms have been proposed, 
%For instance, \textit{preprocessing techniques} modify input images to neutralize possible adversarial attacks  \citep{guo2018countering, buckman2018thermometer} while \textit{adversarial detection} methods focus on detecting and rejecting adversarial attacks \citep{carlini2017adversarial, ma2018characterizing, tian2021detecting}. %\textit{Certifiable defenses} provide theoretical guarantees for the lower bound performance of networks subjected to worst-case adversarial attacks, however, they incur additional costs during inference and, empirically, they yield sub-optimal performance \citep{rand_smoothing, wong2018provable, raghunathan2018certified, COLT}.
\textit{adversarial training} being one of the most popular and empirically validated. We can categorise adversarial training methods based on how they approximate the perturbations applied to training samples. \textit{Multi-step} approaches approximate an inner maximization problem to find the worst-case perturbation with several gradient ascent steps \citep{TRADES, adv_ml_scale, PGD}. While this provides a better approximation, it is also more expensive. At the other end of the spectrum, \textit{single-step} methods only use one gradient step to approximate the worst case perturbation. \citet{fgsm} first proposed FGSM; \citet{tramer2018ensemble} proposed a new variant with an additional random step (R+FGSM), but observed that both methods were vulnerable to multi-step attacks.
% Since then, multi-step attacks have become the most popular approaches to increase robustness. 
% the main downside is that the cost increases linearly with the number of steps.
% \rict{with the main downside being the cost that} increases linearly with the number of steps.
\citet{free} proposed \textit{Free Adversarial Training} (Free-AT), which successfully reduced the computational cost of training by using % Ric - it was undefined
a single backward pass to compute both 
% the 
weight update and attack. Motivated by this, \citet{RS-FGSM} explored a 
% variation 
variant
of R+FGSM, namely RS-FGSM, that uses a less restrictive form of noise and showed this can improve 
robustness for moderate perturbation radii at the same cost as FGSM. Recently, \citet{grad_align} proposed the GradAlign regularizer. Combining FGSM with GradAlign results in robust models at even larger perturbation radii. However, GradAlign suffers from a~\textit{threefold} increase in the training cost to  as compared to FGSM.
% series 
% This motivated a new
% body
% of works that aim at matching the performance of GradAlign without the additional computational overhead -- \eg
% % Multi/Zero-grad 
% \citet{zero_grad} and \citet{AAAI}.
% % \citet{gilmer2019adversarial, fawzi2018empirical} suggested a strong link between robustness to adversarial attacks and to random noise.Motivated by this,
%
The need for more efficient solutions has motivated a growing body of work whose goal is the design of computationally lighter single-step methods~\citep{zero_grad,AAAI,dropout_fgsm,SLAT,towards}.

In this work, we revisit the idea of combining noise with the FGSM attack. 
%Our key findings are that using a larger amount of noise 
% coupled with 
% \textit{and}
% removing the constraint that attacks must lie in the $\epsilon-l_\infty$ ball is key to obtaining robust models. Based on these observations we propose N-FGSM, which
Our method builds upon FGSM and intuitions from R+FGSM and RS-FGSM to combine it with random perturbations, however, we consider the noise step as data augmentation rather than part of the attack. This motivates us to use a stronger noise \textit{without} clipping.
As opposed to~\cite{kang2021understanding}, our thorough study leads to a practically effective approach 
% that is robust against large perturbations as well.
that yields robustness also against large perturbation radii.
%We note that \citet{kang2021understanding} concurrently studied RS-FGSM without clipping, however, as opposed to our work, they did not investigate the impact of noise and are not robust against large perturbations.
% \red{
% Less related attempts to improve FGSM include introducing dropout in every layer \citep{dropout_fgsm}, perturbing intermediate feature maps 
% together with
% the input \citep{SLAT} and combining RS-FGSM and PGD attacks \citep{towards}.
% }

%%%%%%%%%%%%%%%%%%%%%%%%%%%%%%%%%%%%%%%%%%%%%%%%%%%%%%%%%%%%%%%%%%%%%%%%%%%%%%%
%%%%%%%%%%%%%%%%%%%%%%%%%%%%%%%%%%%%%%%%%%%%%%%%%%%%%%%%%%%%%%%%%%%%%%%%%%%%%%%

\section{Preliminaries on Single-Step Adversarial Training} \label{sec:preliminaries}
\looseness=-1
Given a classifier $f_\theta : \mathcal{X} \rightarrow \mathcal{Y}$ parameterized by $\theta$ and a perturbation set $\mathcal{S}$, $f_\theta$ is
% said to be 
defined as
\textit{robust} at $x \in \mathcal{X}$ on the set $\mathcal{S}$ if for all \(\delta \in\mathcal{S}\) we have \(f_\theta(x+\delta) = f_\theta(x)\). One of the most popular definitions for $\mathcal{S}$ is the $\epsilon-\ell_\infty$ ball, \ie $\mathcal{S} =  \{\delta : \|\delta\|_\infty \leq \epsilon\}$. This is known as the $l_\infty$ threat model which we adopt throughout this work. 

To train networks that are robust against $\ell_\infty$ threat models, adversarial training modifies the classical training procedure of minimizing a loss function over a dataset $\mathcal{D} = \{(x_i,y_i)\}_{i=1:N}$ of images $x_i \in \mathcal{X}$ and labels $y_i \in \mathcal{Y}$. In particular, adversarial training instead minimizes the worst-case loss over the perturbation set $\mathcal{S}$, \ie training is on the adversarially perturbed samples
$\{(x_i+\delta_i,y_i)\}_{i=1:N}$. Under the $l_\infty$ threat model, we can formalize adversarial training as solving the following problem:
\begin{equation}
\begin{aligned} \label{eq:adversarial_training}
    \min_\theta \sum_{i=1}^N \max_{\delta} \mathcal{L}(f_\theta(x_i + \delta), y_i) \ \ \textrm{s.t.} \ \|\delta\|_\infty \leq \epsilon,
\end{aligned}
\end{equation}
where $\mathcal{L}$ is typically the cross-entropy loss.
Due to the difficulty of finding the exact inner maximizer, the most common procedure for adversarial training is to approximate the worst-case perturbation through several PGD steps \citep{PGD}. While PGD has been shown to yield robust models, 
% it comes at a cost of a linear increase in the computational overhead with the number of PGD steps. 
its cost increases linearly with the number of steps.
As a result, several works have focused on reducing the cost of adversarial training by approximating the inner maximization with a single-step.

If the loss function is linear with respect to the input, the inner maximization of~\cref{eq:adversarial_training} will enjoy a closed form solution.~\citet{fgsm} leveraged this to propose FGSM, where the adversarial perturbation follows the direction of the sign of the gradient.~\citet{tramer2018ensemble} proposed adding a random initialization prior to FGSM. However, both methods were later shown to be vulnerable against multi-step attacks, such as PGD. Contrary to prior intuition, recent work from \citet{RS-FGSM} observed that combining a random step with FGSM can actually lead to a promising robustness performance. In particular, most recent single-step methods approximate the worst-case perturbation solving the inner maximization problem in~\cref{eq:adversarial_training} with the following general form:
\begin{equation}
\begin{aligned}\label{eq:general_update_rule}
    \delta = \psi\Big( \eta + \alpha \cdot \textrm{sign} \big( \nabla_{x_i} \mathcal{L}(f_\theta(x_i + \eta), y_i)  \big) \Big),
    %\ \eta \sim \Omega
\end{aligned}
\end{equation}
where $\eta$ is drawn from a distribution $\Omega$.
% and $\Omega$ is the distribution from which we draw noise perturbations.
For example, when $\psi$ is the projection operator onto the $\ell_\infty$ ball and $\Omega$ is the uniform distribution $[-\epsilon, \epsilon]^d$, where $d$ is the dimension of $\mathcal{X}$, this recovers RS-FGSM. % with the following update:
% \begin{equation}
% \begin{aligned}\label{eq:rs-fgsm}
%     &\delta = \eta + \alpha \cdot \textrm{sign} \big( \nabla_{x_i} \mathcal{L}(f(x_i + \eta), y_i)  \big) \\
%     &\delta_{\text{RS-FGSM}} = \textrm{Proj}_{\|\delta\|_\infty \leq \epsilon}\left( \delta \right), 
%     % \textrm{where} \ \ \eta \sim \mathcal{U}[-\epsilon, \epsilon]^d.
% \end{aligned}
% \end{equation}
Under a different noise setting where $\Omega = (\epsilon - \alpha) \cdot \textrm{sign} \left( \mathcal{N}(\mathbf{0}_d, \mathbf{I}_d) \right)$ and by choosing the step size $\alpha$ to be in $[0, \epsilon]$, we recover R+FGSM by \citet{tramer2018ensemble}. This was among the first works to explore the application of noise to FGSM, but did not report improvements over 
% FGSM
it. If we consider $\Omega$ to be deterministically $0$ and $\psi$ to be the identity map, we recover FGSM. Finally, if we take FGSM and add a gradient alignment regularizer, this recovers GradAlign.

\section{Noise and FGSM}\label{sec:noise_and_fgsm}
\looseness=-1
Previous methods that combined noise with FGSM, \eg R+FGSM~\citep{tramer2018ensemble} and RS-FGSM~\citep{RS-FGSM}, have considered the noise as a \textit{random step} integrated within the attack. Since it is a common practice to restrict adversarial perturbations to the $\epsilon-$ball, we argue that this introduces a trade-off between the magnitude of the random step and that of the attack. For illustration, consider the purple lines corresponding to RS-FGSM in~\Cref{figure:splash}~(left). If the initial random step is significantly larger than the $\epsilon-$ball, then the final clipping step will have a noticeable impact on the perturbation, possibly removing a considerable portion of the FGSM step (middle arrow). To prevent this from happening, R+FGSM and RS-FGSM restrict the random step to lie within the $\epsilon-$ball, thereof implicitly entangling the noise magnitude and the attack strength.

Contrary to previous methods, we note that adding noise to the clean sample can be considered as a form of \textit{data augmentation} to be applied independently from the attack. We make two considerations from this perspective 1) When one performs data augmentation during adversarial training, the input after the corresponding transformation is the starting point to compute the adversarial perturbations, therefore, we argue that adversarial attacks should be centered around the noise-augmented samples. This motivates us to \textit{avoid clipping} around the clean sample. 2) We do not need to restrict the noise augmentation to lie inside an $\epsilon-$ball, since its strength is disentangled from that of the attack. Thus, we can use \textit{stronger noise-augmentations} than previous methods.

These modifications lead to a novel adversarial training method that combines noise-based data augmentations with FGSM. We denote it as Noise-FGSM (N-FGSM). Following the notation introduced in \cref{sec:preliminaries}, we define the noise augmented sample as $x_{\textrm{aug}} = x + \eta$ where $\eta$ is sampled 
from a uniform distribution on $[-k, k]^d$ (where we can have $k>\epsilon$). Then the adversarially perturbed samples have the following form:
\begin{equation}
\begin{aligned}\label{equation:n-fgsm}
    x_{\text{N-FGSM}} = x_{\textrm{aug}} + \alpha \cdot \text{sign} \big(\nabla_{ x_{\textrm{aug}}} \mathcal{L}(f_\theta(x_{\textrm{aug}}), y)  \big). 
    %\ \  \eta \sim \mathcal{U}[-k, k]^d.
\end{aligned}
\end{equation}\looseness=-1
% where $\eta$ is sampled from a uniform spanning $[-k, k]^d$. 
This construction corresponds to augmenting the clean sample $x$ with the perturbation defined in~\cref{eq:general_update_rule}~where $\psi$ is the identity map and $\Omega$ is the uniform ditribution spanning $[-k, k]^d$. We detail our full adversarial training procedure in~\cref{alg:n-fgsm}. In what follows, we analyse the effect of treating the noise as data augmentation as opposed to treating it as a random step within the attack. In particular, we show that clipping around the clean sample $x$ (as done in RS-FGSM) can strongly degrade the robustness of the network. Moreover, we show that as we increase the $\epsilon$ radii of adversarial attacks, we need stronger noise perturbations than previously used to prevent CO. \looseness=-1

\begin{figure*}
\centering
\begin{subfigure}[b]{.31 \linewidth}
\includegraphics[width=\linewidth]{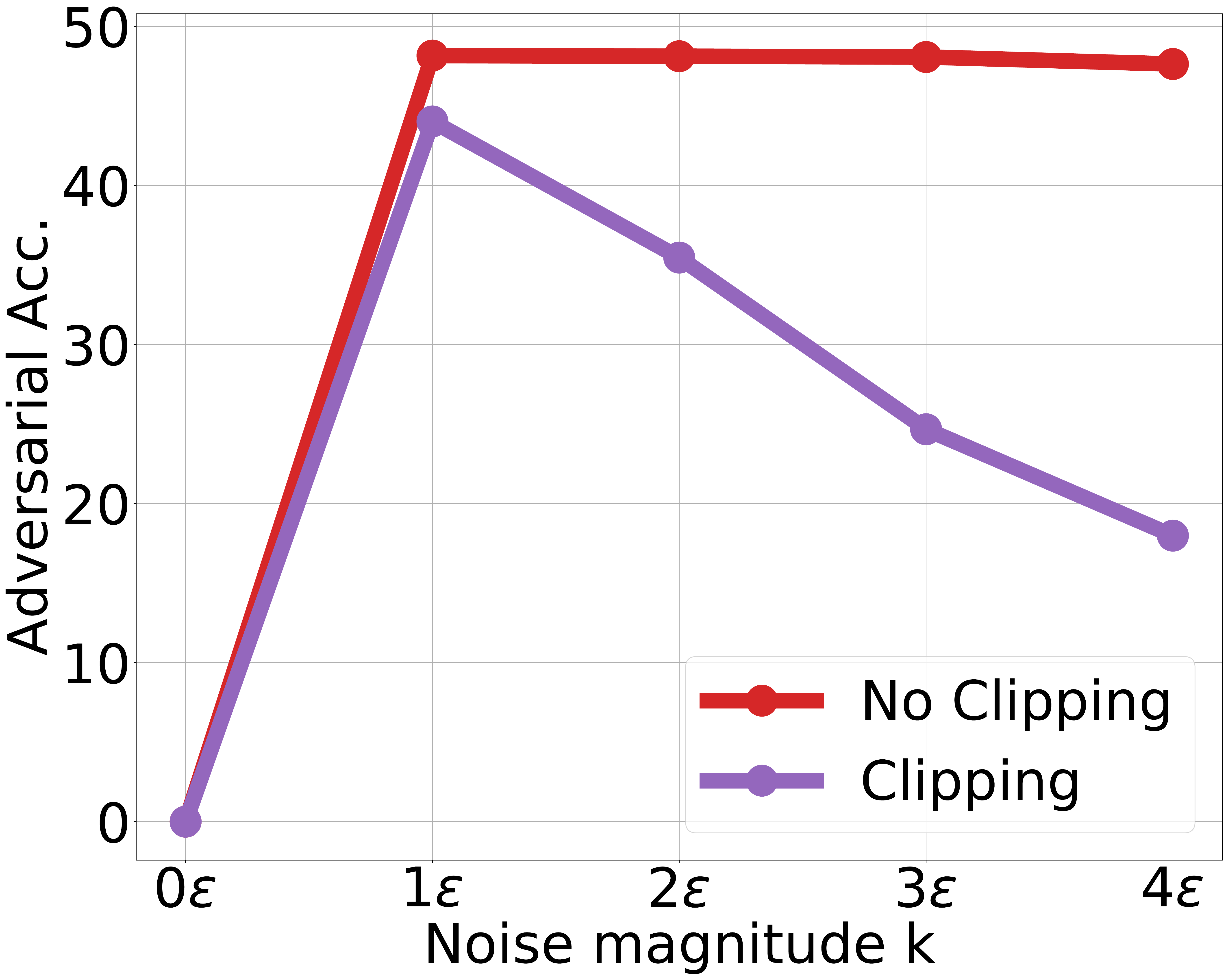}
\end{subfigure}
\hspace{5pt}
\begin{subfigure}[b]{.303 \linewidth}
\includegraphics[width=\linewidth]{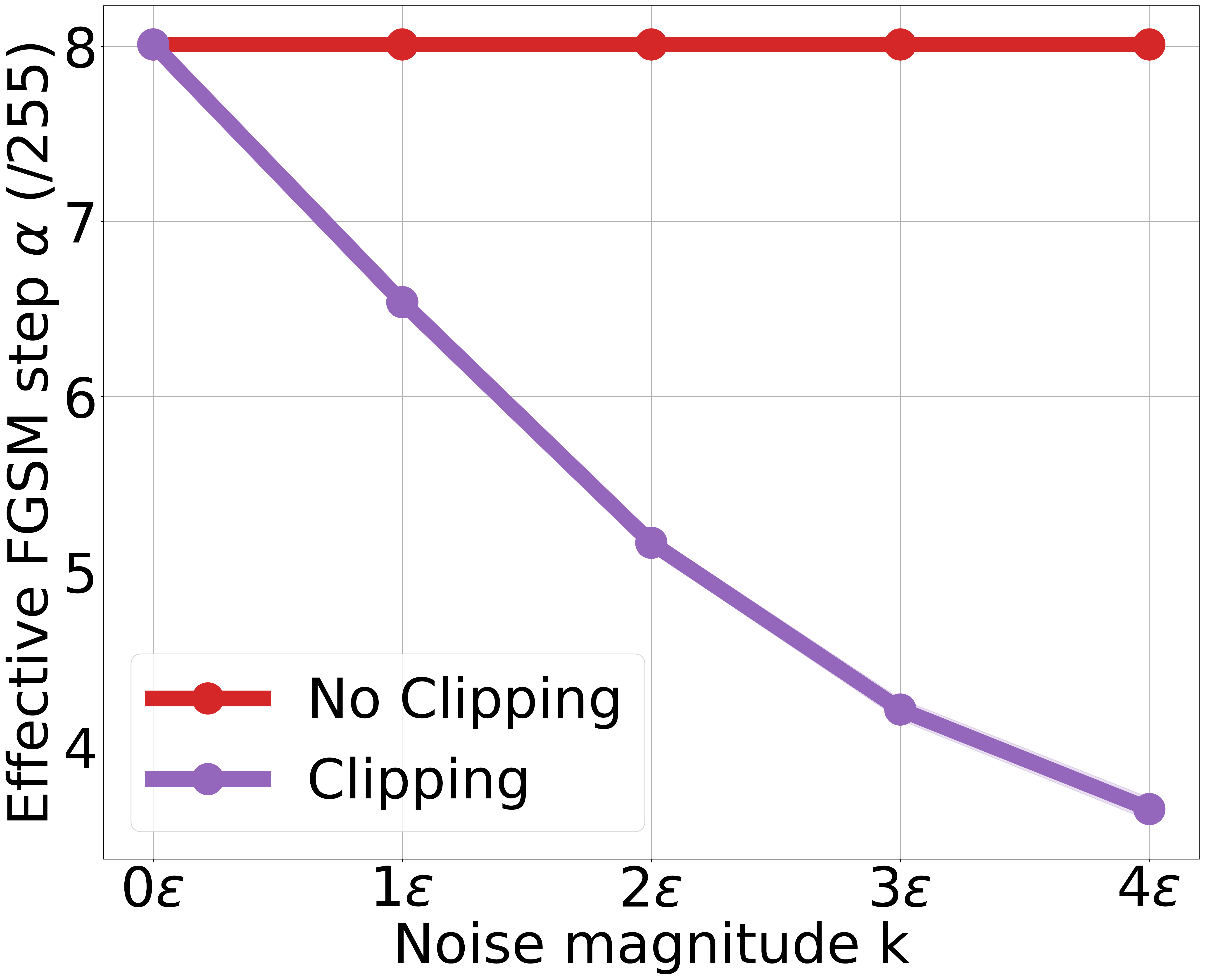}
\end{subfigure}
\hspace{5pt}
\begin{subfigure}[b]{.31 \linewidth}
\includegraphics[width=\linewidth]{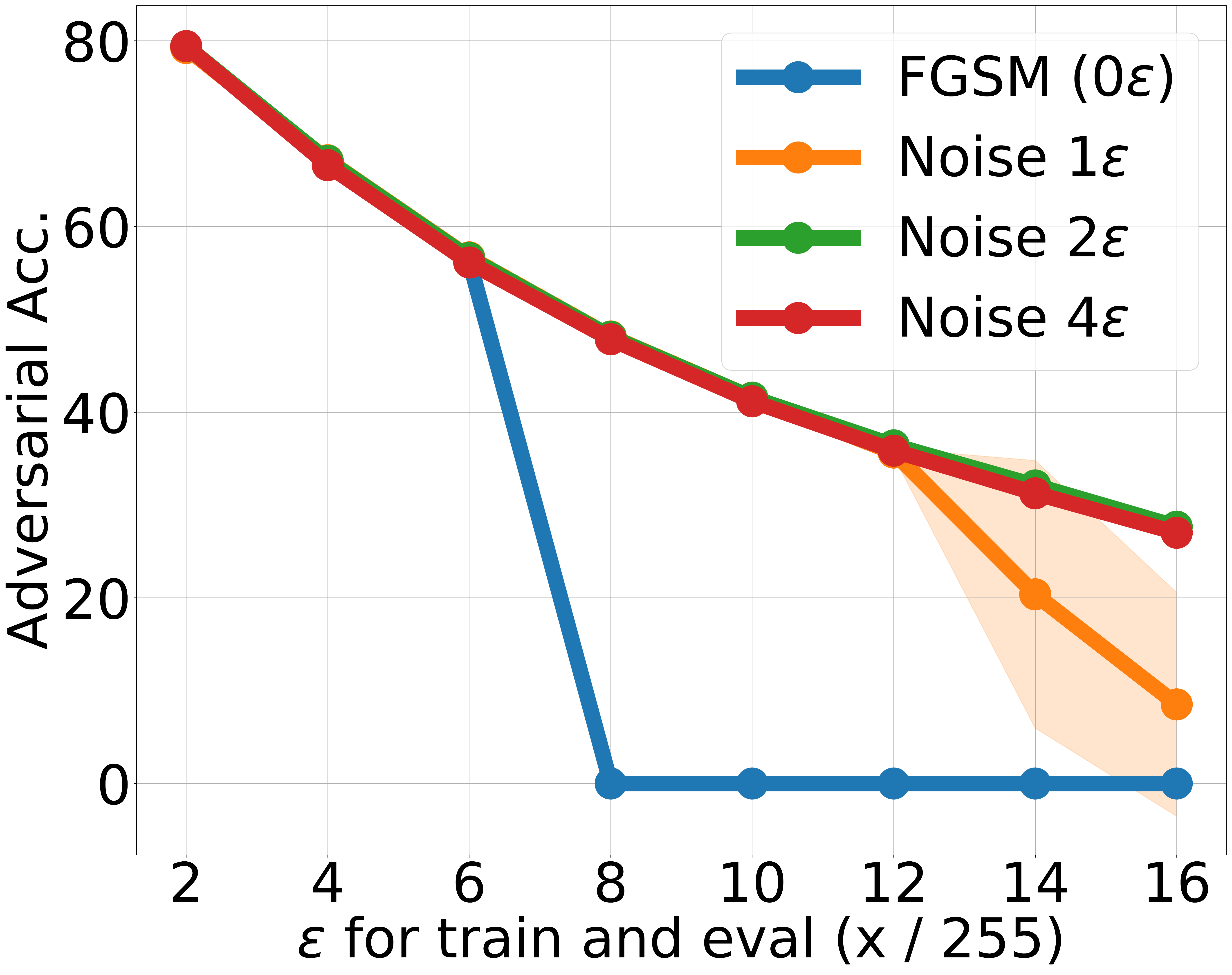}
\end{subfigure}
\caption{\textbf{Left:} Ablation of clipping vs not clipping around the clean sample $x$ for $\epsilon=\nicefrac{8}{255}$. Clipping leads to a significant drop in robustness which increases with the strength of the noise augmentations.  \textbf{Middle:} Analysis of the effective FGSM step size after clipping. We observe that clipping leads to a decrease in the effective FGSM step size, thus, adversarial perturbations will be more similar to random noise. \textbf{Right}: N-FGSM (ours) when varying the noise magnitude $k$ ($\epsilon$ is divided by 255). Increasing the amount of noise is key to avoiding CO. For (left) and (right) plots, adversarial accuracy is based on PGD-50-10 and experiments are averaged over 3 seeds.}
\label{figure:ablation}
% \vspace{-5pt}
\end{figure*} 

\textbf{Clipping around clean sample $x$ hinders the effectiveness of perturbations.}  \label{sec:analyses-noise} 
We analyse two variants, one where perturbations are clipped around the clean sample $x$ (as done in previous methods) and another where no clipping is applied. In~\cref{figure:ablation} (left), we report the robust accuracy using PGD-50-10 (\ie PGD attack with 50 iterations and 10 restarts) with $\epsilon=\nicefrac{8}{255}$ and observe that clipping significantly degrades the effectiveness of FGSM training. To understand this drop, consider the following perturbations; (\textbf{1}) a baseline perturbation where we only use noise $\delta_{\textrm{random}} = \psi(\eta)$ and (\textbf{2}) a perturbation that combines noise with FGSM $\delta_{\textrm{full}} = \psi(\eta + \alpha \cdot \text{sign} \big( \nabla_{x} \mathcal{L}(f_\theta(x + \eta), y)  \big))$. Moreover, we consider two cases in which we either define $\psi$ as a clipping operator or as the identity. We define the effective FGSM step size as the magnitude corresponding to the ratio\footnote{The denominator $\|x\|_2$ is simply to normalize the $\ell_2-$norm and 
% obtain something 
be
comparable to the FGSM step size 
% hyperparameter 
$\alpha$.} $\alpha_{\textrm{effective}}= \nicefrac{\| \delta_{\textrm{full}} - \delta_{\textrm{random}} \|_2}{\|x\|_2}$ which measures the contribution of the FGSM step in the final perturbation compared to simply following the noise direction $\eta$. In~\cref{figure:ablation} (middle), we observe that the clipping operator reduces the effective magnitude of FGSM, thus, perturbations become more similar to only using random noise. On the other hand, without clipping we always take the full step in the FGSM direction. 
% This further highlights the trade-off between noise magnitude and attack strength, as discussed above.
This highlights the trade-off between noise magnitude and attack strength discussed above. \looseness=-1

% \textbf{Larger Noise Improves Robustness.} 
\textbf{Larger noise is also necessary to prevent CO.} % \textbf{The Role of noise in single-step adversarial training.}
As discussed above, previous work did not investigate the effects of using noise perturbations potentially larger than the attack strength. However, we empirically find that increasing the noise magnitude is key to avoiding CO. In particular, as seen in \cref{figure:ablation} (right), when no clipping is performed, it is crucial that we augment with larger noise magnitude in order to prevent CO in all settings. We find the noise magnitude of $k = 2\epsilon$ to work well in most of our experiments, however, a more extensive hyperparameter tuning might improve our results further. \looseness=-1

Note that these results are contrary to previous intuitions: \citet{grad_align} suggested that the random step in RS-FGSM is not important per se, 
% but rather 
arguing that
its main role is 
% to reduce 
reducing
the $\ell_2$ norm of the perturbations, so that the loss remains to be approximately locally linear. In contrast, N-FGSM perturbations are larger on expectation than those of RS-FGSM, while they do not suffer from CO (refer~\cref{sec:l2_norm}). We believe that our findings will lead to a better understanding of the role of noise in avoiding CO in future work. Moreover, in \cref{sec:increased_perturb} we conduct extensive analyses to show that, despite N-FGSM obtains larger perturbations, clean accuracy does not degrade and other methods do not benefit from simply increasing the strength of their attacks.

% -- see~\cref{figure:ablation} (right).

%\bibir{\cite{tramer2018ensemble} first motivated the need to introduce a random step to increase the effectiveness of the attack against masked gradients. 

\textbf{Why does noise augmentation avoid CO?} \citet{grad_align} found that after CO, the gradients of the loss with respect to the input around clean samples became strongly misaligned, which is a sign of non-linearity. Moreover, \citet{AAAI} showed that the loss surface of models suffering from CO appears distorted, \ie there is a sharp peak in the loss surface along the FGSM direction, which seems to render FGSM ineffective (observe from~\cref{figure:grad_norm_delta_rank} how after after CO, visually, FGSM perturbations change drastically). In order to prevent CO, GradAlign explicitly regularizes the loss surface so it remains linear. To investigate further, we plot the loss surface at the end of training for different methods (see~\cref{figure:loss_surface} in~Appendix) and find that, while FGSM or RS-FGSM lead to a distorted loss, N-FGSM obtains a non-distorted loss surface similar to that obtained by GradAlign regularizer. Thus, it seems that adding strong noise-augmentations implicitly regularizes the loss landscape, leading to more effective single-step attacks. This aligns with previous work that theoretically link noise augmentations with a regularizer that encourages Lipschitzness \citep{bishop1995training}.

\begin{algorithm}[t]
	\caption{N-FGSM adversarial training} 
	\label{alg:n-fgsm}
	\begin{spacing}{1.3}

	\begin{algorithmic}[1]

	\STATE \textbf{Inputs:} epochs $T$, batches $M$, radius $\epsilon$, step-size $\alpha$ (default: $\epsilon$), noise magnitude $k$ (default: $2\epsilon$).
	\FOR {$t=1,\dots, T$}
	    \FOR{$i=1, \dots, M$}
    	   % \STATE{// \textit{Perform N-FGSM adversarial attack}}
    	    \STATE $\eta \sim \textrm{Uniform}[-k, k]^d$
    	    \STATE $x^i_\text{aug} = x^i + \eta$ // \textit{Augment sample with additive noise.}
    	    \STATE $x^i_\text{N-FGSM} = x^i_\text{aug} + \alpha \cdot \textrm{sign}\big(\nabla_{x^i_\text{aug}}\mathcal{L}(f_{\theta}(x^i_\text{aug}), y^i)\big)$ // \textit{N-FGSM augmented sample.}
    	    \STATE $\nabla_{\theta} = \nabla_{\theta} \mathcal{L}(f_{\theta}(x^i_\text{N-FGSM}), y^i)$ // \textit{Compute gradients of model's weights}
    		\STATE $\theta = \textrm{optimizer}(\theta, \nabla_{\theta})$ // \textit{Standard weight update, (\eg  SGD)}
        \ENDFOR
	\ENDFOR
	
	\end{algorithmic} 
	
	\end{spacing}

\end{algorithm}
\vspace{-3pt}

\section{Robustness Evaluations and Comparisons}
\vspace{-2pt}
\label{sec:experiments}
We compare N-FGSM against several
adversarial training methods, on a broad range of $\epsilon-l_\infty$ radii. Following \citet{RS-FGSM}, we evaluate adversarial robustness on CIFAR-10/100 \citep{cifar} and SVHN \citep{svhn}
with PGD-50-10 attacks, using both PreactResNet18 \citep{preact} and WideResNet28-10 \citep{zagoruyko2016wide}. Evaluations with AutoAttack\cite{croce2020reliable} are also in \cref{sec:autoattack}. %-- PGD \citep{PGD}.

%with 50 iterations and 10 restarts. %which denotes 50 PGD iterations with 10 random restarts. The same $\epsilon$ radius is used for both training and evaluation.
% which are standardized benchmarks within the adversarial robustness literature \citep{MISSING}}
\vspace{-2pt}
\subsection{Comparison against Single-Step Methods} \label{sec:single-step}
\vspace{-1pt}
We start by comparing N-FGSM against
other single-step methods.
Note that not all single-step methods are equally expensive, since they may involve more or less
computationally demanding operations. For instance, GradAlign uses a regularizer that is considerably expensive, while MultiGrad requires evaluating input gradients on multiple random points. For a comparison of training costs of different single-step methods, we refer the reader to~\cref{figure:splash} (right).
\begin{figure}
\centering
\begin{subfigure}[b]{.47\linewidth}
\includegraphics[width=\linewidth]{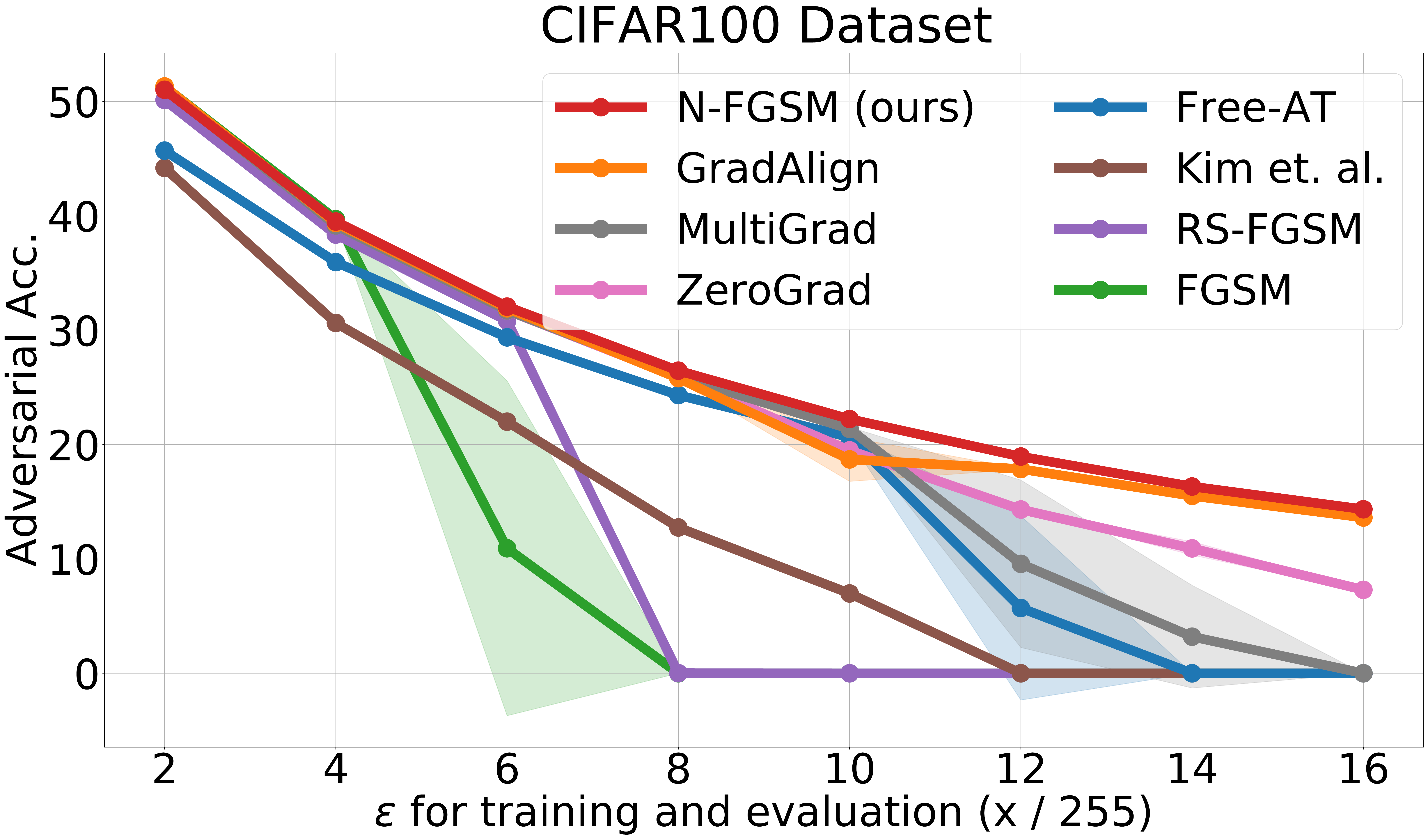}
\end{subfigure}
\hspace{10pt}
\begin{subfigure}[b]{.47\linewidth}
\includegraphics[width=\linewidth]{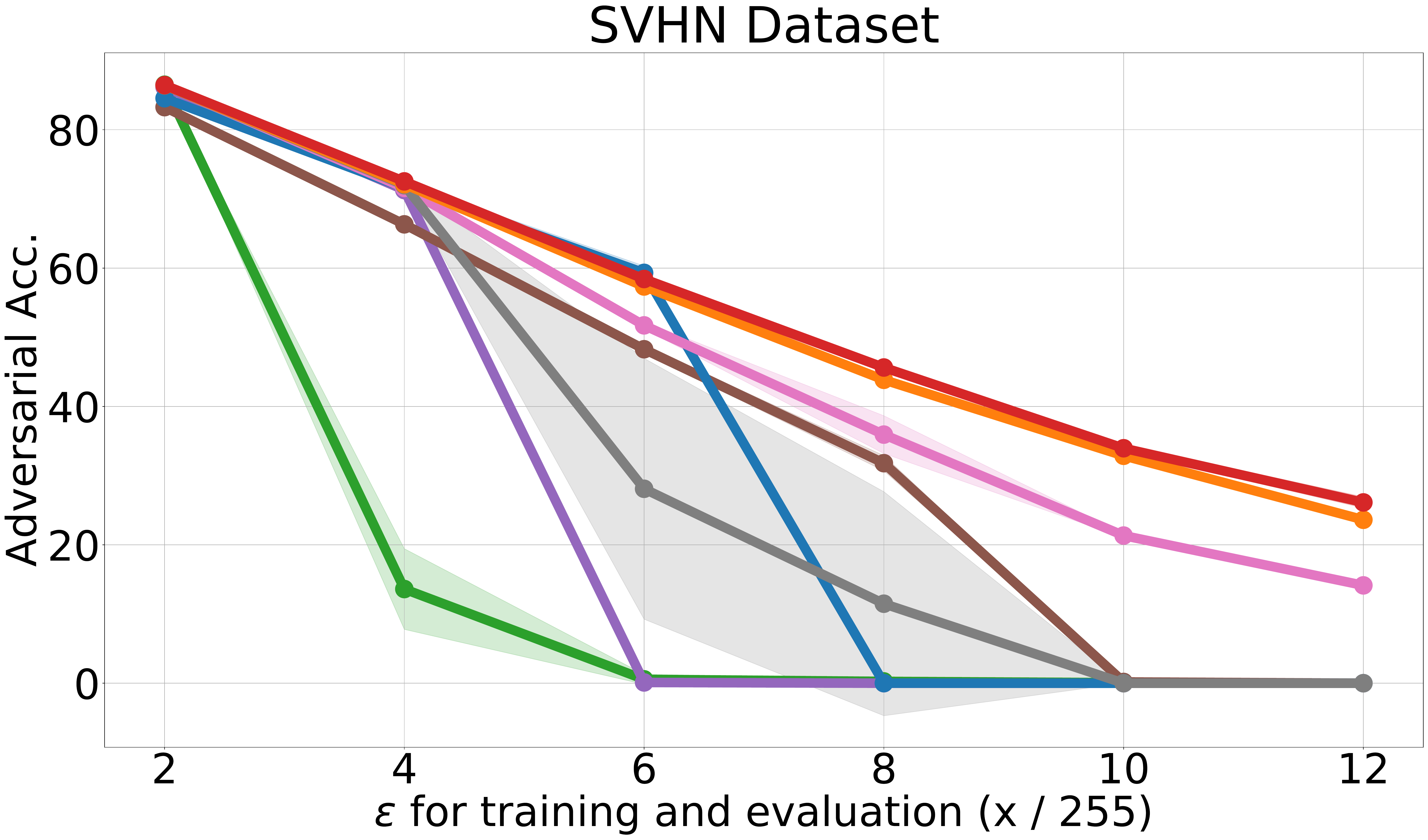}
\end{subfigure}
\caption{Comparison of single-step methods on CIFAR-100 (left) and SVHN (right) with PreactResNet18 over different perturbation radius ($\epsilon$ is divided by 255). Our method, N-FGSM, can match or surpass prior art results while \textit{reducing the cost by a $3\times$ factor}. Adversarial accuracy is based on PGD-50-10 and experiments are averaged over 3 seeds. Legend is shared among plots.}
\label{figure:comparison_single_step}
% \vspace{-5pt}
\end{figure} 
We use RS-FGSM and Free-AT with the settings recommended by \citet{RS-FGSM}. We apply GradAlign with hyperparameters reported in the official repository\footnote{\href{https://github.com/tml-epfl/understanding-fast-adv-training/}{https://github.com/tml-epfl/understanding-fast-adv-training/}}. ZeroGrad and \citet{AAAI} do not have a recommended set of hyperparameters; for a fair comparison we ablate them and select the ones with highest adversarial accuracy (for every $\epsilon$ and dataset). We train on CIFAR-10/100 for 30 epochs and on SVHN for 15 epochs with a cyclic learning rate. Only for Free-AT, we use 96 and 48 epochs for CIFAR-10/100 and SVHN, respectively, to obtain comparable results following \citet{RS-FGSM}. CIFAR-10 results are in~\cref{figure:splash} (middle), whereas CIFAR-100 and SVHN are in~\cref{figure:comparison_single_step}.

As observed in~\cref{figure:splash} and~\cref{figure:comparison_single_step}, FGSM and RS-FGSM suffer from CO for larger $\epsilon$ attacks on all reported datasets. For instance, RS-FGSM fails against attacks with $\epsilon = \nicefrac{8}{255}$ on CIFAR-10 and CIFAR-100 and against $\epsilon = \nicefrac{6}{255}$ on SVHN. With appropriate hyperparameters, ZeroGrad is able to consistently avoid CO. However, it obtains sub-par robustness compared to N-FGSM and GradAlign, especially against large $\epsilon$ attacks. Neither MultiGrad nor \citet{AAAI} avoid CO in all settings despite being more expensive. Free-AT also suffers from CO on all three datasets as also observed by \citet{grad_align}. In contrast, N-FGSM avoids CO on all datasets, achieving comparable or superior robustness to GradAlign \textit{while being 3 times faster.}

% To that end, we observe that N-FGSM among all single-step methods that are of the cost of FGSM that does not suffer from CO performing similarly to GradAlign on all three datasets while 

% As previously observed\todo{Bibi: those are the bulk/main experiments. We should highlight by refering to figures here and say that xyz methods fail at c while ours consistantly performans similarly to the expensive gradalign etc.}, FGSM and RS-FGSM both suffer from CO on larger $\epsilon$ attacks. In contrast, N-FGSM prevents CO and enjoys robustness properties comparable or superior to GradAlign for all $\epsilon$, \textit{while being 3 times faster.}
% With appropriate hyperparameters, ZeroGrad is able to avoid CO but obtains sub-obtimal robustness -- especially for large perturbations. Neither MultiGrad nor the method proposed by \citet{AAAI} 
% can avoid CO in all settings. We also observe that Free-AT cannot overcome CO as also observed by \citet{grad_align}.

\subsection{Comparison against Multi-Step Attacks} \label{sec:multi-step}
In~\cref{sec:single-step}, we compared the performance of single-step methods and observed that N-FGSM is able to match or surpass the state-of-the-art method, \ie GradAlign, while reducing the computational cost by a factor of \(3\). In this section, we compare the performance of N-FGSM against multi-step attacks. In particular, we compare against PGD-2 with $\alpha = \nicefrac{\epsilon}{2}$ and PGD-10 with $\alpha = \nicefrac{2}{255}$, keeping the same training settings as described in~\cref{sec:single-step}. PGD-x denotes x iterations and no restarts.

In~\cref{figure:multi-step}, we observe that PGD-2, despite being a multi-step method, still suffers from CO for larger $\epsilon$ as opposed to our proposed N-FGSM. On the other hand, despite achieving comparable clean accuracies, there is a gap in adversarial accuracies between PGD-10, and other single-step methods that grows with perturbation size. This can be partially expected since the search space for adversaries grows exponentially with $\epsilon$; and PGD, with more iterations, can explore it more thoroughly.
%This gap between N-FGSM and PGD-10 is far more graceful with larger $\epsilon$. Despite that PGD-10 remains to be superior to N-FGSM;
Nevertheless, \textit{computing a PGD-10 attack is $10\times$ more expensive to N-FGSM}. An important direction for future work would be addressing this gap and analysing, both theoretically and empirically, whether single-step methods can match the performance of their multi-step counterparts.

\begin {figure}
\centering
\begin{subfigure}[b]{.47\linewidth}
\includegraphics[width=\linewidth]{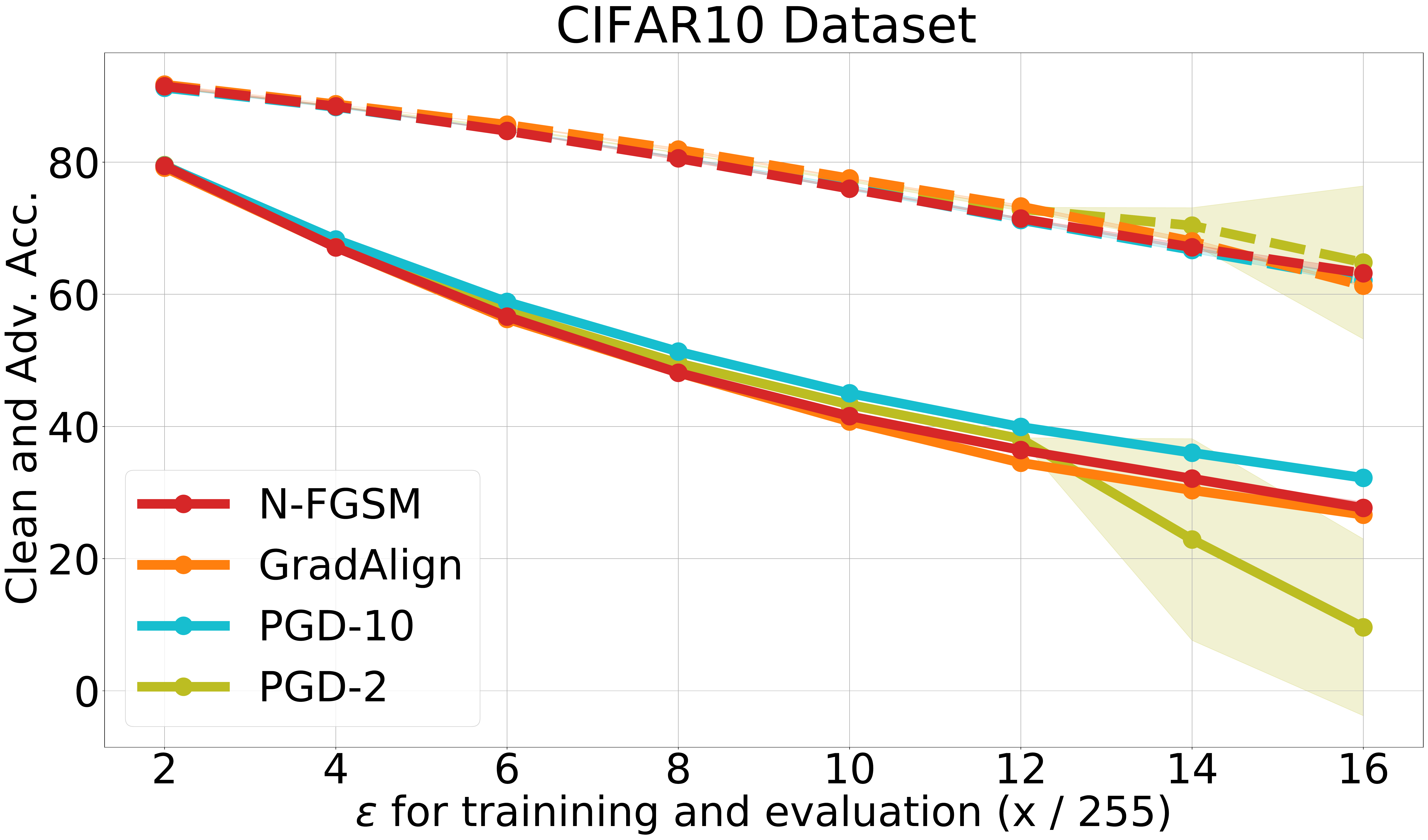}
\end{subfigure}
\hspace{10pt}
\begin{subfigure}[b]{.47\linewidth}
\includegraphics[width=\linewidth]{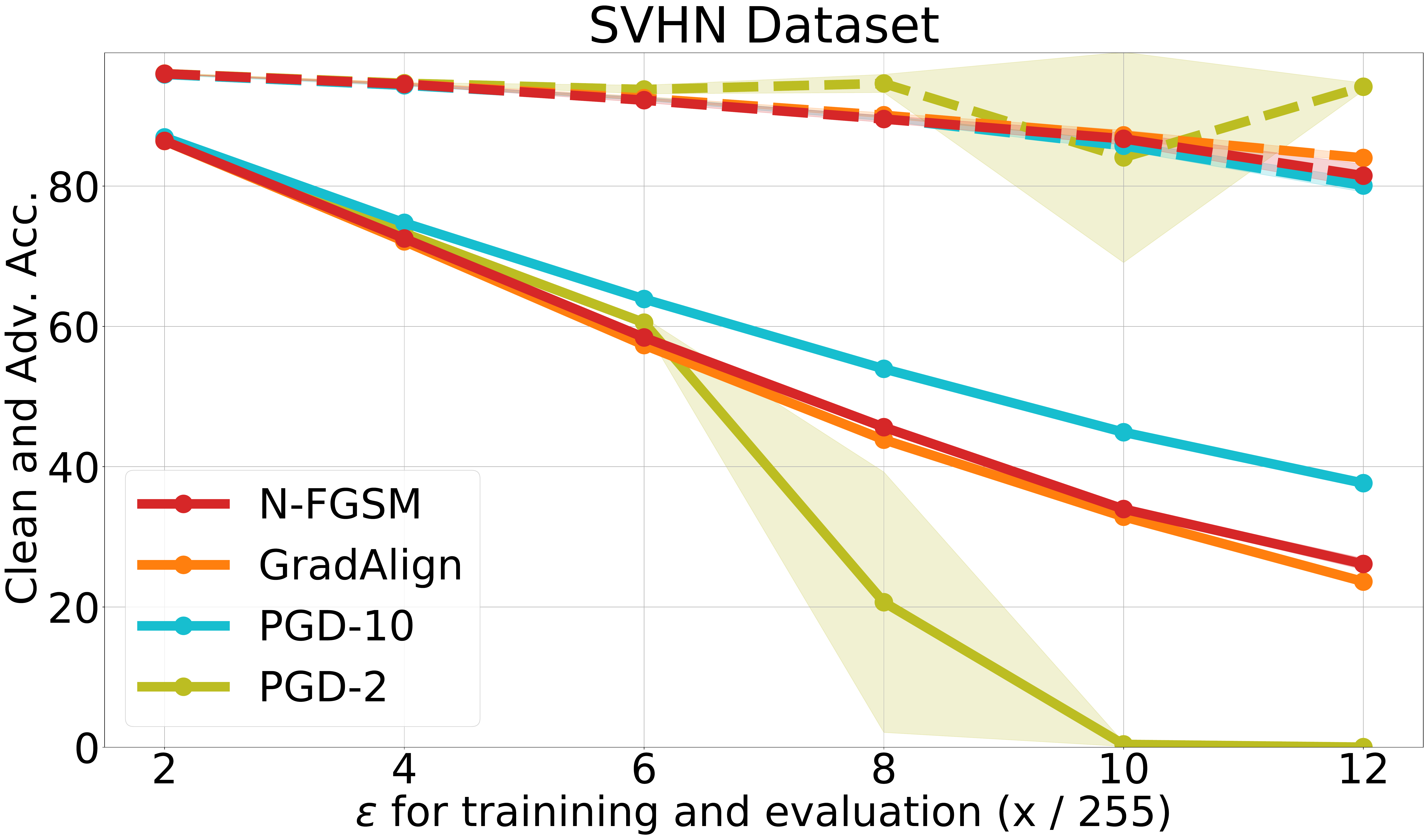}
\end{subfigure}
\caption{Comparison of N-FGSM and GradAlign with multi-step methods on CIFAR-10 (Left) and SVHN (Right) with PreactResNet18 over different perturbation radii ($\epsilon$ is divided by 255). Despite all methods achieving comparable clean accuracy (dashed lines), there is a gap in robust accuracy between PGD-10 and single-step methods. However, note that PGD-10 is $10\times$ more expensive than N-FGSM. Adversarial accuracy is based on PGD-50-10 and experiments are averaged over 3 seeds.}
\label{figure:multi-step}
\vspace{-5pt}
\end {figure} 

\subsection{Analysis of Gradients and Adversarial Perturbations} \label{sec:viz_perturbations} To gain further insights into CO, we visually explore the perturbations generated with FGSM, RS-FGSM, N-FGSM, and PGD-10 attacks. We show that N-FGSM generates perturbations that exhibit behavior similar to PGD-10. In particular, for a given test sample, we average the adversarial perturbations ($\delta$) and gradients across several epochs at the beginning of training~(Epoch 2 to 8) and at the end~(Epoch 24 to 30) and visualise them in \cref{figure:grad_norm_delta_rank} (see also~\cref{figure:deltas_grads_viz_extended} in Appendix for more examples). 
We observe that, during early stages in training all, methods generate consistent and interpretable $\delta$. However, after CO, FGSM and RS-FGSM generate $\delta$ that are harder to interpret, similarly to their gradients. On the other hand, we observe that N-FGSM provides consistent and interpretable $\delta$ throughout training, similar to those generated by PGD-10. This provides further evidence that N-FGSM enjoys similar properties to the more expensive PGD-10 training.

\cref{figure:grad_norm_delta_rank} analyzes the gradients and $\delta$ throughout the test set. Aside from loosing interpretability, post-CO the gradient norm increases by several orders of magnitude for FGSM and RS-FGSM while it remains low for N-FGSM and PGD-10. %These findings are consistent with gradient obfuscation \todo{AS: I am not sure it is right to call this gradient obs.}\citep{tramer2018ensemble} and loss distortion \citep{AAAI}. 
We also compute the effective rank\footnote{We compute effective rank as the number of singular vectors required to explain \(90\%\) of the variance.} of $\delta$ for each example before and after CO to measure the consistency of $\delta$ before and after CO. We consider three training intervals, (Epoch 2 to 8): before CO for all methods; (Epoch 16 to 22): after FGSM suffers CO but not RS-FGSM; (Epoch 24 to 30): after both FGSM and RS-FGSM suffer 
CO. Prior to CO, PGD-10 has a larger effective rank (\ie the perturbations span a larger subspace) than FGSM and RS-FGSM. N-FGSM has the highest effective rank, arguably due to the higher noise magnitude. Note that RS-FGSM, which has a smaller noise magnitude and clipping, also has a larger effective rank than FGSM, however, the difference is much lower. When either FGSM or RS-FGSM suffer from 
CO, the effective rank of their $\delta$ increases significantly above that of PGD-10 and N-FGSM. This would suggest that $\delta$ loose consistency after CO and is aligned with our visualizations in \cref{figure:grad_norm_delta_rank}. All of these show properties of $\delta$ and gradients that are consistent across methods (N-FGSM and PGD) that avoid CO and different from methods like RS-FGSM and FGSM, which suffer from CO.
 
% \begin{figure}
% \centering
% % \begin{subfigure}[b]{0.99\linewidth}
% \includegraphics[width=\linewidth]{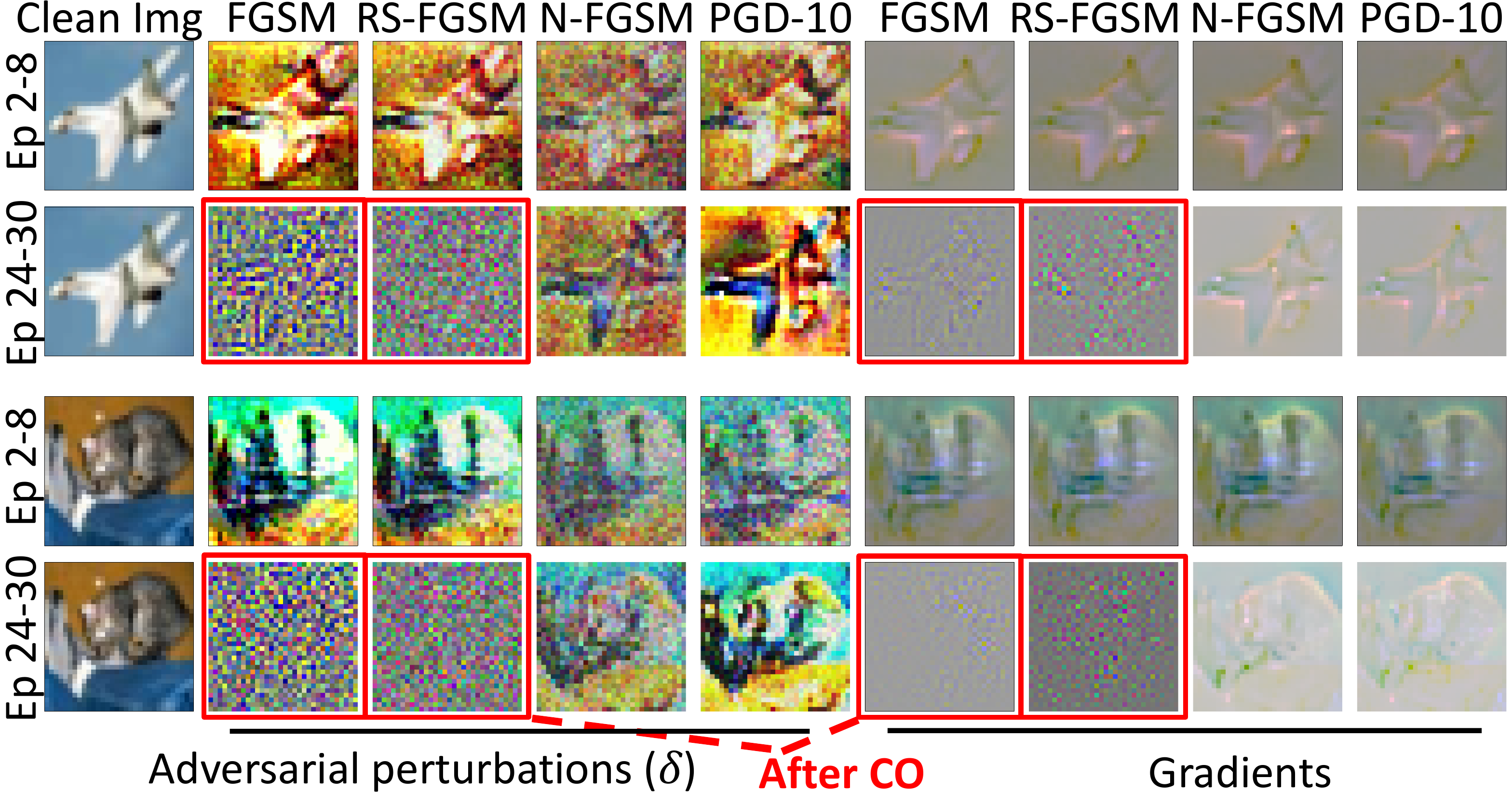}
% % \end{subfigure}
% \caption{Visualization of $\delta$ and gradients averaged across several epochs at the beginning (top) and end (bottom) of training. After CO, FGSM and RS-FGSM obtain $\delta$ that are hard to interpret, similarly for their gradients.
% }
% \label{figure:deltas_grads_viz}
% \vspace{-5pt}
% \end{figure}

% \section{The Interplay Between the Noise and the Adversary Directions}\label{sec:nfgsm_ablation}
% \pau{I think the title is confusing, maybe: Is noise necessary to avoid CO?}
% \section{Increasing noise vs step size in N-FGSM}\label{sec:nfgsm_ablation}
\section{Increasing Adversarial Perturbations} \label{sec:increased_perturb}

\looseness=-1
In~\cref{sec:analyses-noise}, we observed that removing clipping and increasing the noise magnitude were both necessary for the improved performance of N-FGSM. However, as discussed in 
% Theorem~\ref{theorem:l2_norm} 
\cref{theorem:l2_norm} 
this will result in an increase of the squared norm of the training  perturbation \(\delta_{\mathrm{N-FGSM}}\) as compared to FGSM. In this section, we perform further ablations to corroborate that it is indeed the increase in noise magnitude
% , rather than merely increasing the magnitude of perturbations, 
-- and not the mere increase of the perturbation's magnitude --
that helps to stabilize N-FGSM.

\begin{figure}[t]
\centering
\begin{subfigure}[b]{0.25\linewidth}
\includegraphics[width=\linewidth]{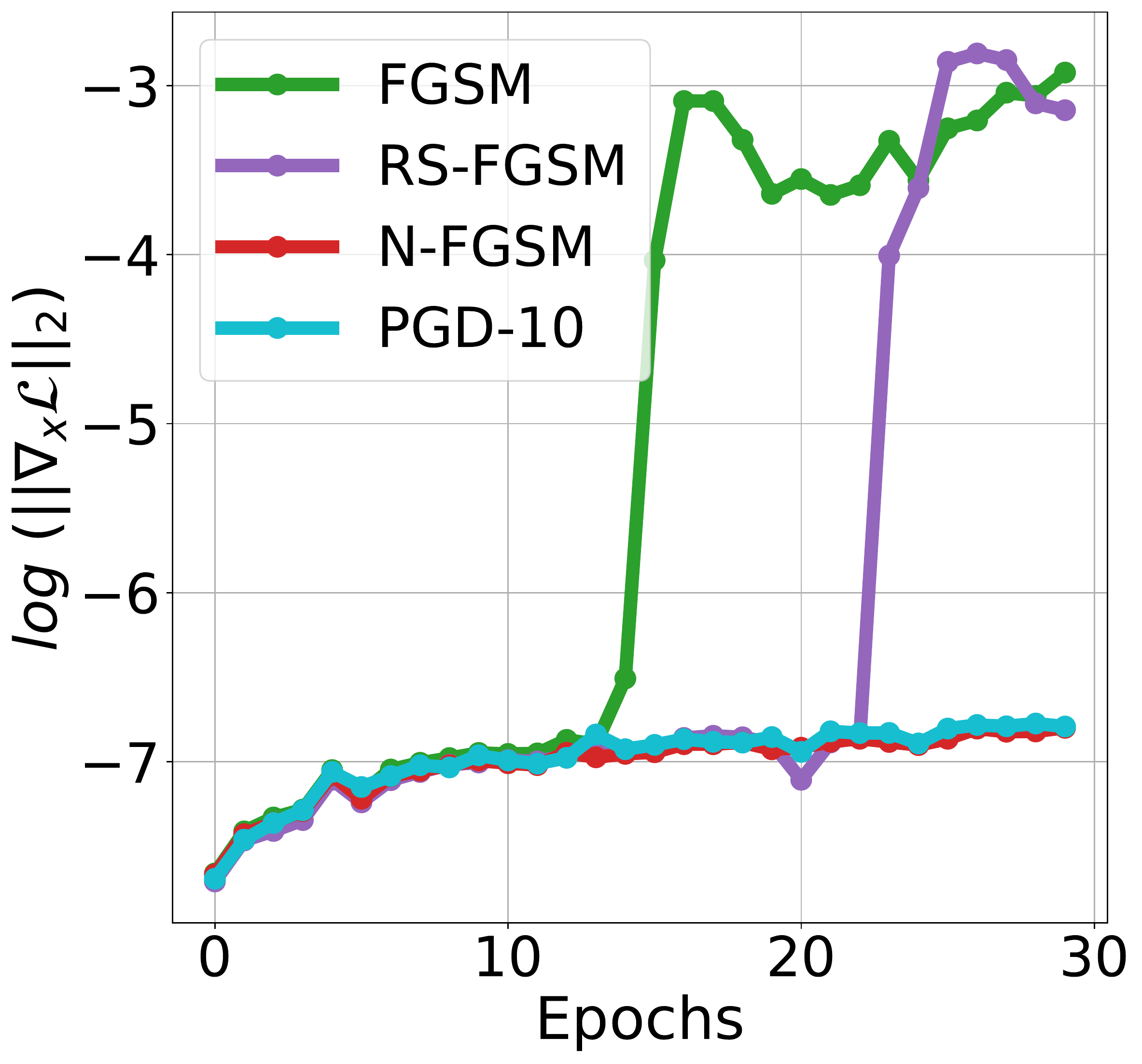}
\end{subfigure}
\hspace{1pt}
\begin{subfigure}[b]{0.25\linewidth}
\includegraphics[width=\linewidth]{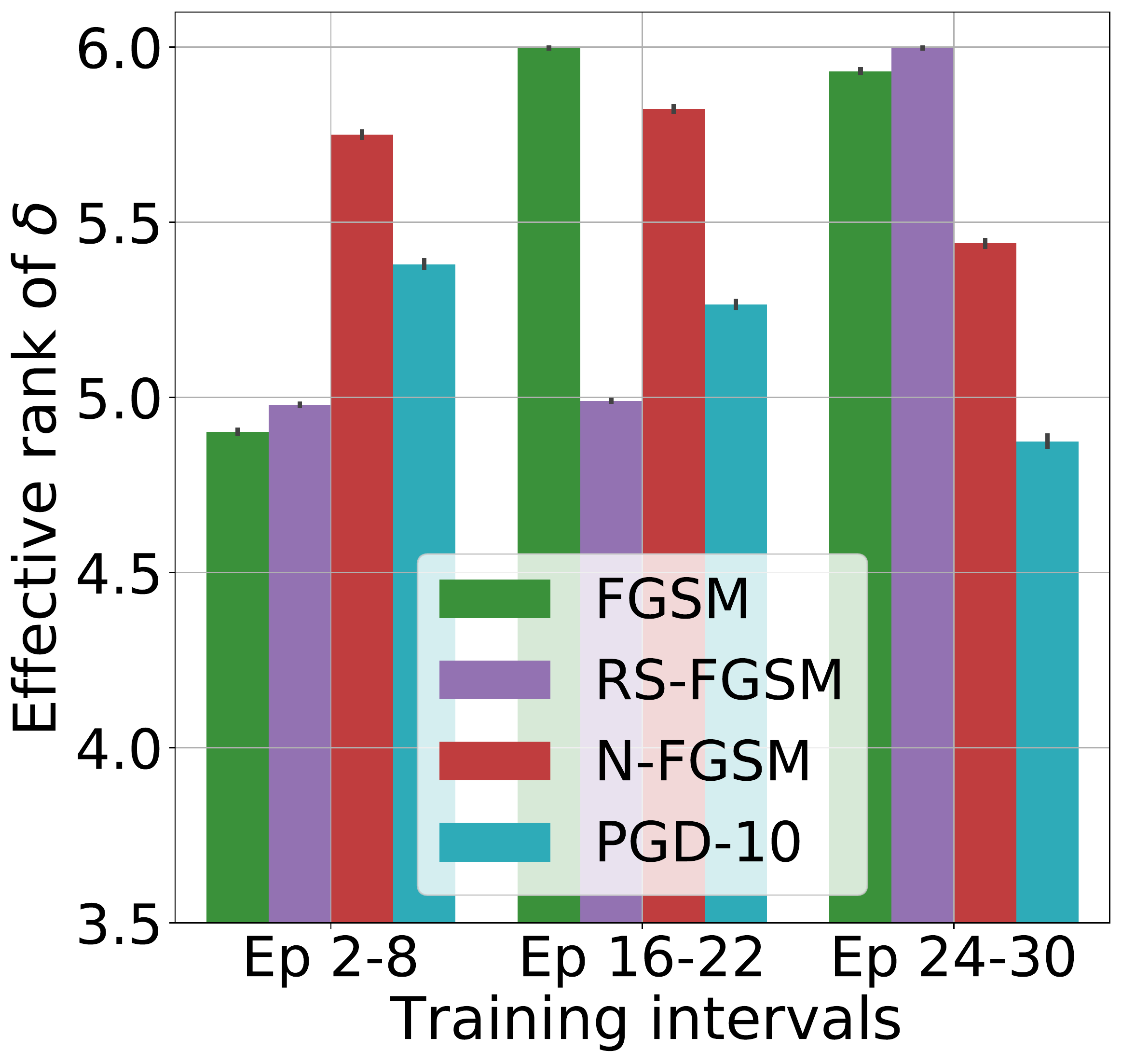}
\end{subfigure}
\hspace{1pt}
\begin{subfigure}[b]{0.45\linewidth}
\includegraphics[width=\linewidth]{Figures/Viz_deltas.pdf}
\end{subfigure}
% \begin{subfigure}[b]{0.32\linewidth}
% \includegraphics[width=\linewidth]{}
% \end{subfigure}
\caption{\textbf{Left:} Mean $\ell_2$ norm of per-sample-gradients across all test set samples. After CO, both FGSM and RS-FGSM gradients increase by several orders of magnitude. \textbf{Middle:} Effective rank of perturbations ($\delta$) across three training intervals \textit{Ep 2-8} before CO for all methods; \textit{Ep 16-22}: after FGSM presents CO but not RS-FGSM; \textit{Ep 24-30}: after both FGSM and RS-FGSM had CO. \textbf{Right:} Visualization of $\delta$ and gradients averaged across several epochs at the beginning (top) and end (bottom) of training. After CO, FGSM and RS-FGSM obtain $\delta$ and gradients that are hard to interpret.\looseness=-1}
\label{figure:grad_norm_delta_rank}
\vspace{-10pt}
\end{figure}

% follows, we show with Table 

% However, this may lead to an increase in the magnitude of the perturbations. In~\cref{table:nfgsm_ablation}, we study more closely the interplay between step-size $\alpha$ and noise magnitude $k$ to corroborate that it is indeed the increase in noise magnitude, rather than merely increasing the magnitude of perturbations, what helps stabilize adversarial training. We emphasize the following observations:
%we report the clean and robust accuracy of N-FGSM under different combinations of noise magnitude $k$ and step size $\alpha$. Note that when the noise magnitude $ k = 0$ N-FGSM recovers plain FGSM, thus, as we increase $\alpha$ it is equivalent to using FGSM with an increased $\epsilon_{\textrm{train}}$. On the other hand, when $\alpha=0$, it is equivalent to training with only random noise augmentation. Based on the results reported in~\cref{table:nfgsm_ablation}, we make the following observations:

\looseness=-1
\textbf{Increasing $\alpha$ alone is not sufficient.} 
% As discussed earlier, removing the clipping and increasing the noise magnitude, effectively, increases the squared norm of the training  perturbation \(\delta_{\mathrm{N-FGSM}}\) as compared to FGSM and RS-FGSM. This is discussed in Theorem \ref{theorem:l2_norm} along with the empirical estimation of the expected $\ell_2$ norm of \(\delta_{\mathrm{N-FGSM}}\)  in~\cref{theorem:l2_norm}.
N-FGSM combines a noise perturbation with an FGSM step. Thus, we can increase the perturbation magnitude by increasing any of the two. This begs the question: Is it sufficient to increase the N-FGSM step-size $\alpha$ to avoid CO without adding any noise? We observe in~\cref{figure:exp2}~(A) that training without noise
% (this is 
(essentially, FGSM) leads to CO, with robust accuracy equal to zero, even for large values of $\alpha$.  This indicates that it is not an increase in the perturbation norm, but the combination with noise which plays an essential role in circumventing CO for N-FGSM.

% \textbf{Increasing the perturbation size is not enough to avoid CO}. As observed in the first column of~\cref{table:nfgsm_ablation}, training without noise leads to CO even for large values of $alpha$. Note that without noise \( k=0\) N-FGSM is equivalent to FGSM.

% \textbf{CO leads to a vulnerable model against smaller perturbations too}. For instance, looking at the experiments with $\alpha = 10$ or $\alpha = 12$ in the first column, we observe that the resultant models are vulnerable to adversarial attacks for $\epsilon_{\textrm{test}}=\nicefrac{8}{255}$ even though the perturbation radius used in training is larger than $\nicefrac{8}{255}$. This indicates that once a model catastrophically overfits to perturbations of a given radius, it can be vulnerable to smaller perturbations too.

% \bibi{We need to unify the name for $\alpha$? Is it adversary step-size, attacks step-size, etc?}

% \as{Maybe use Adversarial Step size for \(\alpha\)? Also removing \(k\) from the notation as figure does not use \(k\)}
% \pau{I think mostly I used FGSM step-size for $\alpha$, Also $k$ as noise magnitude is introduced when describing the method so I would keep it.}
% \as{Then maybe put \(k\) in the figures as this section is about explaining the figures?}
\textbf{Increasing $\alpha$ requires adjusting the noise magnitude.} As observed in~\cref{figure:exp2}~(A), increasing $\alpha$ for N-FGSM leads to CO if the noise magnitude is not large enough. For example, while a noise magnitude 
% of 
\(k=1\epsilon\) and an adversarial step size \(\alpha=1.25\epsilon\)  
% retains
yield
a robust accuracy of $49.68\%$, increasing $\alpha$ to \(1.5\epsilon\) while keeping the same noise magnitude results in CO -- with robust accuracy equal to zero. This further suggests that an increase in the adversarial step-size $\alpha$ requires a commensurate increase in the noise magnitude. We find that setting the noise magnitude $k=2\epsilon$ works well for most settings.

% \textbf{Increasing the level of noise is necessary to stabilize training for larger perturbations}. As we increase the step size $\alpha$ of the attack, we observe that we need to increase the ratio between noise $ k$ and step-size $\alpha$ in order to avoid CO. This again further suggests that it is indeed increasing the perturbation size by increasing the noise magnitude (and not simply increasing the perturbation budget) what mitigates CO.

\looseness=-1

\textbf{Larger noise perturbations preserve clean accuracy.} Increasing the norm of training perturbations by increasing $\alpha$ results in a drop in the clean accuracy (discussed later in \cref{sec:hyperparams}). 
%For example, in \cref{figure:exp2}, we observe that increasing $\alpha$ from 4 to 14 makes the clean accuracy drop from $\sim90\%$ to $\sim70\%$.
This has also been observed in prior works \citep{RS-FGSM}. However, we 
% observed
show
in~\cref{figure:multi-step} that the clean accuracy for N-FGSM is similar to that of GradAlign, despite the magnitude of the perturbations being larger. 
% This suggests that while the perturbation size is larger for N-FGSM, this is not necessarily equivalent to directly using adversaries with a similarly large adversarial perturbation size due to the difference in the bias of augmenting with noise and augmenting with an adversary.
We ablate the effects of adversarial and noise perturbations on the clean accuracy in~\cref{figure:exp2}~(B): we observe that augmenting training samples with noise alone (\ie $\alpha=0$) has a much milder effect on the clean accuracy than augmenting in an adversarial direction.
In general, increasing noise is more forgiving on the clean accuracy than increasing the adversarial step size.
This is not surprising, considering that moving in random directions along the input space has a significantly lower impact on the loss than moving along the FGSM direction (see~\cref{figure:loss_surface} in the Appendix) and that training with noise alone does not provide any significant robustness against larger attacks (for a more detailed ablation, see Appendix ~\cref{figure:noise_augmentation}).

\looseness=-1
\textbf{Other methods do not benefit from larger training $\epsilon$.} As previously mentioned, N-FGSM perturbations have $\ell_{\infty}-$norm larger than $\epsilon$. We have seen that the benefits of N-FGSM can not be reproduced by simply increasing $\alpha$ without increasing the noise. However, for the sake of completeness, we also ablate other single-step baselines by using a larger $\epsilon$ during training, while testing with a fixed $\epsilon=\nicefrac{8}{255}$ on CIFAR10. We observe that increasing $\epsilon_{\textrm{train}}$ seems to lead to a decrease in robustness for most methods;
% \eg 
for instance, PGD-50-10 accuracy for RS-FGSM 
% goes 
drops
from $46.08 \pm 0.18$ when training with $\epsilon=\nicefrac{8}{255}$ to $0.0 \pm 0.0$ with $\epsilon=\nicefrac{12}{255}$. In two cases (GradAlign and MultiGrad), we observe a small increase, with the highest increase being for GradAlign, which improves from $48.14 \pm 0.15$ to $50.6 \pm 0.45$; yet, the clean accuracy drops from $81.9 \pm 0.22$ to $73.29 \pm 0.23$. This is similar to increasing $\alpha$ for N-FGSM (see \cref{figure:exp2}~(C)). However, this is tied to a significant degradation of clean accuracy.
All in all, taking into account both clean and robust accuracy, we conclude that all single-step baselines 
suffer from either CO or a severe degradation in their clean accuracy when increasing the training $\epsilon$. Full results are presented in \cref{table:increased_epsilon} in Appendix.

\vspace{-5pt}

\section{Additional Ablations}

\looseness=-1
\textbf{Hyperparameter selection.} \label{sec:hyperparams} While FGSM relies on a fixed step-size (\ie $\alpha = \epsilon$), \citet{RS-FGSM} explored different values of $\alpha$ for RS-FGSM, finding that an increase of the step-size improves the adversarial accuracy -- up to a point where CO occurs.
We also ablate the value of $\alpha$ for N-FGSM in~\cref{figure:exp2}~(C). We find that by increasing the noise magnitude, N-FGSM can use larger $\alpha$ values than RS-FGSM, without suffering from CO. This leads to an increase in the adversarial accuracy at the expense of a decrease in the clean accuracy. In light of this trade-off, 
% and following FGSM, 
we also use $\alpha = \epsilon$ for N-FGSM. Regarding the noise hyperparameter $k$, we find that $k=2\epsilon$ works in all but one SVHN experiment ($\epsilon=12$, in which we set $k=3\epsilon$). In comparison, GradAlign regularizer hyperparameter or ZeroGrad quantile value need to be modified for every radius with a noticeable shift between CIFAR-10 and SVHN hyperparameters, suggesting they may require additional tuning when applied to novel datasets.

\textbf{Long vs fast training schedules.}
Throughout our experiments, we used the RS-FGSM training setting introduced in \cite{RS-FGSM}. However, \citet{overfitting} suggest that a longer training schedule coupled with early stopping may lead to a boost in performance. \citet{AAAI} and \citet{towards} report that longer training schedules increase the chances of CO for RS-FGSM and that this limits its performance. We test the longer training schedule with N-FGSM and find that it does not suffer from CO. However, it does suffer from \textit{robust overfitting}, \ie adversarial accuracy on the training set is larger than on the test set as described in \cite{overfitting} for PGD-10. Notice the difference between the robust accuracy of the final and best models in~\cref{figure:exp2}~(D). Interestingly, although we observe a slight increase in performance when using the long training schedule, we find the fast training schedule to be remarkably competitive. 
% We provide 
See
more results in \cref{sec:long_schedule}, including a comparison to GradAlign.
%a robust accuracy of 47.86 $\pm$ 0.1 for GradAlign with the longer schedule compared to 48.48 $\pm$ 0.27 for N-FGSM. Note that, in order to prevent GradAlign from suffering from CO for $\epsilon=8$, we had to increase the regularizer hyperparameter (compared to the fast schedule), while N-FGSM is able to prevent CO with the same settings.

\begin {figure}
\begin{minipage}[t]{.23\linewidth}
\centering
\includegraphics[width=\linewidth]{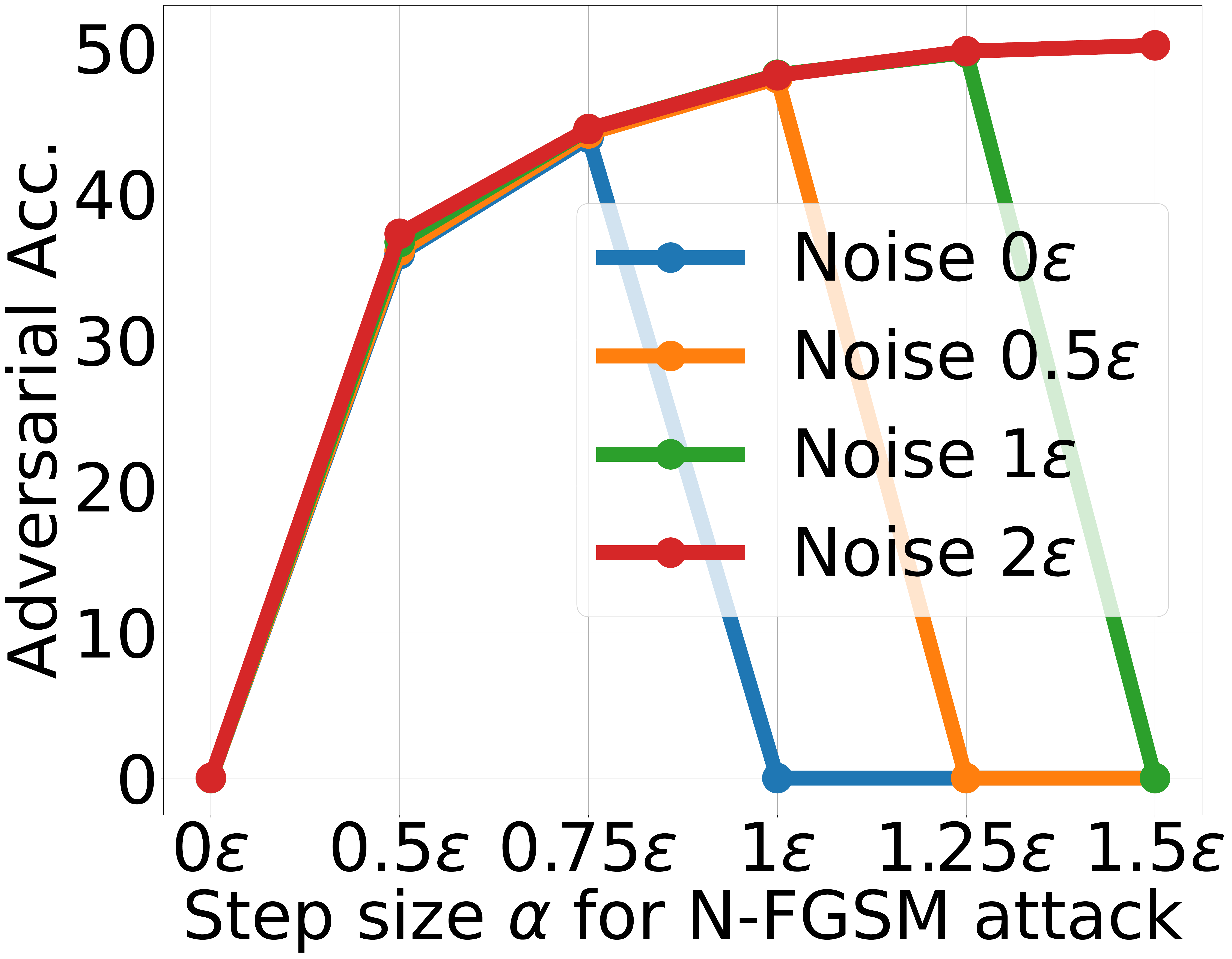}
\end{minipage}
% \hspace{1pt}
\begin{minipage}[t]{.23\linewidth}
\centering
\includegraphics[width=\linewidth]{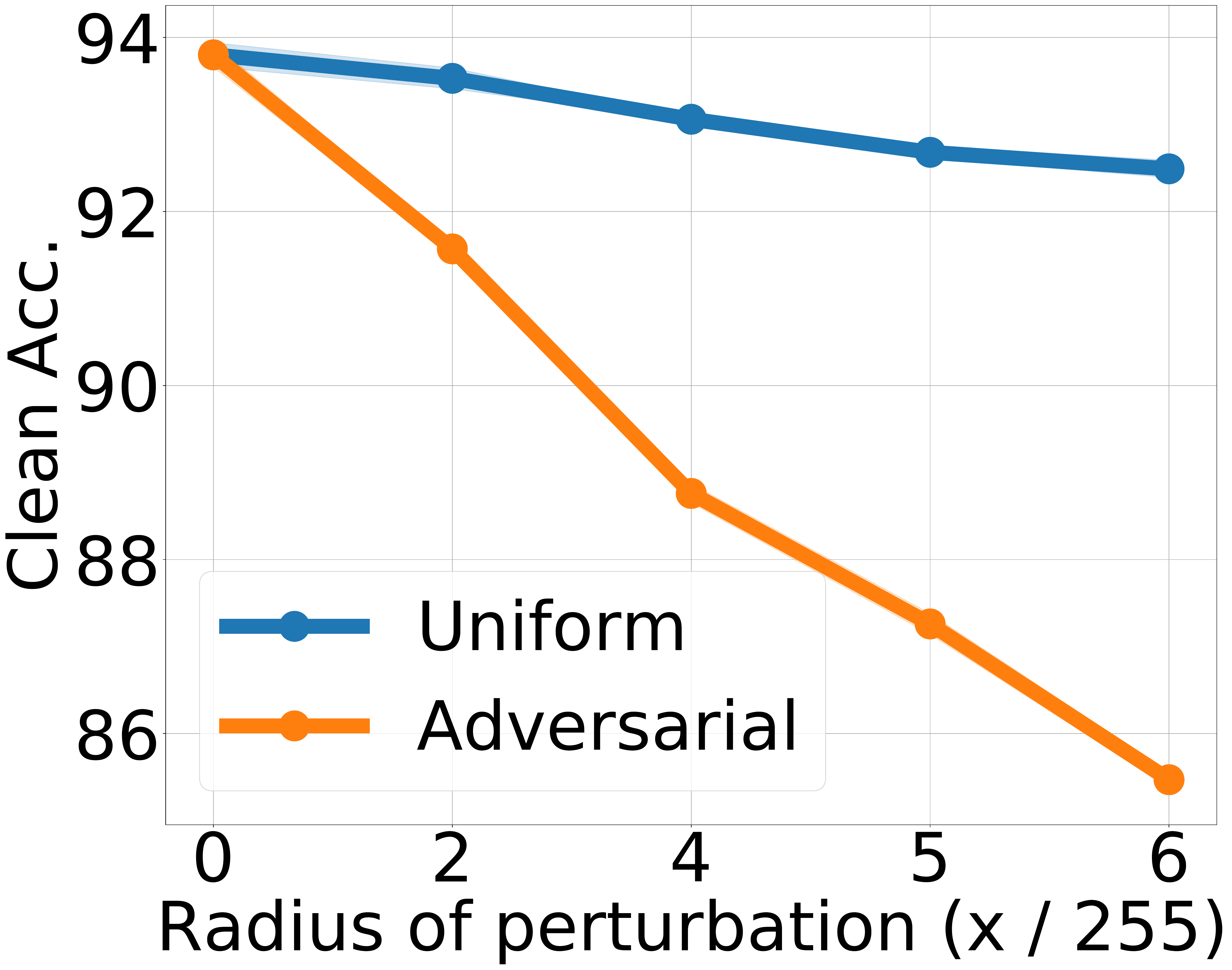}
\end{minipage}
% \hspace{1pt}
\begin{minipage}[t]{.225\linewidth}
\centering
\includegraphics[width=\linewidth]{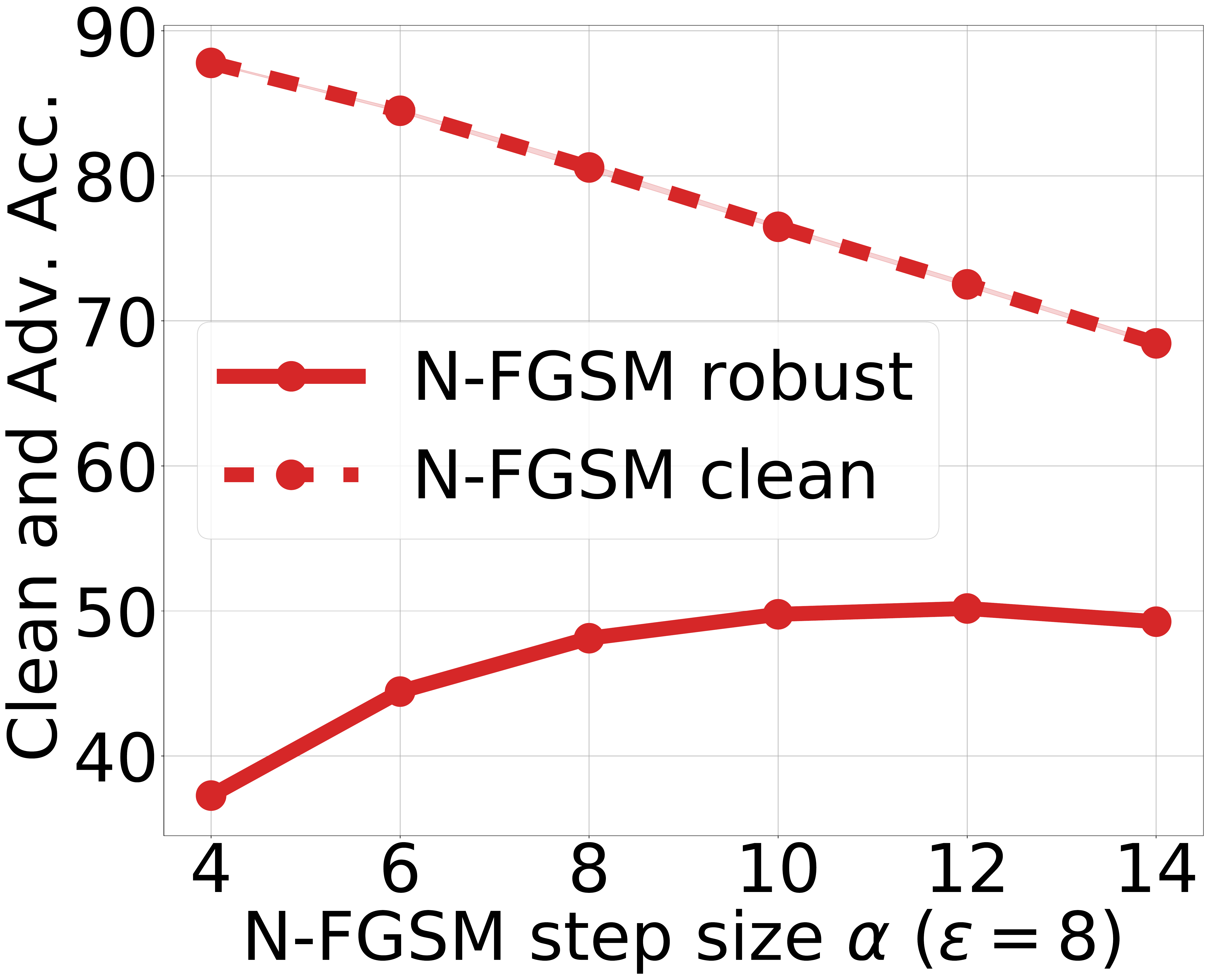}
\end{minipage}
% \hspace{1pt}
\begin{minipage}[b]{.225\linewidth}
\renewcommand{\arraystretch}{1.2}
\scriptsize{
\begin{tabular}{c c}
    \multicolumn{2}{c}{\textbf{Comparison of Training Schedules}}\\
    % \multicolumn{2}{c}{}\\
    Clean Acc & Robust Acc \\
    \hline
    \rowcolor{gray!20}
    \multicolumn{2}{c}{Long schedule: Final model}\\
     \textbf{83.18 $\pm$ 0.11} & 36.56 $\pm$ 0.26 \\
    \rowcolor{gray!20}
    \multicolumn{2}{c}{Long schedule: Best model}\\
     80.8 $\pm$ 0.36 & \textbf{48.48 $\pm$ 0.27}\\
    \rowcolor{gray!20}
     \multicolumn{2}{c}{Fast schedule: Final model}\\
     80.58 $\pm$ 0.22 & 48.12 $\pm$ 0.07 \\
\end{tabular}
}
\renewcommand{\arraystretch}{1.2}
\end{minipage}
\caption{Different ablations on N-FGSM parameters and training schedule. From left to right: \textbf{A:}~Adversarial accuracy when varying step-size $\alpha$ and noise magnitude $k$ ($\epsilon=8$). %We observe that simply
Increasing $\alpha$ does not suffice to prevent CO, we must also increase the noise magnitude.
\textbf{B:}~Clean accuracy after training with random or adversarial perturbations. With comparable radius, random perturbations have a much milder effect than adversarial. \textbf{C:}~Ablation of step size $\alpha$ in N-FGSM $\epsilon=8, k=2\epsilon$. As we increase the magnitude of the FGSM perturbation we observe an increase in robustness coupled with a drop on the clean accuracy. \textbf{D}:~Comparison of the ``fast'' training schedule from \cite{RS-FGSM} and ``long" training schedule described in \cite{overfitting}. N-FGSM shows robust oberfitting but not CO with the long schedule. Adversarial accuracy is based on PGD-50-10 and experiments are averaged over 3 seeds. \looseness=-1}%\vspace{-10pt}
\label{figure:exp2}
\vspace{-5pt}
\end {figure} 

\textbf{Experiments with WideResNet28-10.} \label{sec:wideresnet}
We also compare the performance of all methods on WideResNet28-10 \citep{zagoruyko2016wide} architecture in \cref{figure:wideresnet_single_step} and \cref{figure:wideresnet_all} in Appendix. As in the experiments with PreActResNet18, N-FGSM obtains the best performance/cost trade-off. 
%Nevertheless, as a general trend, we observe that CO seems to be more difficult to prevent when using WideResNet. For instance, FGSM is able to consistently yield robust models up to $\epsilon=6$ for PreActResNet18 on CIFAR-10, however, 
% for WideResNet models we already find some seeds have CO at this radius. 
%for some runs the same radius can lead to CO for WideResNet models.
%We hypothesize this is because WideResNet28-10 is more over-parametrized with 36.5 M parameters as opposed to 11.2M parameters in PreActResNet18.
We had to increase the regularizer hyperparameter for GradAlign (compared to the settings for PreActResNet18) in order to prevent CO on CIFAR-100 and, to our surprise, \textit{we could not find a competitive hyperparameter setting for GradAlign on the SVHN dataset} for $\epsilon \geq 6$. We tried both increasing the regularizer hyperparameter and decreasing the step size $\alpha$, but some or all runs led to models close to a constant classifier for each setting. We do not claim that GradAlign will not work, but finding a good configuration might require further tuning. The default configuration for N-FGSM ($\alpha=\epsilon,\ k=2\epsilon$) works well in all settings except for $\epsilon=16$ on CIFAR-10 and $\epsilon = 10,\ 12$ on SVHN. For CIFAR-10, we increase the noise magnitude to $k = 4\epsilon$. For SVHN, we find that decreasing $\alpha$ 
% as we tried for GradAlign 
works better than increasing the noise. 
% However, in 
In both cases, N-FGSM yields
% more than a trivial 
nontrivial
adversarial accuracy.

\textbf{Experiments on Imagenet.} \label{sec:imagenet}
We present results on the Imagenet dataset \cite{krizhevsky2012imagenet} in~\cref{tab:imagenet}. Due to the high computational demands of Imagenet training and testing we focus on the main baselines of comparable cost to FGSM. Namely FGSM, RS-FGSM and N-FGSM.  We observe that FGSM presents CO for $\epsilon=\nicefrac{6}{255}$ while neither RS-FGSM nor N-FGSM present CO. However, N-FGSM has better robustness. For instance, at $\epsilon=\nicefrac{6}{255}$ N-FGSM obtains PGD50-10 accuracy of 17.12\% while RS-FGSM yields 16.5\% and FGSM 0.08\% (due to CO). Thus, N-FGSM also avoids CO in ImageNet, improving robustness over same-cost baselines. For experimental details refer to \cref{sec:imagenet_details}. 
 \begin{table}
\caption{Clean accuracy (top) and PGD50-10 accuracy (bottom) of N-FGSM and other same-cost baselines on Imagenet dataset. We observe that FGSM presents CO for $\epsilon=\nicefrac{6}{255}$ while both RS-FGSM and N-FGSM avoid CO. N-FGSM has consistently better robustness than baselines.}
\label{tab:imagenet}
\vspace{2pt}
\renewcommand{\arraystretch}{1.5}
\centering
\small{
\begin{tabular}{c|c|c|c}
\toprule
     & 
     $\epsilon=\nicefrac{2}{255}$ &
     $\epsilon=\nicefrac{4}{255}$ &
     $\epsilon=\nicefrac{6}{255}$                           
     \\ \hline \hline
FGSM & 
     \begin{tabular}[c]{@{}c@{}}54.72\\ 38.21\end{tabular} & % Missing number
     \begin{tabular}[c]{@{}c@{}}48.50\\ 25.86\end{tabular} & \begin{tabular}[c]{@{}c@{}}48.55\\ 0.08\end{tabular} \\ \hline
RS-FGSM &
  \begin{tabular}[c]{@{}c@{}}56.29\\ 36.86\end{tabular} &
  \begin{tabular}[c]{@{}c@{}}50.81\\ 25.12\end{tabular} &
  \begin{tabular}[c]{@{}c@{}}47.67\\ 16.49\end{tabular} \\ \hline
N-FGSM &
  \begin{tabular}[c]{@{}c@{}}54.39\\ 38.07\end{tabular} &
  \begin{tabular}[c]{@{}c@{}}47.56\\ 26.28\end{tabular} &
  \begin{tabular}[c]{@{}c@{}}47.70
\\ 17.12\end{tabular} \\ \hline
\end{tabular}
}
\end{table}

\textbf{Combining N-FGSM with additional regularizers.} \label{sec:gat}
Recent works \citep{GAT, NuAT} have been proposed to improve performance of single step methods at moderate perturbation radii $\epsilon = \nicefrac{8}{255}$. However, we observe that with the default settings (which use a version of RS-FGSM with Bernoulli noise) they lead to CO for larger $\epsilon$. Then we compare them with N-FGSM + Regularizer where we apply their proposed regularizers to N-FGSM. If we apply GAT or NuAT regularizers to N-FGSM then we do not observe CO and usually a boost in performance. For instance, at $\epsilon=\nicefrac{10}{255}$, GAT has a robust acc (with PGD50-10) of 43.34 $\pm$ 0.23 while N-FGSM+GAT regularizer obtains 44.97 $\pm$ 0.07, in comparison plain N-FGSM has 41.56 $\pm$ 0.16. This is extremely compelling as it suggests N-FGSM can be combined with other regularizers designed to improve FGSM performance and mutually benefit each other. Full results of the comparison are presented in \cref{tab:gat_nuat} in \cref{sec:app_gat}

% \begin {figure*}
% \centering
% \begin{subfigure}[b]{.45\linewidth}
% \includegraphics[width=\linewidth]{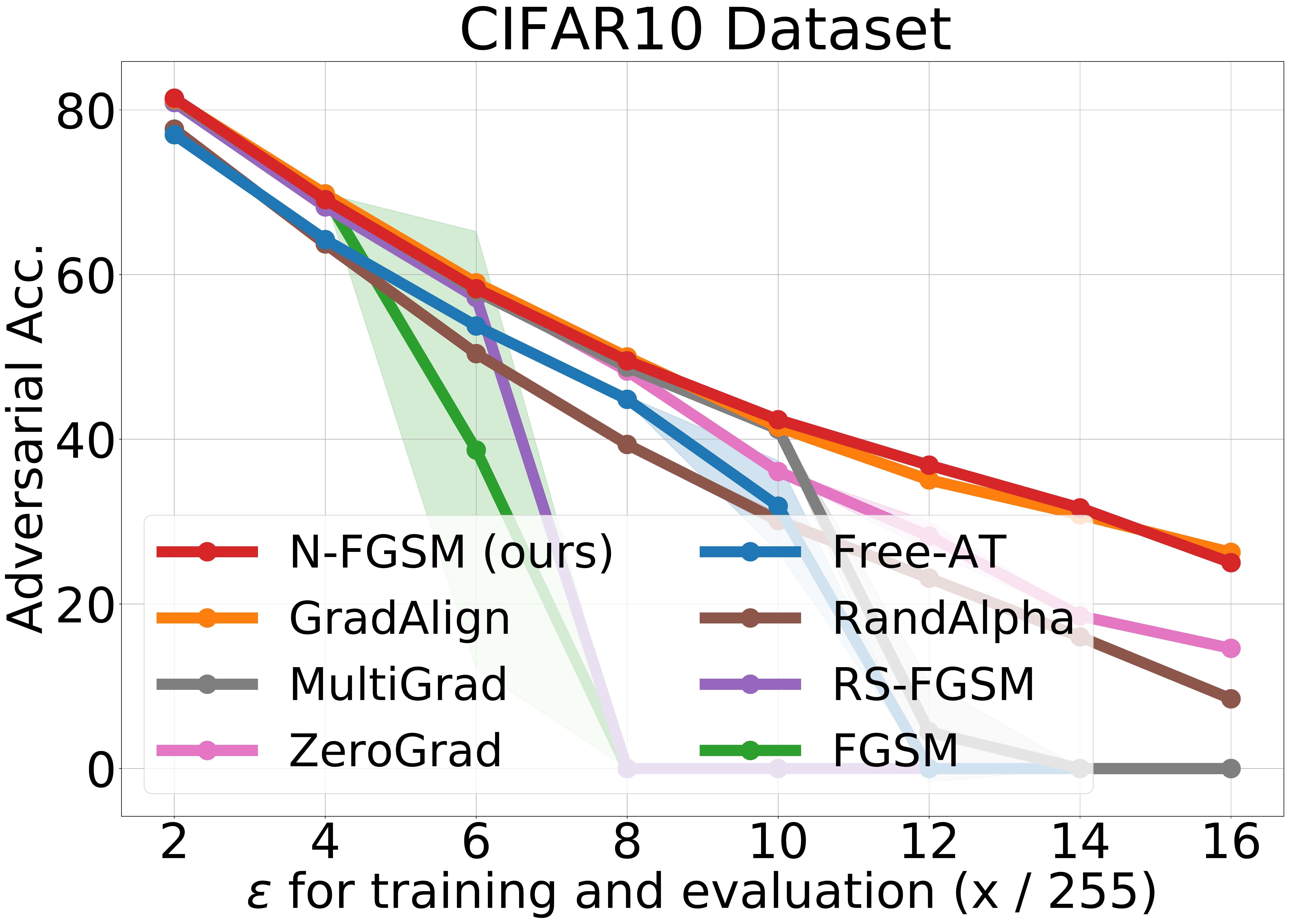}
% \end{subfigure}
% \hspace{5pt}
% \begin{subfigure}[b]{.45\linewidth}
% \includegraphics[width=\linewidth]{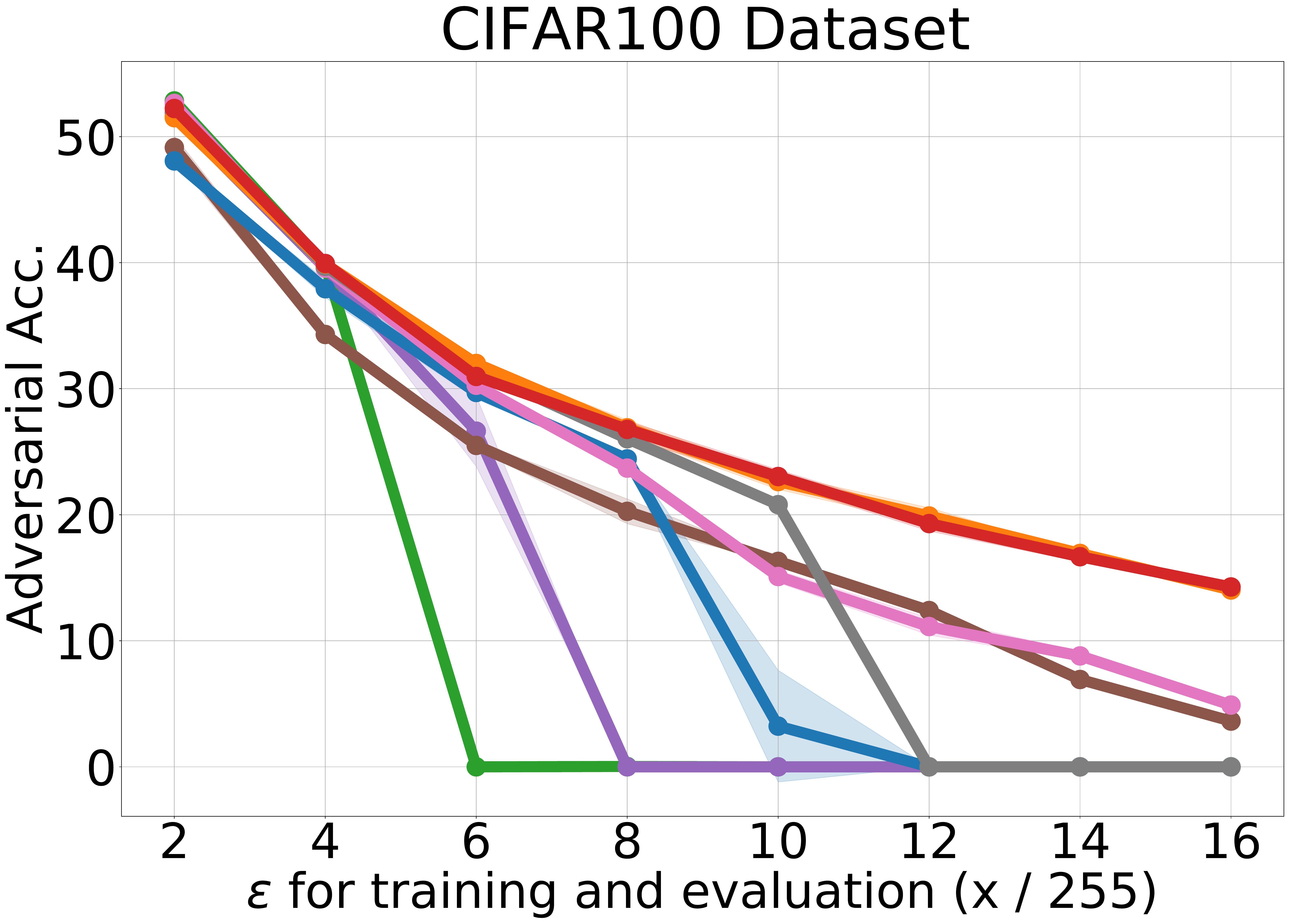}
% \end{subfigure}
% \caption{Comparison of single-step methods on CIFAR-10 (Left) and CIFAR-100 (Right) with WideResNet28-10 over different perturbation radius $\epsilon$. As we observe with PreActResNet18 architecture, N-FGSM obtains state-of-the-art PGD-50-10 accuracy among single-step methods. Notice that CO appears overall more difficult to prevent when using WideResNet28-10. 
% % For instance, we had to increase the regularizer hyperparameter for GradAlign (compared to the settings for PreActResNet18) in order to prevent CO in CIFAR-100. 
% Plots share the same legend.}
% \label{figure:wideres}
% \end {figure*} 

\section{Conclusion}
In this work, we explore the role of noise and clipping in single-step adversarial training. Contrary to previous intuitions, we show 
that increasing the noise magnitude and removing the $\epsilon - \ell_\infty$ constraint leads to an improvement in adversarial robustness while maintaining a competitive clean accuracy. These findings led us to propose N-FGSM, a simple and effective approach that can match or surpass the performance of GradAlign \citep{grad_align}, while achieving a $3\times$ speed-up. 

We perform an extensive comparison with other relevant single-step methods, observing that all of them achieve sub-optimal performance and most of them are not able to avoid CO for larger $\epsilon$ attacks. Moreover, we also analyze gradients and adversarial perturbations during training and observe that they have a similar behaviour for N-FGSM and PGD-10 as opposed to other methods that present CO such as FGSM and RS-FGSM.
However, despite impressive improvements of single-step adversarial training methods, there is still a gap between single-step and multi-step methods such as PGD-10 as we increase the $\epsilon$ radius. Therefore, future work should put an emphasis on formally understanding the limitations of single-step adversarial training and explore how, if possible, this gap can be reduced.

\begin{ack}
We thank Guillermo Ortiz-Jim\'enez for the fruitful discussions and feedback. This work is supported by the UKRI grant: Turing AI Fellowship EP/W002981/1 and EPSRC/MURI grant: EP/N019474/1. We would also like to thank the Royal Academy of Engineering and FiveAI. A. Sanyal acknowledges support from the ETH AI Center postdoctoral fellowship. Pau de Jorge is fully funded by NAVER LABS Europe. 
\end{ack}

%%%%%%%%%%%%%%%%%%%%%%%%%%%%%%%%%%%%%%%%%%%%%%%%%%%%%%%%%%%%

\bibliographystyle{plainnat}
\bibliography{main.bib}

\begin{thebibliography}{37}
\providecommand{\natexlab}[1]{#1}
\providecommand{\url}[1]{\texttt{#1}}
\expandafter\ifx\csname urlstyle\endcsname\relax
  \providecommand{\doi}[1]{doi: #1}\else
  \providecommand{\doi}{doi: \begingroup \urlstyle{rm}\Url}\fi

\bibitem[Andriushchenko and Flammarion(2020)]{grad_align}
Maksym Andriushchenko and Nicolas Flammarion.
\newblock Understanding and improving fast adversarial training.
\newblock In \emph{{Neural Information Processing Systems (NeurIPS)}}, 2020.

\bibitem[Biggio and Roli(2018)]{biggio2018wild}
Battista Biggio and Fabio Roli.
\newblock Wild patterns: Ten years after the rise of adversarial machine
  learning.
\newblock \emph{{Pattern Recognition}}, 2018.

\bibitem[Bishop(1995)]{bishop1995training}
Chris~M Bishop.
\newblock Training with noise is equivalent to tikhonov regularization.
\newblock \emph{Neural computation}, 7\penalty0 (1):\penalty0 108--116, 1995.

\bibitem[Boloor et~al.(2019)Boloor, He, Gill, Vorobeychik, and
  Zhang]{DBLP:journals/corr/abs-1903-05157}
Adith Boloor, Xin He, Christopher~D. Gill, Yevgeniy Vorobeychik, and Xuan
  Zhang.
\newblock Simple physical adversarial examples against end-to-end autonomous
  driving models.
\newblock \emph{arxiv:1903.05157}, 2019.

\bibitem[Cisse et~al.(2017)Cisse, Bojanowski, Grave, Dauphin, and
  Usunier]{parseval}
Moustapha Cisse, Piotr Bojanowski, Edouard Grave, Yann Dauphin, and Nicolas
  Usunier.
\newblock Parseval networks: Improving robustness to adversarial examples.
\newblock In \emph{{International Conference on Machine Learning (ICML)}},
  2017.

\bibitem[Croce and Hein(2020)]{croce2020reliable}
Francesco Croce and Matthias Hein.
\newblock Reliable evaluation of adversarial robustness with an ensemble of
  diverse parameter-free attacks.
\newblock In \emph{{International Conference on Machine Learning (ICML)}},
  2020.

\bibitem[Devlin et~al.(2019)Devlin, Changm, Lee, and Toutanova]{bert}
Jacob Devlin, Ming{-}Wei Changm, Kenton Lee, and Kristina Toutanova.
\newblock {BERT:} pre-training of deep bidirectional transformers for language
  understanding.
\newblock In \emph{{Annual Conference of the North American Chapter of the
  Association for Computational Linguistics: Human Language Technologies (NAACL
  HLT)}}, 2019.

\bibitem[Fawzi et~al.(2018)Fawzi, Moosavi-Dezfooli, Frossard, and
  Soatto]{fawzi2018empirical}
Alhussein Fawzi, Seyed-Mohsen Moosavi-Dezfooli, Pascal Frossard, and Stefano
  Soatto.
\newblock Empirical study of the topology and geometry of deep networks.
\newblock In \emph{{IEEE} Conference on Computer Vision and Pattern Recognition
  ({CVPR})}, 2018.

\bibitem[Gilmer et~al.(2019)Gilmer, Ford, Carlini, and
  Cubuk]{gilmer2019adversarial}
Justin Gilmer, Nicolas Ford, Nicholas Carlini, and Ekin Cubuk.
\newblock Adversarial examples are a natural consequence of test error in
  noise.
\newblock In \emph{{International Conference on Machine Learning (ICML)}},
  2019.

\bibitem[Golgooni et~al.(2021)Golgooni, Saberi, Eskandar, and
  Rohban]{zero_grad}
Zeinab Golgooni, Mehrdad Saberi, Masih Eskandar, and Mohammad~Hossein Rohban.
\newblock Zerograd: Mitigating and explaining catastrophic overfitting in fgsm
  adversarial training.
\newblock \emph{arXiv:2103.15476}, 2021.

\bibitem[Goodfellow et~al.(2015)Goodfellow, Shlens, and Szegedy]{fgsm}
Ian Goodfellow, Jonathon Shlens, and Christian Szegedy.
\newblock Explaining and harnessing adversarial examples.
\newblock \emph{{International Conference on Learning Representations (ICLR)}},
  2015.

\bibitem[He et~al.(2015)He, Zhang, Ren, and Sun]{he2015delving}
Kaiming He, Xiangyu Zhang, Shaoqing Ren, and Jian Sun.
\newblock Delving deep into rectifiers: Surpassing human-level performance on
  imagenet classification.
\newblock In \emph{{IEEE} International Conference on Computer Vision
  ({ICCV})}, 2015.

\bibitem[He et~al.(2016)He, Zhang, Ren, and Sun]{preact}
Kaiming He, Xiangyu Zhang, Shaoqing Ren, and Jian Sun.
\newblock Identity mappings in deep residual networks.
\newblock In \emph{European Conference on Computer Vision ({ECCV})}, 2016.

\bibitem[Kang and Moosavi-Dezfooli(2021)]{kang2021understanding}
Peilin Kang and Seyed-Mohsen Moosavi-Dezfooli.
\newblock Understanding catastrophic overfitting in adversarial training.
\newblock \emph{arXiv:2105.02942}, 2021.

\bibitem[Kim et~al.(2021)Kim, Lee, and Lee]{AAAI}
Hoki Kim, Woojin Lee, and Jaewook Lee.
\newblock Understanding catastrophic overfitting in single-step adversarial
  training.
\newblock In \emph{{AAAI Conference on Artificial Intelligence (AAAI)}}, 2021.

\bibitem[Krizhevsky and Hinton(2009)]{cifar}
Alex Krizhevsky and Geoffrey Hinton.
\newblock Learning multiple layers of features from tiny images.
\newblock \emph{Master's thesis, Department of Computer Science, University of
  Toronto}, 2009.

\bibitem[Krizhevsky et~al.(2012)Krizhevsky, Sutskever, and
  Hinton]{krizhevsky2012imagenet}
Alex Krizhevsky, Ilya Sutskever, and Geoffrey~E Hinton.
\newblock Imagenet classification with deep convolutional neural networks.
\newblock \emph{{Neural Information Processing Systems (NeurIPS)}}, 2012.

\bibitem[Kurakin et~al.(2017)Kurakin, Goodfellow, and Bengio]{adv_ml_scale}
Alexey Kurakin, Ian Goodfellow, and Samy Bengio.
\newblock Adversarial machine learning at scale.
\newblock In \emph{{International Conference on Learning Representations
  (ICLR)}}, 2017.

\bibitem[Li et~al.(2020)Li, Wang, Jana, and Carin]{towards}
Bai Li, Shiqi Wang, Suman Jana, and Lawrence Carin.
\newblock Towards understanding fast adversarial training.
\newblock \emph{arXiv:2006.03089}, 2020.

\bibitem[Madry et~al.(2018)Madry, Makelov, Schmidt, Tsipras, and Vladu]{PGD}
Aleksander Madry, Aleksandar Makelov, Ludwig Schmidt, Dimitris Tsipras, and
  Adrian Vladu.
\newblock Towards deep learning models resistant to adversarial attacks.
\newblock In \emph{{International Conference on Learning Representations
  (ICLR)}}, 2018.

\bibitem[Netzer et~al.(2011)Netzer, Wang, Coates, Bissacco, Wu, and Ng]{svhn}
Yuval Netzer, Tao Wang, Adam Coates, Alessandro Bissacco, Bo~Wu, and Andrew~Y
  Ng.
\newblock Reading digits in natural images with unsupervised feature learning.
\newblock In \emph{{Neural Information Processing Systems (NeurIPS),
  Workshops}}, 2011.

\bibitem[Papernot et~al.(2016)Papernot, McDaniel, Wu, Jha, and
  Swami]{papernot2016distillation}
Nicolas Papernot, Patrick McDaniel, Xi~Wu, Somesh Jha, and Ananthram Swami.
\newblock Distillation as a defense to adversarial perturbations against deep
  neural networks.
\newblock In \emph{IEEE symposium on security and privacy (SP)}, 2016.

\bibitem[Park and Lee(2021)]{SLAT}
Geon~Yeong Park and Sang~Wan Lee.
\newblock Reliably fast adversarial training via latent adversarial
  perturbation.
\newblock In \emph{{International Conference on Learning Representations
  (ICLR), Workshops}}, 2021.

\bibitem[Rice et~al.(2020)Rice, Wong, and Kolter]{overfitting}
Leslie Rice, Eric Wong, and Zico Kolter.
\newblock Overfitting in adversarially robust deep learning.
\newblock In \emph{{International Conference on Machine Learning (ICML)}},
  2020.

\bibitem[Sanyal et~al.(2018)Sanyal, Kanade, and
  Torr]{DBLP:journals/corr/abs-1804-07090}
Amartya Sanyal, Varun Kanade, and Philip H.~S. Torr.
\newblock Robustness via deep low-rank representations.
\newblock \emph{arxiv:1804.07090}, 2018.

\bibitem[Shafahi et~al.(2019)Shafahi, Najibi, Ghiasi, Xu, Dickerson, Studer,
  Davis, Taylor, and Goldstein]{free}
Ali Shafahi, Mahyar Najibi, Mohammad~Amin Ghiasi, Zheng Xu, John Dickerson,
  Christoph Studer, Larry~S Davis, Gavin Taylor, and Tom Goldstein.
\newblock Adversarial training for free!
\newblock \emph{{Neural Information Processing Systems (NeurIPS)}}, 2019.

\bibitem[Silver et~al.(2016)Silver, Huang, Maddison, Guez, Sifre, Van
  Den~Driessche, Schrittwieser, Antonoglou, Panneershelvam, Lanctot,
  et~al.]{go}
David Silver, Aja Huang, Chris~J Maddison, Arthur Guez, Laurent Sifre, George
  Van Den~Driessche, Julian Schrittwieser, Ioannis Antonoglou, Veda
  Panneershelvam, Marc Lanctot, et~al.
\newblock Mastering the game of go with deep neural networks and tree search.
\newblock \emph{Nature}, 2016.

\bibitem[Simonyan and Zisserman(2015)]{vgg}
Karen Simonyan and Andrew Zisserman.
\newblock Very deep convolutional networks for large-scale image recognition.
\newblock In \emph{International Conference on Learning Representations}, 2015.

\bibitem[Sriramanan et~al.(2020)Sriramanan, Addepalli, Baburaj, et~al.]{GAT}
Gaurang Sriramanan, Sravanti Addepalli, Arya Baburaj, et~al.
\newblock Guided adversarial attack for evaluating and enhancing adversarial
  defenses.
\newblock \emph{Advances in Neural Information Processing Systems},
  33:\penalty0 20297--20308, 2020.

\bibitem[Sriramanan et~al.(2021)Sriramanan, Addepalli, Baburaj, et~al.]{NuAT}
Gaurang Sriramanan, Sravanti Addepalli, Arya Baburaj, et~al.
\newblock Towards efficient and effective adversarial training.
\newblock \emph{Advances in Neural Information Processing Systems},
  34:\penalty0 11821--11833, 2021.

\bibitem[Szegedy et~al.(2014)Szegedy, Zaremba, Sutskever, Bruna, Erhan,
  Goodfellow, and Fergus]{szegedy2013intriguing}
Christian Szegedy, Wojciech Zaremba, Ilya Sutskever, Joan Bruna, Dumitru Erhan,
  Ian Goodfellow, and Rob Fergus.
\newblock Intriguing properties of neural networks.
\newblock In \emph{{International Conference on Learning Representations
  (ICLR)}}, 2014.

\bibitem[Tramèr et~al.(2018)Tramèr, Kurakin, Papernot, Goodfellow, Boneh, and
  McDaniel]{tramer2018ensemble}
Florian Tramèr, Alexey Kurakin, Nicolas Papernot, Ian Goodfellow, Dan Boneh,
  and Patrick McDaniel.
\newblock Ensemble adversarial training: Attacks and defenses.
\newblock In \emph{{International Conference on Learning Representations
  (ICLR)}}, 2018.

\bibitem[Vivek and Babu(2020)]{dropout_fgsm}
BS~Vivek and R~Venkatesh Babu.
\newblock Single-step adversarial training with dropout scheduling.
\newblock In \emph{{IEEE} Conference on Computer Vision and Pattern Recognition
  ({CVPR})}, 2020.

\bibitem[Weng et~al.(2018)Weng, Zhang, Chen, Song, Hsieh, Daniel, Boning, and
  Dhillon]{weng2018towards}
Lily Weng, Huan Zhang, Hongge Chen, Zhao Song, Cho-Jui Hsieh, Luca Daniel,
  Duane Boning, and Inderjit Dhillon.
\newblock Towards fast computation of certified robustness for relu networks.
\newblock In \emph{{International Conference on Machine Learning (ICML)}},
  2018.

\bibitem[Wong et~al.(2020)Wong, Rice, and Kolter]{RS-FGSM}
Eric Wong, Leslie Rice, and J.~Zico Kolter.
\newblock Fast is better than free: Revisiting adversarial training.
\newblock In \emph{{International Conference on Learning Representations
  (ICLR)}}, 2020.

\bibitem[Zagoruyko and Komodakis(2016)]{zagoruyko2016wide}
Sergey Zagoruyko and Nikos Komodakis.
\newblock Wide residual networks.
\newblock In \emph{{BMVC British Machine Vision Conference (BMVC)}}, 2016.

\bibitem[Zhang et~al.(2019)Zhang, Yu, Jiao, Xing, Ghaoui, and Jordan]{TRADES}
Hongyang Zhang, Yaodong Yu, Jiantao Jiao, Eric Xing, Laurent~El Ghaoui, and
  Michael Jordan.
\newblock Theoretically principled trade-off between robustness and accuracy.
\newblock In \emph{{International Conference on Machine Learning (ICML)}},
  2019.

\end{thebibliography}

\section*{Checklist}

% %%% BEGIN INSTRUCTIONS %%%
% The checklist follows the references.  Please
% read the checklist guidelines carefully for information on how to answer these
% questions.  For each question, change the default \answerTODO{} to \answerYes{},
% \answerNo{}, or \answerNA{}.  You are strongly encouraged to include a {\bf
% justification to your answer}, either by referencing the appropriate section of
% your paper or providing a brief inline description.  For example:
% \begin{itemize}
%   \item Did you include the license to the code and datasets? \answerYes{See Section~\ref{gen_inst}.}
%   \item Did you include the license to the code and datasets? \answerNo{The code and the data are proprietary.}
%   \item Did you include the license to the code and datasets? \answerNA{}
% \end{itemize}
% Please do not modify the questions and only use the provided macros for your
% answers.  Note that the Checklist section does not count towards the page
% limit.  In your paper, please delete this instructions block and only keep the
% Checklist section heading above along with the questions/answers below.
% %%% END INSTRUCTIONS %%%

\begin{enumerate}

\item For all authors...
\begin{enumerate}
  \item Do the main claims made in the abstract and introduction accurately reflect the paper's contributions and scope? 
    \answerYes{We provide experimental validation for all our claims.}
  \item Did you describe the limitations of your work?
    \answerYes{Yes, in conclusions and when appropriate along the paper.}
  \item Did you discuss any potential negative societal impacts of your work?
    \answerNA{We do not consider research on training robust networks can have negative impact.}
  \item Have you read the ethics review guidelines and ensured that your paper conforms to them?
    \answerYes{}
\end{enumerate}

\item If you are including theoretical results...
\begin{enumerate}
  \item Did you state the full set of assumptions of all theoretical results?
    \answerYes{}
        \item Did you include complete proofs of all theoretical results?
            \answerYes{}
\end{enumerate}

\item If you ran experiments...
\begin{enumerate}
  \item Did you include the code, data, and instructions needed to reproduce the main experimental results (either in the supplemental material or as a URL)?
    \answerYes{We include the URL to our code in the abstract.}
  \item Did you specify all the training details (\eg data splits, hyperparameters, how they were chosen)?
    \answerYes{When relevant in the text.}
        \item Did you report error bars (\eg with respect to the random seed after running experiments multiple times)?
            \answerYes{All plots or numbers have standard deviation. Although sometimes too small to be visible.}
        \item Did you include the total amount of compute and the type of resources used (\eg type of GPUs, internal cluster, or cloud provider)?
            \answerYes{Yes, see appendix \ref{sec:gpu-hours}.}
\end{enumerate}

\item If you are using existing assets (\eg code, data, models) or curating/releasing new assets...
\begin{enumerate}
  \item If your work uses existing assets, did you cite the creators?
    \answerYes{}
  \item Did you mention the license of the assets?
    \answerNA{}
  \item Did you include any new assets either in the supplemental material or as a URL?
    \answerYes{We include the URL to our code in the abstract.}
  \item Did you discuss whether and how consent was obtained from people whose data you're using/curating?
    \answerNA{}
  \item Did you discuss whether the data you are using/curating contains personally identifiable information or offensive content?
    \answerNA{}
\end{enumerate}

\item If you used crowdsourcing or conducted research with human subjects...
\begin{enumerate}
  \item Did you include the full text of instructions given to participants and screenshots, if applicable?
    \answerNA{}
  \item Did you describe any potential participant risks, with links to Institutional Review Board (IRB) approvals, if applicable?
    \answerNA{}
  \item Did you include the estimated hourly wage paid to participants and the total amount spent on participant compensation?
    \answerNA{}
\end{enumerate}

\end{enumerate}

%%%%%%%%%%%%%%%%%%%%%%%%%%%%%%%%%%%%%%%%%%%%%%%%%%%%%%%%%%%%

\newpage
\appendix

\section{Additional plots for PreActResNet18 experiments}
In the main paper we compare N-FGSM with other single-step methods and multi-step methods separately and remove clean accuracies for better visualization. In this section we present the curves for all methods with both the clean and robust accuracy. The tendency in the three datasets is for N-FGSM PGD-50-10 accuracy to be slightly above that of GradAlign, while the opposite happens to the clean accuracy. We also observe that clean accuracy becomes significantly more noisy when CO happens. Exact numbers for all the curves are in~\cref{sec:detailed_results}.

\begin{figure}[h]
\centering
\begin{subfigure}[b]{.32\linewidth}
\includegraphics[width=\linewidth]{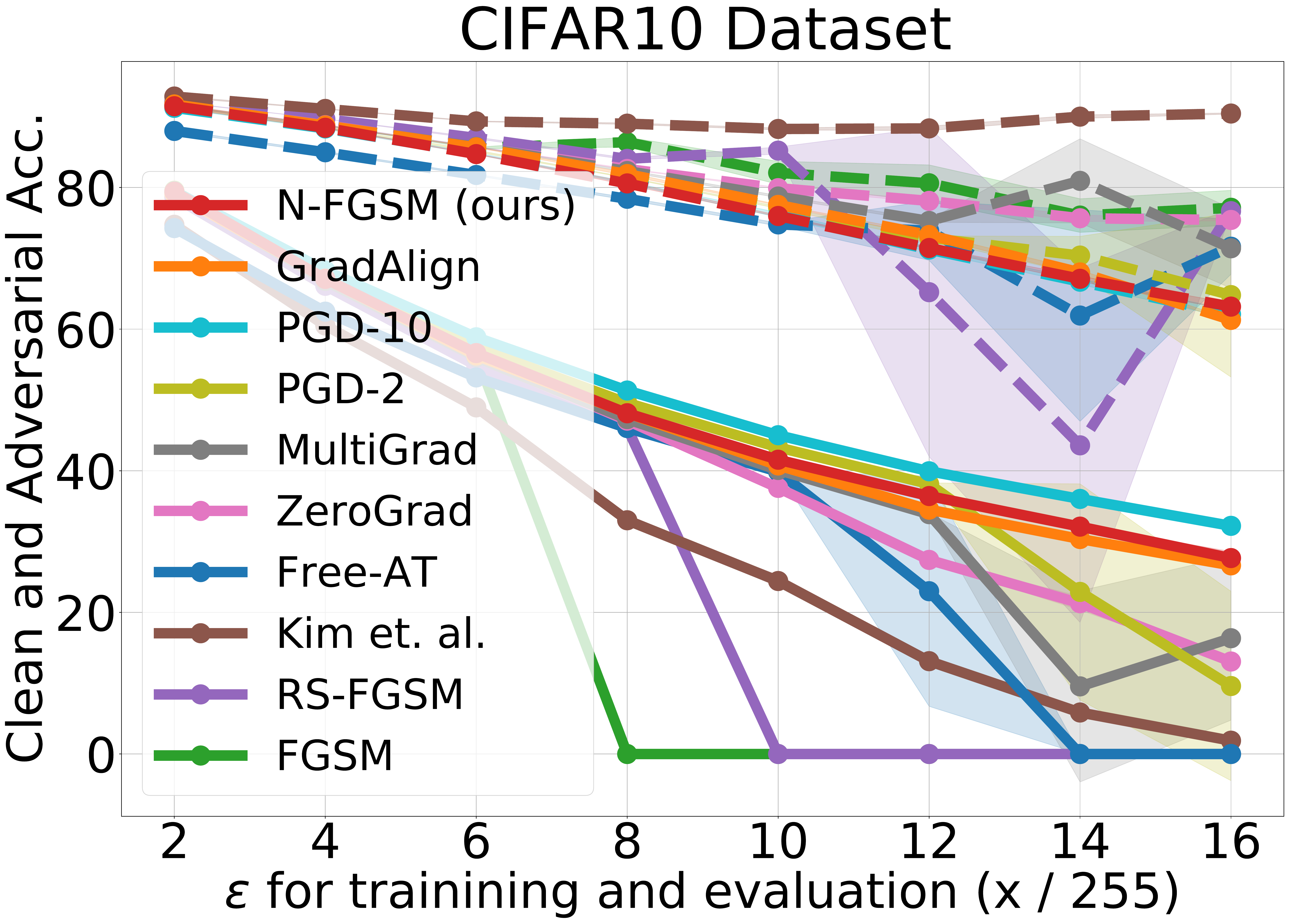}
\end{subfigure}
\begin{subfigure}[b]{.32\linewidth}
\includegraphics[width=\linewidth]{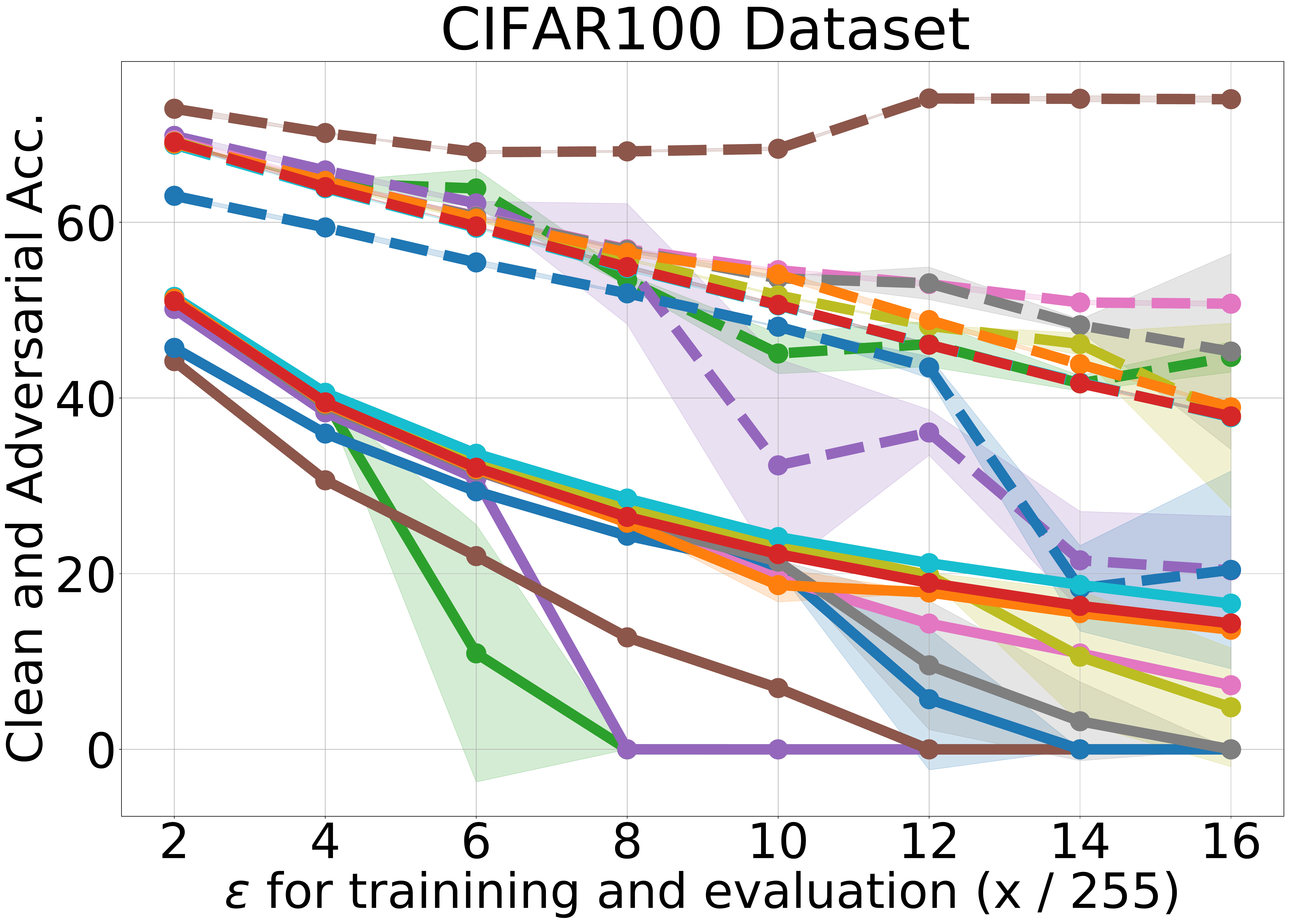}
\end{subfigure}
\begin{subfigure}[b]{.32\linewidth}
\includegraphics[width=\linewidth]{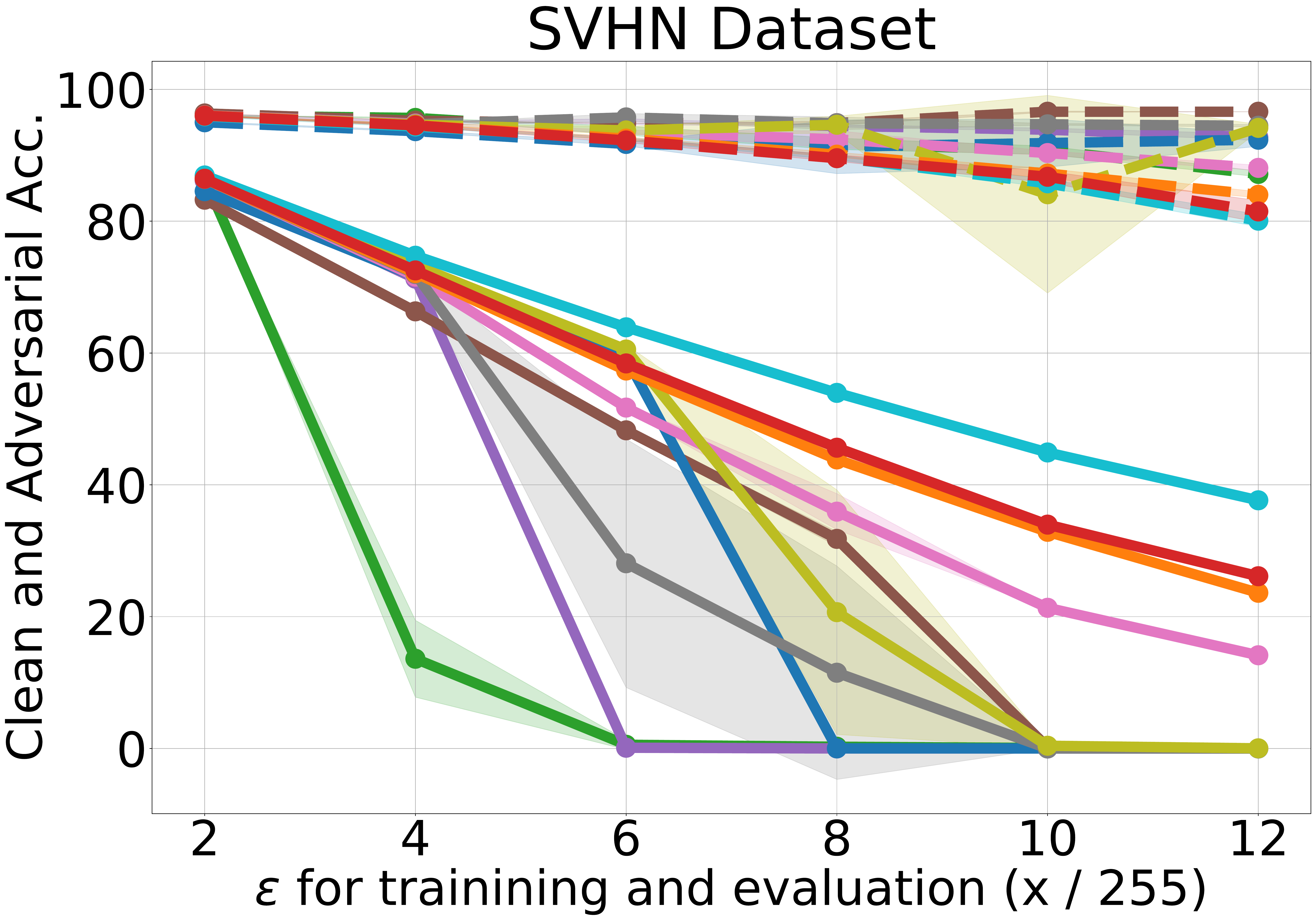}
\end{subfigure}
\caption{Comparison of all methods on CIFAR-10, CIFAR-100 and SVHN with PreactResNet18 over different perturbation radius ($\epsilon$ is divided by 255). We plot both the robust (solid line) and the clean (dashed line) accuracy for each method. Our method, N-FGSM, is able to match or surpass the state-of-the-art single-step method GradAlign while \textit{reducing the cost by a $3\times$ factor}. Adversarial accuracy is based on PGD-50-10 and experiments are averaged over 3 seeds. Legend is shared among all plots.}
\label{figure:comparison_all}
% \vspace{-5pt}
\end{figure}

\section{Experiments with WideResNet28-10 architecture} \label{sec:wideresnet_appendix}
In this section we present the plots of our experiments with WideResNet28-10. We report the results in two figures. In~\cref{figure:wideresnet_single_step} we compare all single-step methods and we do not plot the clean accuracy for better visualization. In~\cref{figure:wideresnet_all} we plot all methods, including multi-step methods, and report the clean accuracy as well with dashed lines. Since we observed that our baseline, RandAlpha, outperformed \cite{AAAI} in all settings for PreActResNet18, we only report RandAlpha for WideResNet. As mentioned in the main paper, we observe that CO seems to be more difficult to prevent for WideResNet. In particular, for GradAlign we observed the regularizer hyperparameter settings proposed by \cite{grad_align} for CIFAR-10 (searched for a PreActResNet18) worked well. However, those parameters led to CO for $6 \leq \epsilon \leq 12$ in CIFAR-100. Since $\epsilon = 14,\ 16$ did not show CO, we increased the GradAlign regularizer hyperparameter $\lambda$ for CIFAR-100 so that each $6 \leq \epsilon \leq 12$ would have the default value corresponding to $\epsilon + 2$, for instance, $\lambda$ for $\epsilon=6$ would be the default $\lambda$ in \cite{grad_align} for $\epsilon = 8$. 

\begin{figure}[hb]
\centering
\begin{subfigure}[b]{.32\linewidth}
\includegraphics[width=\linewidth]{Figures/cifar10_wideresnet_single_step.pdf}
\end{subfigure}
\begin{subfigure}[b]{.32\linewidth}
\includegraphics[width=\linewidth]{Figures/cifar100_wideresnet_single_step.pdf}
\end{subfigure}
\begin{subfigure}[b]{.32\linewidth}
\includegraphics[width=\linewidth]{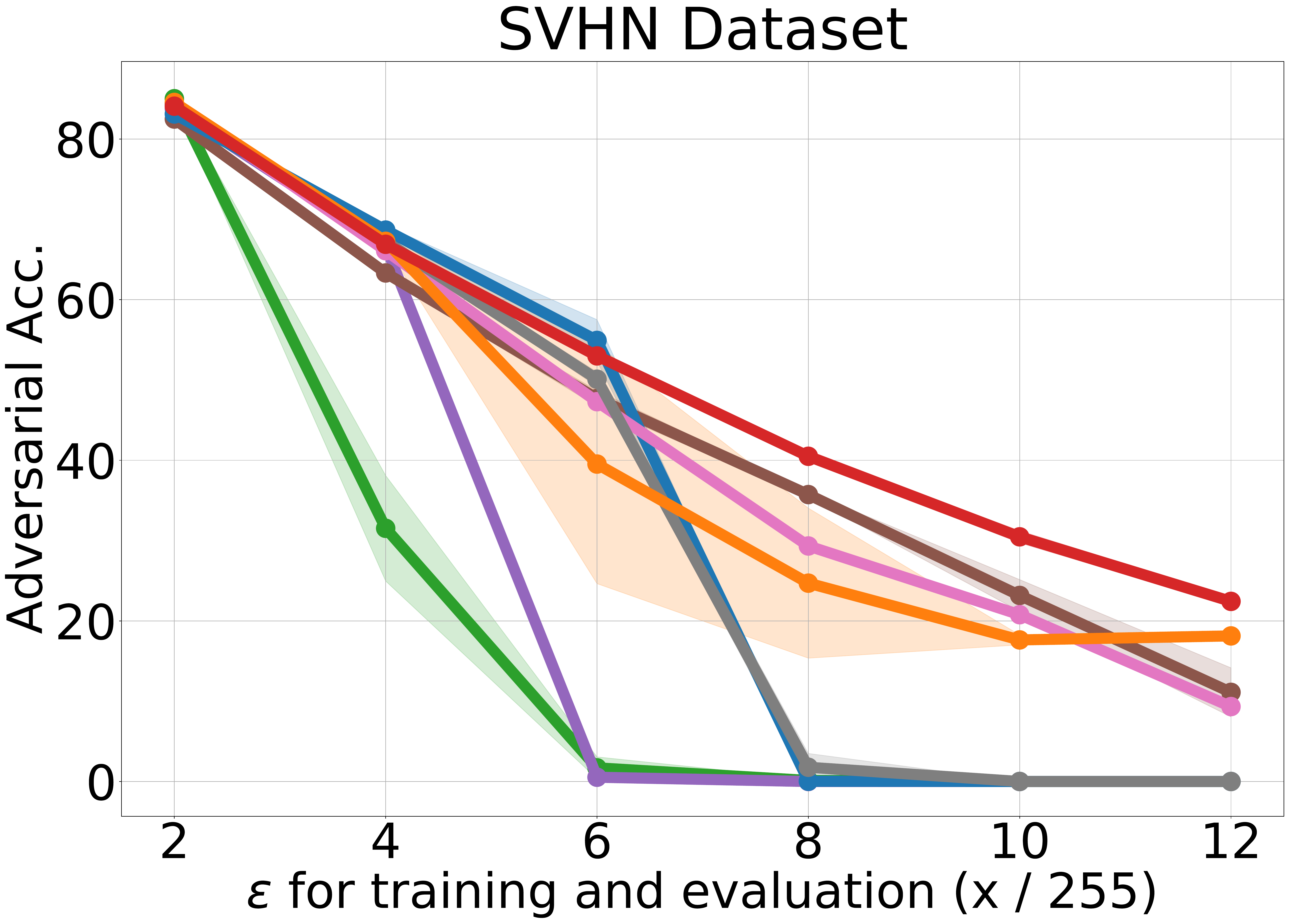}
\end{subfigure}
\caption{Comparison of single-step methods on CIFAR-10, CIFAR-100 and SVHN with WideResNet28-10 over different perturbation radius ($\epsilon$ is divided by 255). Our method, N-FGSM, is able to match or surpass the state-of-the-art single-step method GradAlign while \textit{reducing the cost by a $3\times$ factor}. Moreover, we could not find any competitive hyperparameter setting for GradAlign for $\epsilon \geq 6$ in SVHN dataset. Adversarial accuracy is based on PGD-50-10 and experiments are averaged over 3 seeds. Legend is shared among all plots.}
\label{figure:wideresnet_single_step}
% \vspace{-5pt}
\end{figure} 

For SVHN we observed that the default values for $\lambda$ led to models close to a constant classifier for $\epsilon \geq 6$. We tried to increase the lambda for those $\epsilon$ values to $1.25 \lambda$ but observed the same result. Since the model did not show typical CO but rather it seemed as it was underfitting, we tried to reduce the step-size to $\alpha = 0.75 \epsilon$ and also both decreasing $\alpha$ and increasing $\lambda$. When reducing the step size we obtain accuracies above those of a constant classifier for some radii, however, some or all seeds converge to a constant classifier for each setting, hence the large standard deviations. For N-FGSM, the default configuration of N-FGSM ($\alpha=\epsilon,\ k=2\epsilon$) works well in all settings except for $\epsilon=16$ on CIFAR-10 and $\epsilon = 10,\ 12$ on SVHN. For CIFAR-10, we increase the noise magnitude to $k = 4\epsilon$. For SVHN we find that decreasing $\alpha$ as we tried for GradAlign works better than increasing the noise. We use $\alpha = 8$ for both $\epsilon$ radii. Exact numbers for all the curves are in~\cref{sec:detailed_results}

\begin{figure}[ht]
\centering
\begin{subfigure}[b]{.32\linewidth}
\includegraphics[width=\linewidth]{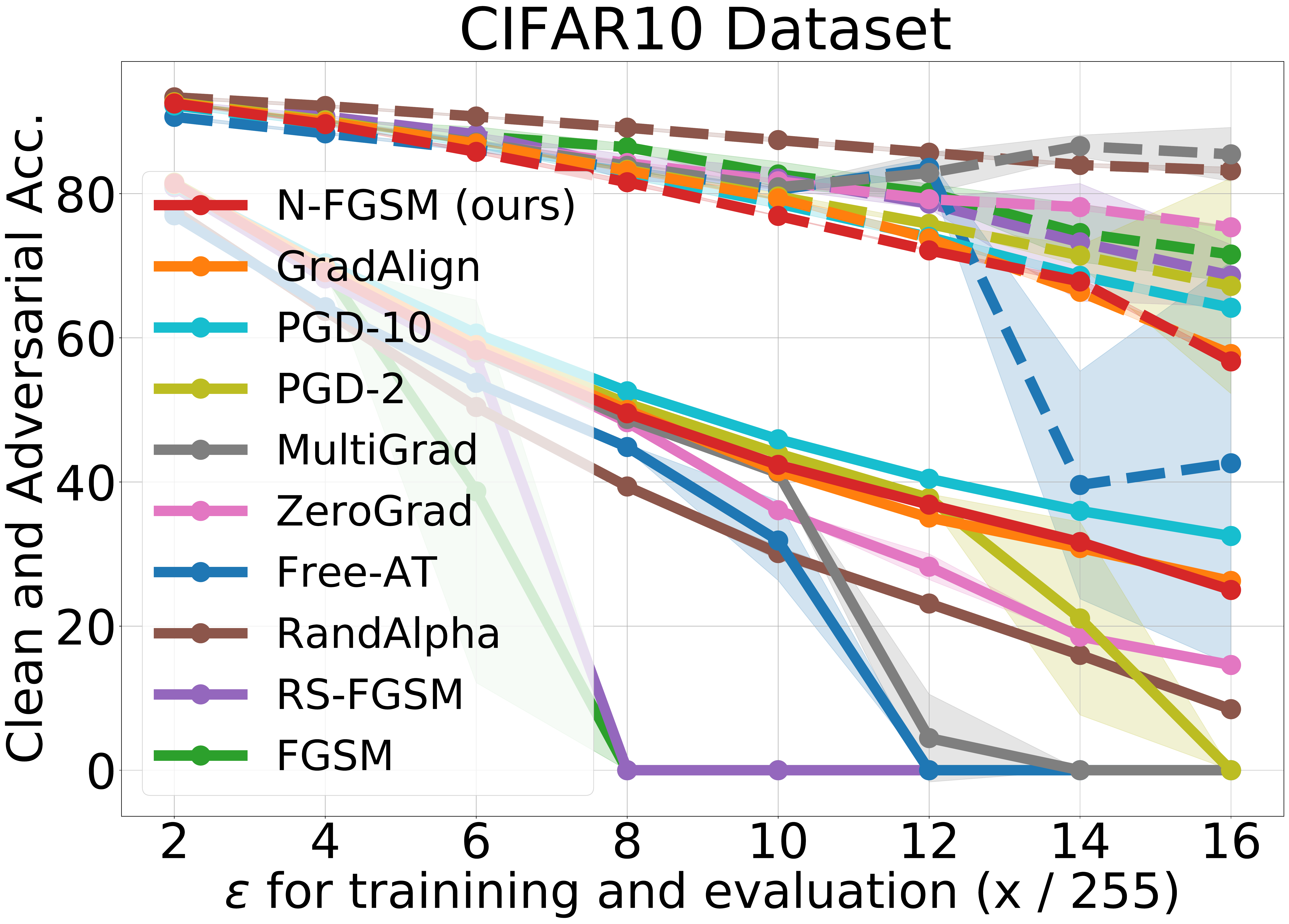}
\end{subfigure}
\begin{subfigure}[b]{.32\linewidth}
\includegraphics[width=\linewidth]{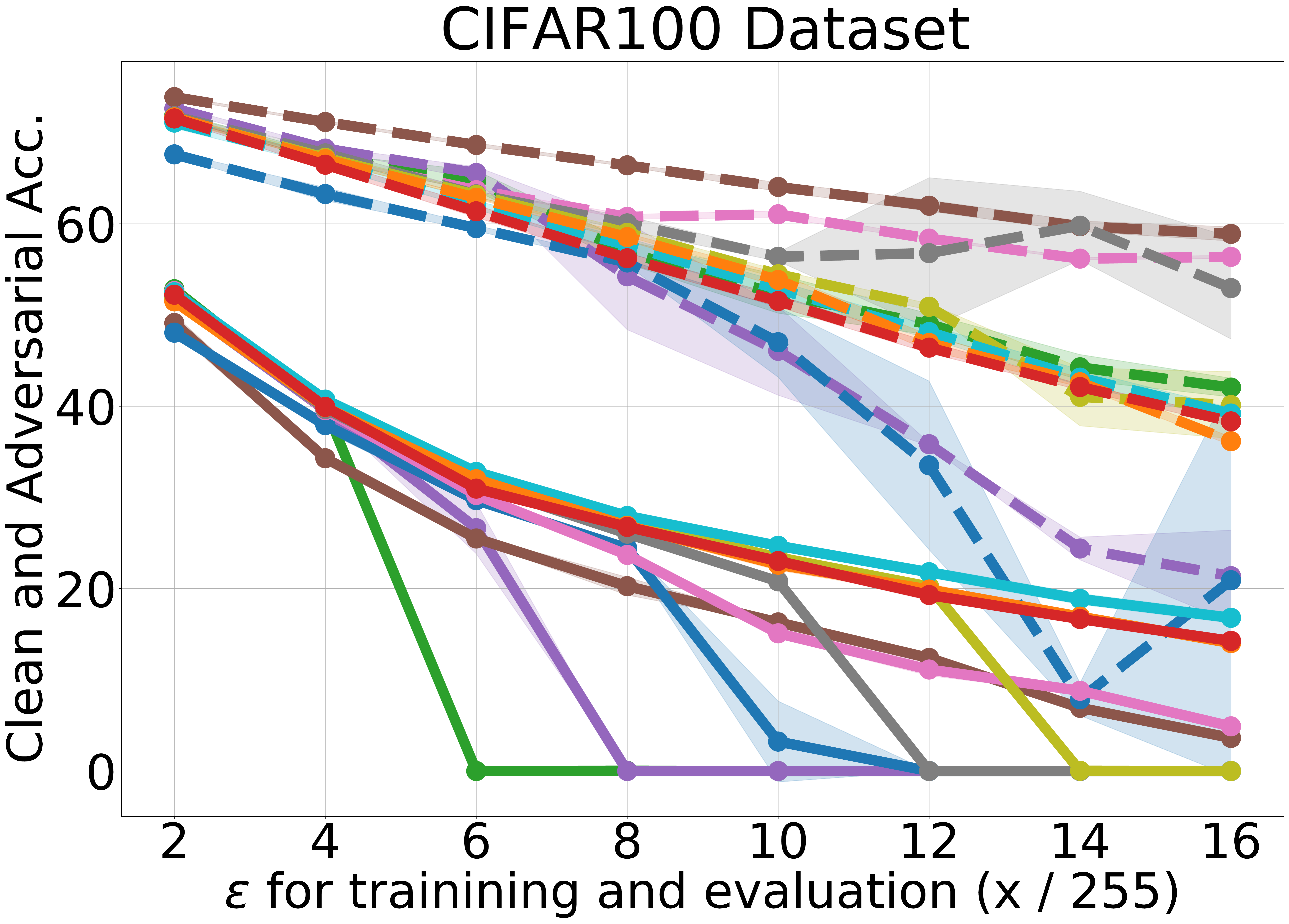}
\end{subfigure}
\begin{subfigure}[b]{.32\linewidth}
\includegraphics[width=\linewidth]{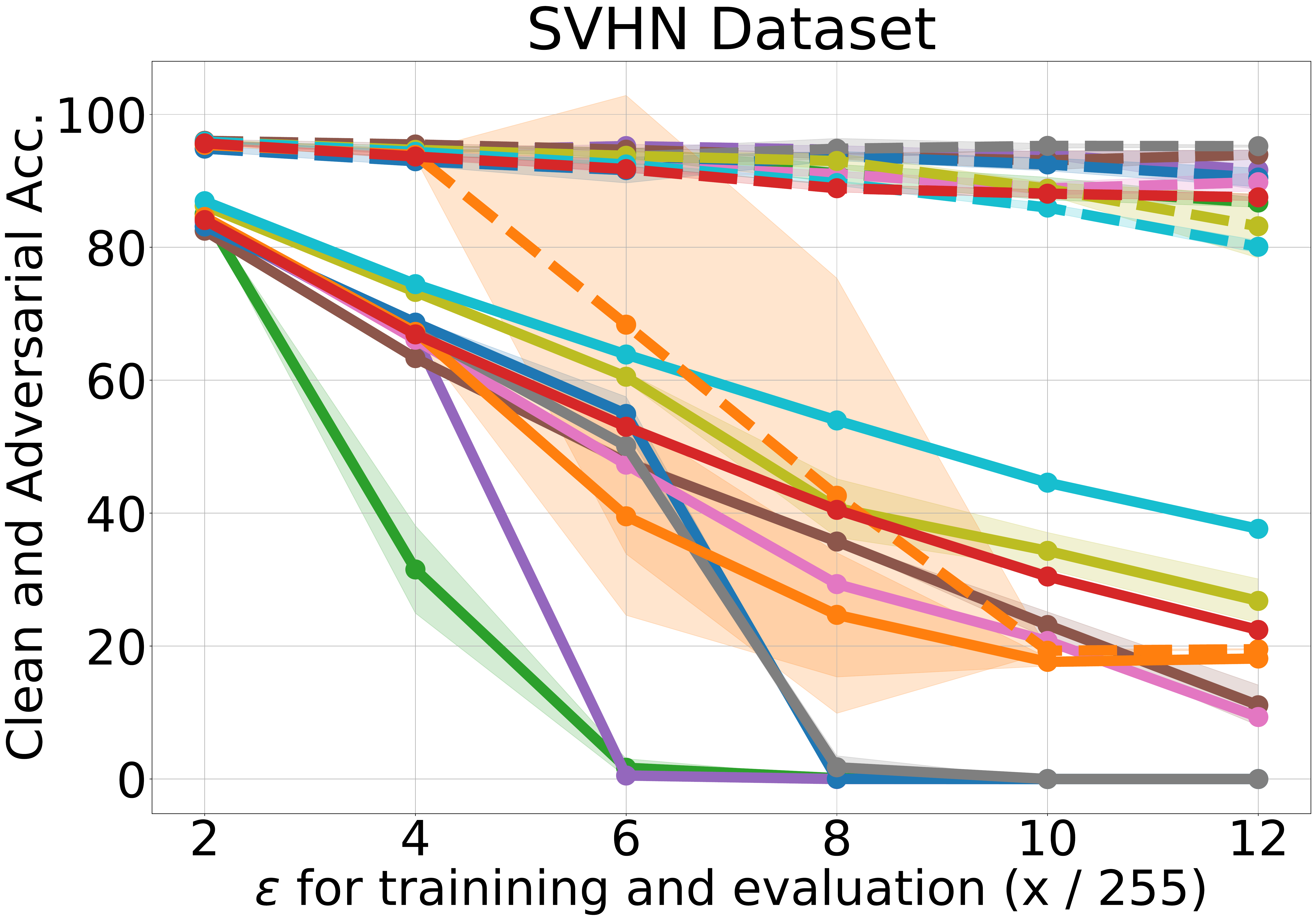}
\end{subfigure}
\caption{Comparison of all methods on CIFAR-10, CIFAR-100 and SVHN with WideResNet28-10 over different perturbation radius ($\epsilon$ is divided by 255). We plot both the robust (solid line) and the clean (dashed line) accuracy for each method. Legend is shared among all plots.}
\label{figure:wideresnet_all}
% \vspace{-5pt}
\end{figure}

\section{Increasing adversarial perturbations during training}\label{sec:inc-adv-pert}

As mentioned in the main paper, N-FGSM perturbations have $\ell_{\infty}-$norm larger than $\epsilon$, see \cref{sec:l2_norm}. In~\cref{sec:increased_perturb} we have seen that the benefits of N-FGSM can not be reproduced by simply increasing $\alpha$ without increasing the noise. However, for the sake of completeness, we also ablate other single-step baselines by using a larger $\epsilon$ during training \ie $\{\epsilon = 8/255, \epsilon = 12/255, \epsilon = 16/255\}$ while testing with a fixed $\epsilon=8/255$ on CIFAR-10. Results are presented in \cref{table:increased_epsilon}. We observe that increasing $\epsilon_{\textrm{train}}$ seems to lead to a decrease in robustness for most methods, \eg PGD-50-10 accuracy for RS-FGSM goes from $46.08 \pm 0.18$ when training with $\epsilon=8/255$ to $0.0 \pm 0.0$ with $\epsilon=12/255$. In two cases (GradAlign and MultiGrad) we observe a small increase, highest increase is for GradAlign which goes from $48.14 \pm 0.15$ to $50.6 \pm 0.45$, however, the clean accuracy drops from $81.9 \pm 0.22$ to $73.29 \pm 0.23$. This is similar to increasing $\alpha$ for N-FGSM (see \cref{figure:exp2}~(C)). However, this is tied to a significant degradation of clean accuracy. All in all, taking into account both clean and robust accuracy we conclude all baselines perform best without increasing the training $\epsilon$. All ablation results are presented in \cref{table:increased_epsilon}.

\vspace{10pt}
\begin{table}[ht]
\renewcommand{\arraystretch}{1.2}
\caption{Ablation of the PGD-50-10 accuracy for single-step methods when increasing the $\epsilon_{\textrm{train}}$. All models are evaluated with PGD-50-10 attack and $\epsilon_{\textrm{test}}=\nicefrac{8}{255}$. %At the bottom row, we present results for N-FGSM where the training epsilon is fixed and equal to test time epsilon $\epsilon_{\textrm{train}} = \epsilon_{\textrm{test}}$ but we vary the FGSM step-size $\alpha$.
Note that considering the trade-off between clean and robust accuracy, all methods perform best when training with the same epsilon to be applied at test time.}
\label{table:increased_epsilon}
\vspace{2pt}
\centering
\small
\begin{tabular}{cccccccc}
\toprule
          & \multicolumn{2}{c}{$\epsilon_{\textrm{train}} = 1 \epsilon_{\textrm{test}}$}  & \multicolumn{2}{c}{$\epsilon_{\textrm{train}} = 1.5 \epsilon_{\textrm{test}}$} & \multicolumn{2}{c}{$\epsilon_{\textrm{train}} = 2 \epsilon_{\textrm{test}}$}                                                               \\
\cmidrule(r){2-3}
\cmidrule(r){4-5}
\cmidrule(r){6-7} 
\textbf{Method} & \textbf{Clean acc.} & \textbf{PGD acc.} & \textbf{Clean acc}. & \textbf{PGD acc.} & \textbf{Clean acc.} & \textbf{PGD acc.} & \textbf{Rel. Cost} \\
\midrule
GradAlign & 81.9 $\pm$ 0.22& 48.14 $\pm$ 0.15    & 73.29 $\pm$ 0.23  & 50.6 $\pm$ 0.45  & 61.3 $\pm$ 0.15  & 46.67 $\pm$ 0.29 & 3 \\ \hline
MultiGrad  & 82.33 $\pm$ 0.14  & 47.29 $\pm$ 0.07 & 75.28 $\pm$ 0.2& 50.0 $\pm$ 0.79     & 71.42 $\pm$ 5.63& 0.0 $\pm$ 0.0 & 2  \\ \hline 
AT Free    & 78.41 $\pm$ 0.18  & 46.03 $\pm$ 0.36 & 73.91 $\pm$ 4.19  & 32.4 $\pm$ 22.91 & 71.64 $\pm$ 3.89  & 0.0 $\pm$ 0.0  & 1.6 \\ \hline
Kim et. al. & 89.02 $\pm$ 0.1  & 33.01 $\pm$ 0.09  & 88.35 $\pm$ 0.31& 27.36$\pm$0.31     & 90.45 $\pm$ 0.08& 9.28 $\pm$ 0.12 & 1.5 \\ \hline
\midrule
FGSM       & 86.41 $\pm$ 0.7  & 0.0 $\pm$ 0.0     & 80.6 $\pm$ 2.59 &  0.0 $\pm$ 0.0     & 77.14 $\pm$ 2.46 &  0.0 $\pm$ 0.0  & 1 \\ \hline
RS-FGSM    & 84.05 $\pm$ 0.13  & 46.08 $\pm$ 0.18 & 65.22 $\pm$ 23.23  & 0.0 $\pm$ 0.0   & 76.66 $\pm$ 0.38 &  0.0 $\pm$ 0.0  & 1  \\ \hline
ZeroGrad   & 82.62 $\pm$ 0.05  & 47.08 $\pm$ 0.1  & 78.11 $\pm$ 0.2& 46.43 $\pm$ 0.37    & 75.42 $\pm$ 0.13& 45.63 $\pm$ 0.39  & 1 \\ \hline
N-FGSM & 80.58 $\pm$ 0.22  & 48.12 $\pm$ 0.07 & 71.46 $\pm$ 0.14 & 50.23 $\pm$ 0.31  & 63.18 $\pm$ 0.49 & 46.46 $\pm$ 0.1  & 1 \\ \hline
\end{tabular}

\vskip -0.1in

\end{table}
%& $\alpha = 8\ \ (1 \epsilon_{\textrm{test}})$  & $\alpha = 12 \ \ (1.5 \epsilon_{\textrm{test}})$ & $\alpha = 16\ \ (2 \epsilon_{\textrm{test}})$                                                               \\ \hline
% N-FGSM  \\ ($\epsilon_{\textrm{train}} = \epsilon_{\textrm{test}})$  

\section{Longer training schedule}\label{sec:long_schedule}
In our experiments, we have followed the ``fast'' training schedule introduced by \cite{RS-FGSM}. However, \cite{overfitting} suggest that a longer training schedule coupled with early stopping may lead to a boost in performance. We also use the long training schedule for N-FGSM and observe that it does not lead to CO. In~\cref{table:long_schedule} we compare the performance of N-FGSM and GradAlign for the long training schedule. We observe that GradAlign does not seem to benefit from the long training schedule. On the other hand, although N-FGSM seems to obtain a slight increase in performance, the ``fast'' schedule provides comparable performance. It is worth mentioning that for GradAlign, the default regularizer hyperparameter for $\epsilon=\nicefrac{8}{255}$ and CIFAR-10 ($\lambda=0.2$) does not prevent CO. We do a hyperparameter search and keep the value with the largest final robust accuracy ($\lambda = 0.632$).

% \vspace{10pt}

\begin{table}[h]
\caption{Comparison of ``long'' \citep{overfitting} and ``fast'' \citep{RS-FGSM} training schedules for N-FGSM and GradAlign. GradAlign does not seem to benefit from the long training schedule. Although N-FGSM seems to obtain a slight increase in performance, the ``fast'' schedule provides comparable performance.}
\label{table:long_schedule}
\vspace{4pt}
\renewcommand{\arraystretch}{1.2}
\centering
% Please add the following required packages to your document preamble:
% \usepackage{booktabs}
\small{
\begin{tabular}{@{}ccll@{}}
\toprule
\multicolumn{2}{c}{\textbf{N-FGSM}}      & \multicolumn{2}{c}{\textbf{Grad Align}}                                          \\ \midrule
\textbf{Clean Acc} & \textbf{Robust Acc} & \multicolumn{1}{c}{\textbf{Clean Acc}} & \multicolumn{1}{c}{\textbf{Robust Acc}} \\ \midrule
\multicolumn{4}{c}{\textbf{Long schedule: Final model}}                                                                              \\ \midrule
\textbf{83.18 $\pm$ 0.11}   & 36.56 $\pm$ 0.26    & \textbf{84.13 $\pm$ 0.24}                          & 36.17 $\pm$ 0.19                           \\ \midrule
\multicolumn{4}{c}{\textbf{Long schedule: Best model}}                                                                               \\ \midrule
80.8 $\pm$ 0.36    & \textbf{48.48 $\pm$ 0.27}    & 81.57 $\pm$ 0.44                          & 47.86 $\pm$ 0.1                            \\ \midrule
\multicolumn{4}{c}{\textbf{fast schedule: Final model}}                                                                             \\ \midrule
80.58 $\pm$ 0.22   & 48.12 $\pm$ 0.07    & 81.9 $\pm$ 0.22                           & \textbf{48.14 $\pm$ 0.15}                           \\ \bottomrule
\end{tabular}
}

\end{table}

In \cref{table:long_schedule} we observe that the performance of the final model is lower than that of an early stopped method. This could be expected due to the phenomena of robust overfitting described in \cite{overfitting}. However, as a sanity check to make sure that this is not due to a hidden CO during the long schedule which somehow the model recovers from we plot the full training history in \cref{figure:long_schedule_history}. There we can observe that \textit{for N-FGSM there is no CO during training}.We also show FGSM (which is well known has CO for $\epsilon = \nicefrac{8}{255}$) for comparison.

\begin{figure}[h]
\centering
\begin{subfigure}[b]{.49\linewidth}
\includegraphics[width=\linewidth]{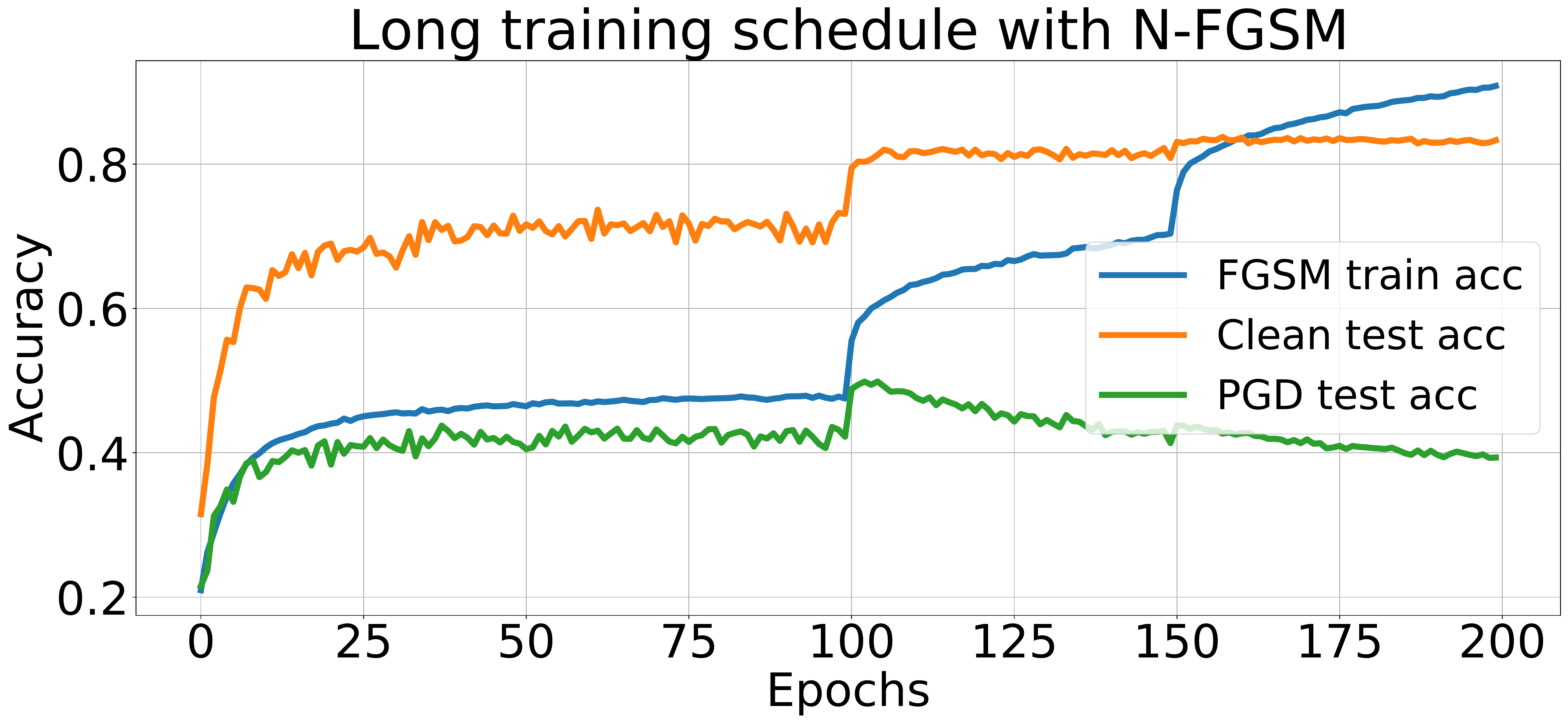}
\end{subfigure}
% \hspace{10pt}
\begin{subfigure}[b]{.49\linewidth}
\includegraphics[width=\linewidth]{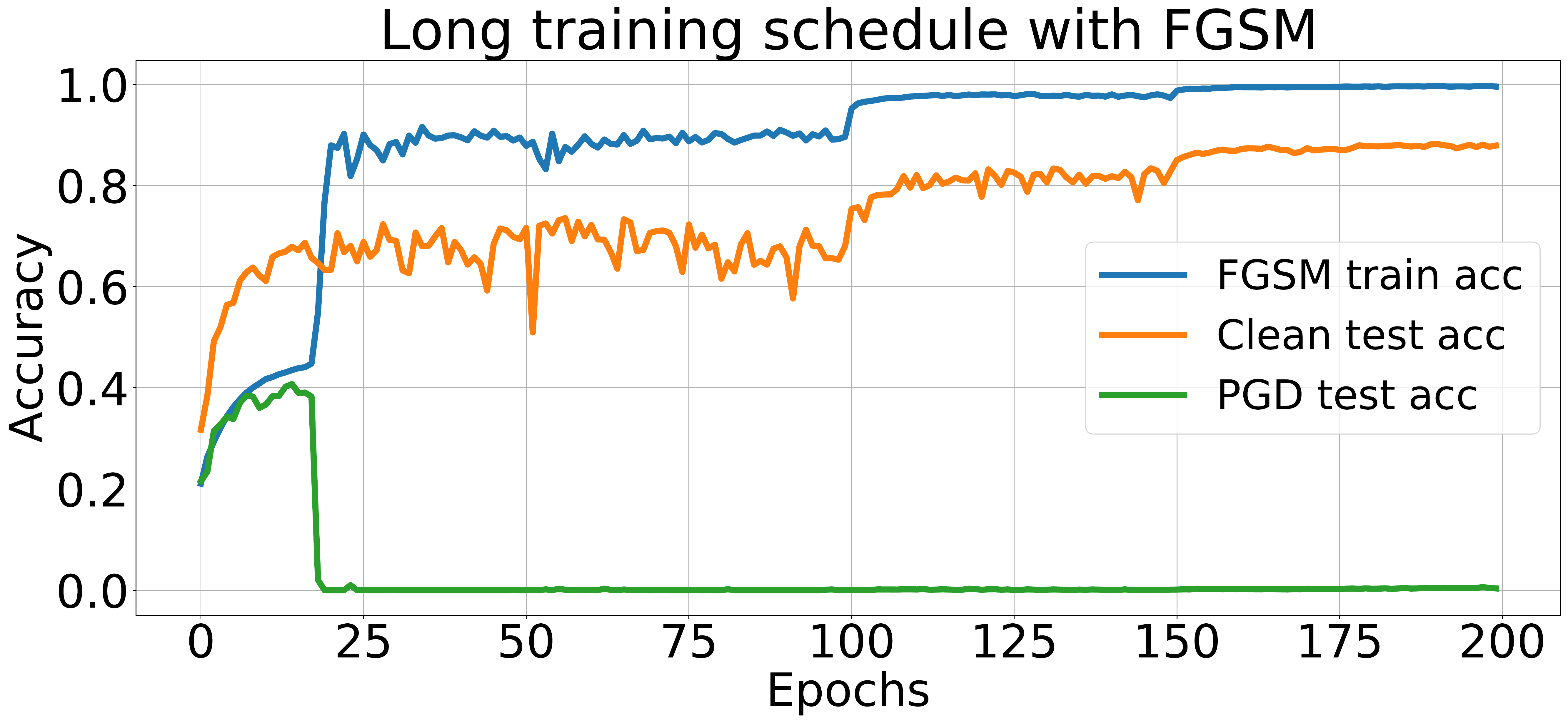}
\end{subfigure}
\caption{Training and test accuracy during the long training schedule proposed in \cite{overfitting}. We observe that N-FGSM (left) does not present CO at any point during training, however suffers from robust overfitting as described in \cite{overfitting} which suggested selecting the best validated model as a simple and yet effective way to improve robustness. On the other hand FGSM (right) suffers from CO where the robustness drops suddenly to 0 and does not recover.}
\label{figure:long_schedule_history}
% \vspace{-5pt}
\end{figure}

\section{Randomized Alpha}  \label{sec:RandAlpha_appendix} \citet{AAAI} evaluate intermediate points along the RS-FGSM direction in order to pick the ``optimal'' perturbation size. However, we find that increasing the number of intermediate evaluated points does not necessarily lead to increased adversarial accuracy. Moreover, for large perturbations we could not prevent CO even with twice the number of evaluations tested by \cite{AAAI}. This motivates us to test a very simple baseline where instead of evaluating intermediate steps, the RS-FGSM perturbation size is randomly selected as: $\delta = t \cdot \delta_{\textrm{RS-FGSM}}$ where $t \sim \mathcal{U}[0, 1]^d$. Interestingly, as reported in \cref{figure:rand_alpha_all}, we find that this very simple baseline, dubbed \textit{RandAlpha}, is able to avoid CO for all values of $\epsilon$ and outperforms \cite{AAAI} on CIFAR-10, CIFAR-100 and SVHN. This is aligned with our main finding that combining noise with adversarial attacks is indeed a powerful tool that should be explored more thoroughly before developing more expensive solutions.

\begin{figure}[h]
\centering
\begin{subfigure}[b]{.27\linewidth}
\includegraphics[width=\linewidth]{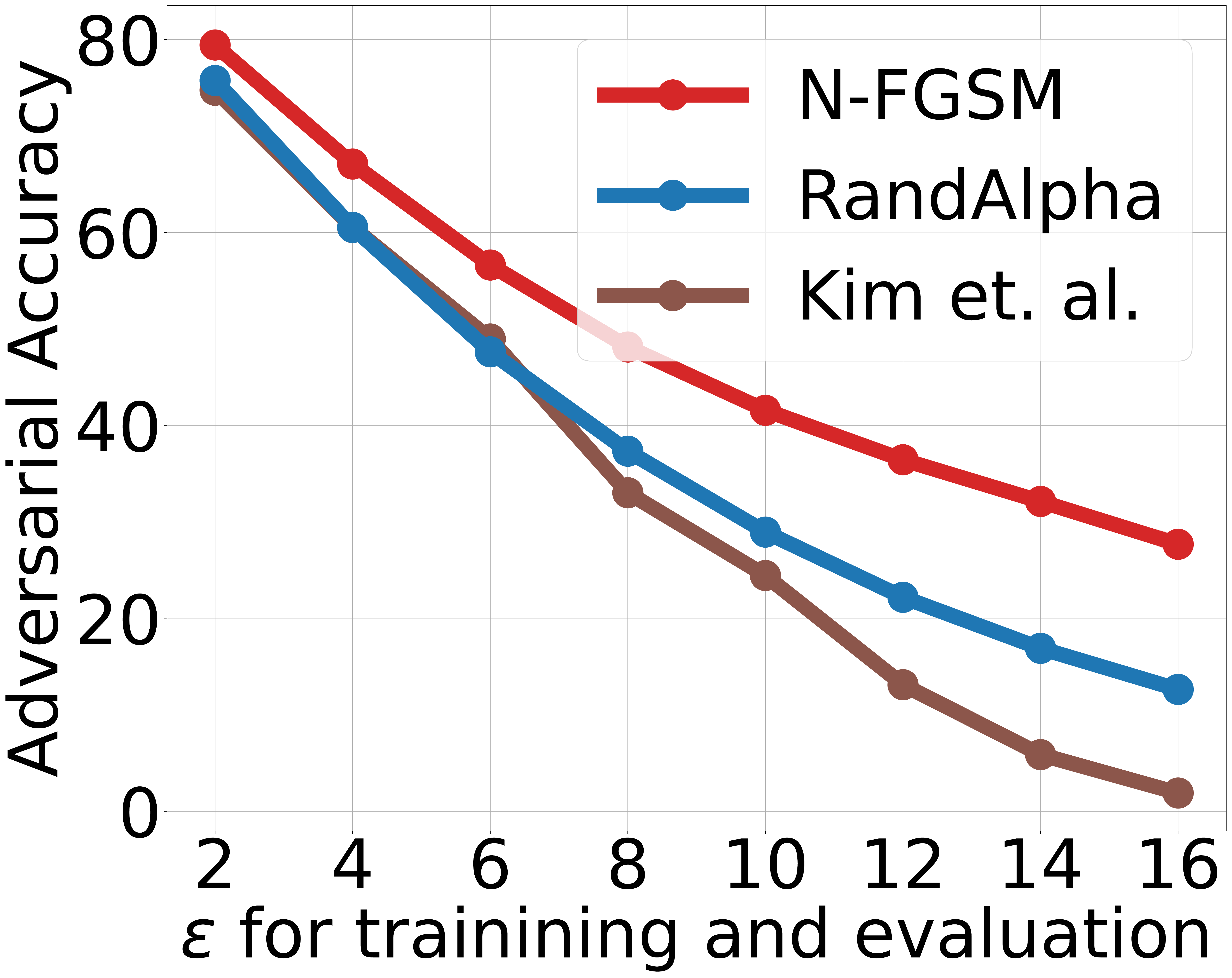}
\end{subfigure}
\hspace{10pt}
\begin{subfigure}[b]{.27\linewidth}
\includegraphics[width=\linewidth]{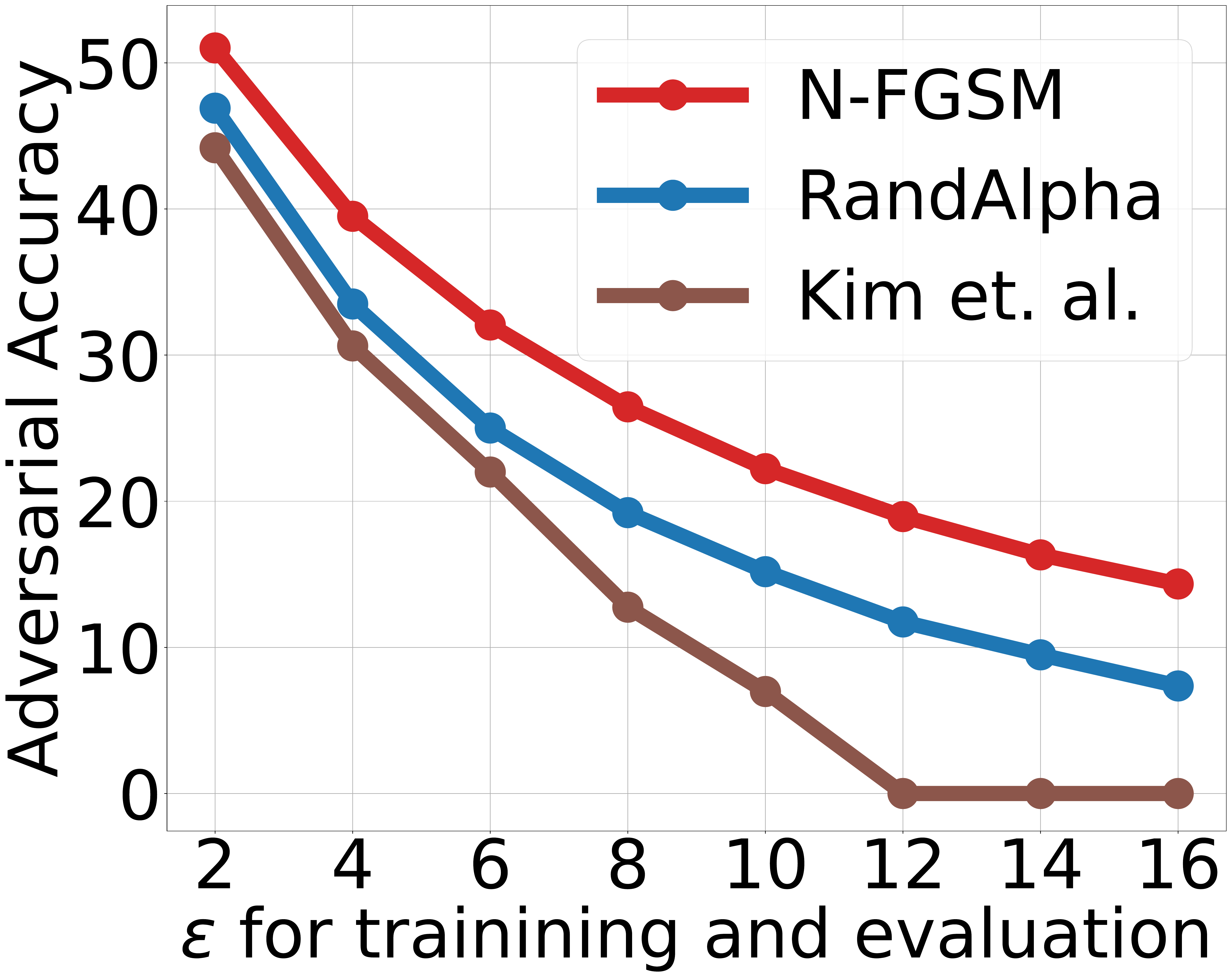}
\end{subfigure}
\hspace{10pt}
\begin{subfigure}[b]{.27\linewidth}
\includegraphics[width=\linewidth]{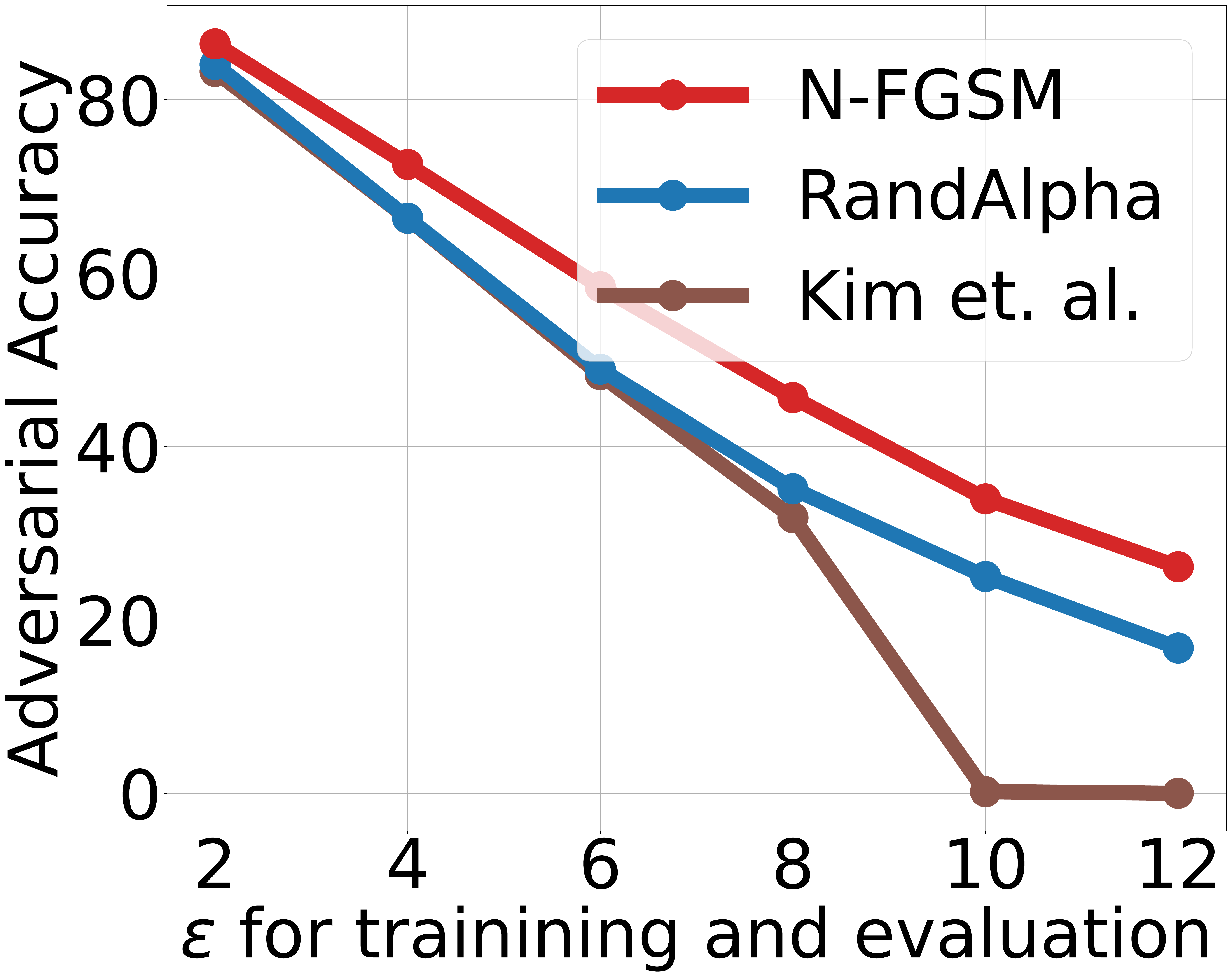}
\end{subfigure}
\caption{Comparison of \cite{AAAI} with RandomAlpha, our baseline where we multiply the RS-FGSM perturbation by a scalar uniformly sampled in $[0, 1]$. We present results on CIFAR-10 (Left), CIFAR-100 (Middle) and SVHN (Right) with PreActResNet18.}
\label{figure:rand_alpha_all}
% \vspace{-5pt}
\end{figure}

\clearpage

\section{Further visualizations of adversarial perturbations and gradients}
In this section we present an extension of \cref{figure:grad_norm_delta_rank} with further examples. As observed in the main paper, early in training adversarial perturbations ($\delta$) and gradients are consistent across epochs, however, after CO they become hard to interpret. Note that although we label rows as either pre-CO or post-CO we only observe CO for FGSM and RS-FGSM. Both PGD-10 and N-FGSM obtain robust models as shown in detail in the paper.

\vspace {10pt}

\begin{figure}[h]
\centering
\begin{subfigure}[b]{0.99\linewidth}
\includegraphics[width=\linewidth]{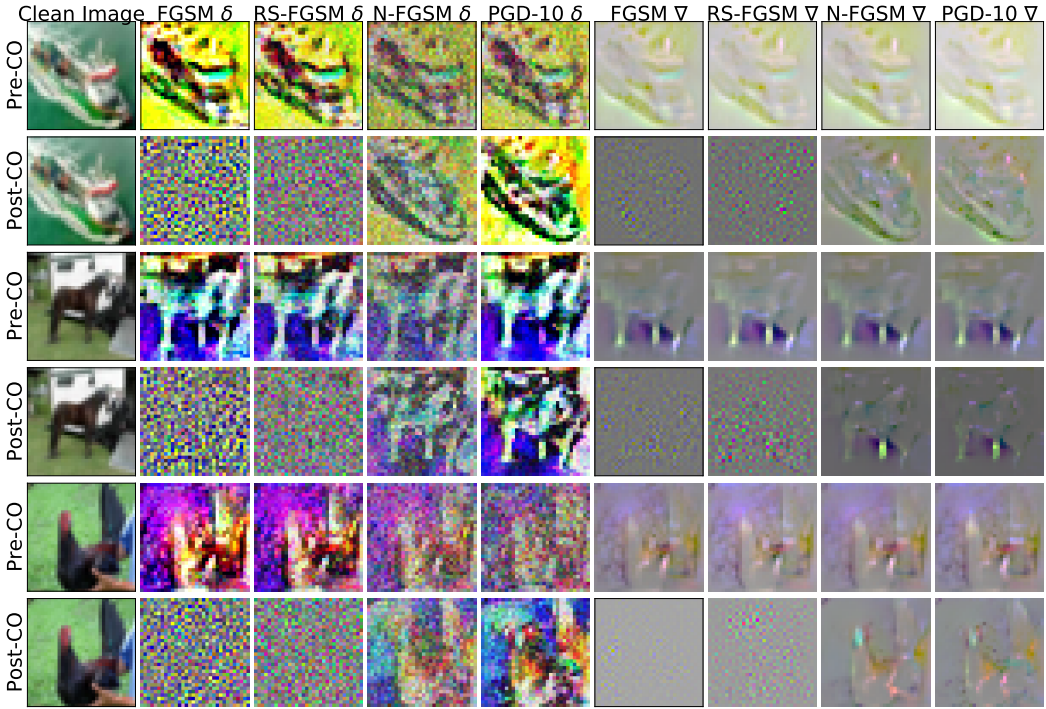}
\end{subfigure}
\caption{Visualization of adversarial perturbations ($\delta$'s) and gradients averaged across several epochs before CO (pre-CO) and after (post-CO). Note that only FGSM and RS-FGSM present CO, PGD-10 and N-FGSM do not. Post-CO, FGSM and RS-FGSM obtain $\delta$'s that are hard to interpret, idem for their gradients.
}
\label{figure:deltas_grads_viz_extended}
% \vspace{-5pt}
\end{figure}

\section{Robust evaluations with autoattack} \label{sec:autoattack}

\begin{table}[h]
\caption{Clean (top) and robust accuracy (bottom) for CIFAR-10 and PreacResNet18 evaluated with autoattack (AA) \cite{croce2020reliable}. We find the same trend as with PGD50-10.}
\vspace{2pt}
\label{table:autoattack}
\renewcommand{\arraystretch}{1.5}
\centering
\scriptsize{
\begin{tabular}{c|c|c|c|c|c|c|c|c}
\toprule
 &
  $\epsilon=\nicefrac{2}{255}$ &
  $\epsilon=\nicefrac{4}{255}$ &
  $\epsilon=\nicefrac{6}{255}$ &
  $\epsilon=\nicefrac{8}{255}$ &
  $\epsilon=\nicefrac{10}{255}$ &
  $\epsilon=\nicefrac{12}{255}$ &
  $\epsilon=\nicefrac{14}{255}$ &
  $\epsilon=\nicefrac{16}{255}$ \\ \hline \hline
FGSM &
  \begin{tabular}[c]{@{}l@{}}91.52 $\pm$ 0.08\\ 78.99 $\pm$ 0.19\end{tabular} &
  \begin{tabular}[c]{@{}l@{}}88.59 $\pm$ 0.08\\ 65.99 $\pm$ 0.24\end{tabular} &
  \begin{tabular}[c]{@{}l@{}}85.17 $\pm$ 0.03\\ 54.0 $\pm$ 0.32\end{tabular} &
  \begin{tabular}[c]{@{}l@{}}86.62 $\pm$ 0.08\\ 0.0 $\pm$ 0.0\end{tabular} &
  \begin{tabular}[c]{@{}l@{}}83.35 $\pm$ 2.03\\ 0.0 $\pm$ 0.0\end{tabular} &
  \begin{tabular}[c]{@{}l@{}}78.51 $\pm$ 3.3\\ 0.0 $\pm$ 0.0\end{tabular} &
  \begin{tabular}[c]{@{}l@{}}77.31 $\pm$ 1.9\\ 0.0 $\pm$ 0.0\end{tabular} &
  \begin{tabular}[c]{@{}l@{}}75.88 $\pm$ 1.49\\ 0.0 $\pm$ 0.0\end{tabular} \\ \hline
GradAlign &
  \begin{tabular}[c]{@{}l@{}}91.48 $\pm$ 0.08\\ 79.09 $\pm$ 0.21\end{tabular} &
  \begin{tabular}[c]{@{}l@{}}88.55 $\pm$ 0.18\\ 65.65 $\pm$ 0.13\end{tabular} &
  \begin{tabular}[c]{@{}l@{}}85.23 $\pm$ 0.22 \\ 53.99 $\pm$ 0.2\end{tabular} &
  \begin{tabular}[c]{@{}l@{}}81.69 $\pm$ 0.1\\ 44.11 $\pm$ 0.34\end{tabular} &
  \begin{tabular}[c]{@{}l@{}}77.73 $\pm$ 0.18\\ 35.72 $\pm$ 0.34\end{tabular} &
  \begin{tabular}[c]{@{}l@{}}73.46 $\pm$ 0.16\\ 28.66 $\pm$ 0.15\end{tabular} &
  \begin{tabular}[c]{@{}l@{}}67.87 $\pm$ 0.5\\ 22.93 $\pm$ 0.33\end{tabular} &
  \begin{tabular}[c]{@{}l@{}}61.66 $\pm$ 0.32\\ 18.4 $\pm$ 0.28\end{tabular} \\ \hline
N-FGSM &
  \begin{tabular}[c]{@{}l@{}}91.44 $\pm$ 0.09\\ 78.99 $\pm$ 0.17\end{tabular} &
  \begin{tabular}[c]{@{}l@{}}88.36 $\pm$ 0.04\\ 66.06 $\pm$ 0.25\end{tabular} &
  \begin{tabular}[c]{@{}l@{}}84.56 $\pm$ 0.12\\ 53.94 $\pm$ 0.3\end{tabular} &
  \begin{tabular}[c]{@{}l@{}}80.36 $\pm$ 0.03\\ 44.36 $\pm$ 0.26\end{tabular} &
  \begin{tabular}[c]{@{}l@{}}75.81 $\pm$ 0.22\\ 36.73 $\pm$ 0.27\end{tabular} &
  \begin{tabular}[c]{@{}l@{}}71.03 $\pm$ 0.16\\ 30.45 $\pm$ 0.2\end{tabular} &
  \begin{tabular}[c]{@{}l@{}}66.49 $\pm$ 0.36\\ 25.08 $\pm$ 0.15\end{tabular} &
  \begin{tabular}[c]{@{}l@{}}62.86 $\pm$ 0.88\\ 19.0 $\pm$ 1.08\end{tabular} \\ \hline
  
\end{tabular}
}
\end{table}

Following previous work, \cite{grad_align, fgsm} we have evaluated robustness with PGD50-10, i.e. PGD with 50 iterations and 10 restarts. However, for the sake of completeness, we also present results of robust accuracy evaluated with autoattack \cite{croce2020reliable}. In \cref{table:autoattack} we evaluate models adversarially trained with our proposed method N-FGSM, the baseline FGSM and GradAlign. We observe the same pattern as with the PGD50-10 attack, therefore we are conviced that our results are general.

\section{Catastrophic Overfitting outside the ResNet family}
 Previous work focusing on CO has only used architectures from the ResNet family. In \cref{table:vgg} we present results for adversarial training with a VGG-16 architecture \cite{vgg}. Similarly to other studied models we observe that FGSM leads to CO while N-FGSM is able to prevent it. However, it seems that FGSM presents CO for slighly larger $\epsilon$ radii, indicating that the architecture might play a role in CO. We consider investigating this further a promising direction of future work. 
 
\begin{table}[h]
\caption{Clean (top) and robust accuracy (bottom) for CIFAR-10 and VGG-16 \cite{vgg} evaluated with PGD50-10. We also observe CO for VGG architecture when trained with FGSM, moreover, N-FGSM is able to prevent CO. Interestingly, for VGG CO happens for slighly large $\epsilon$ values indicating that the architecture might play a role in CO.}
\vspace{2pt}
\label{table:vgg}
\renewcommand{\arraystretch}{1.5}
\centering
\scriptsize{
\begin{tabular}{c|c|c|c|c|c|c|c}
\toprule
 &
  $\epsilon=\nicefrac{4}{255}$ &
  $\epsilon=\nicefrac{6}{255}$ &
  $\epsilon=\nicefrac{8}{255}$ &
  $\epsilon=\nicefrac{10}{255}$ &
  $\epsilon=\nicefrac{12}{255}$ &
  $\epsilon=\nicefrac{14}{255}$ &
  $\epsilon=\nicefrac{16}{255}$ \\ \hline \hline
FGSM &
  \begin{tabular}[c]{@{}l@{}}85.04 $\pm$ 0.1\\ 62.94 $\pm$ 0.07\end{tabular} &
  \begin{tabular}[c]{@{}l@{}}79.34 $\pm$ 0.11 \\ 52.72 $\pm$ 0.12\end{tabular} &
  \begin{tabular}[c]{@{}l@{}}73.39 $\pm$ 0.0\\ 44.0 $\pm$ 0.02\end{tabular} &
  \begin{tabular}[c]{@{}l@{}}82.6 $\pm$ 0.0\\ 0.07 $\pm$ 0.0\end{tabular} &
  \begin{tabular}[c]{@{}l@{}}83.04 $\pm$ 0.0\\ 0.8 $\pm$ 0.0\end{tabular} &
  \begin{tabular}[c]{@{}l@{}}81.4 $\pm$ 0.0\\ 0.25 $\pm$ 0.0\end{tabular} &
  \begin{tabular}[c]{@{}l@{}}80.41 $\pm$ 0.21\\ 0.31 $\pm$ 0.15\end{tabular} \\ \hline
N-FGSM &
  \begin{tabular}[c]{@{}l@{}}84.53 $\pm$ 0.0\\ 63.32 $\pm$ 0.0\end{tabular} &
  \begin{tabular}[c]{@{}l@{}}79.42 $\pm$ 0.0\\ 53.0 $\pm$ 0.0\end{tabular} &
  \begin{tabular}[c]{@{}l@{}}72.01 $\pm$ 0.28\\ 44.3 $\pm$ 0.09\end{tabular} &
  \begin{tabular}[c]{@{}l@{}}66.81 $\pm$ 0.54\\ 38.25 $\pm$ 0.1\end{tabular} &
  \begin{tabular}[c]{@{}l@{}}61.19 $\pm$ 0.0\\ 33.36 $\pm$ 0.0\end{tabular} &
  \begin{tabular}[c]{@{}l@{}}56.97 $\pm$ 0.0\\ 29.23 $\pm$ 0.0\end{tabular} &
  \begin{tabular}[c]{@{}l@{}}53.1 $\pm$ 1.19\\ 25.72 $\pm$ 0.22\end{tabular} \\ \hline
\end{tabular}
}
\end{table}

\section{Further increasing the attack radii}
Following previous work \cite{grad_align} we have studied $\epsilon$ attack radii up to $epsilon = \nicefrac{16}{255}$. Indeed, the performance at these radius is already significantly degraded and thus it would not be very practical for most applications. However, to show that N-FGSM can prevent CO at even larger radii we test two additional radii, $\epsilon= \nicefrac{20}{255}$ and $\epsilon= \nicefrac{24}{255}$. In both cases N-FGSM is able to prevent CO. For $\epsilon= \nicefrac{20}{255}$ we obtain a clean accuracy of 51.63 $\pm$ 0.38  and robust of 20.62 $\pm$ 0.37 while for $\epsilon= \nicefrac{24}{255}$ we obtain a clean accuracy of 40.16 $\pm$ 0.96 and robust of 15.3 $\pm$ 1.49. We argue that it is of little interest to try even larger perturbations unless more effective methods to improve both the clean and robust performance are found.

\section{Testing other norms}

Following previous work, we have focused on the $\ell_\infty$ threat model. Although this is where works studying CO have mainly focused, we observe that CO is also present in other norms such as $\ell_1$ and $\ell_2$. Moreover, in both cases we observe that N-FGSM is able to prevent CO. Interestingly, the range of norms in which we observe CO is usually much higher than normally tested for these norms which would explain why the $\ell_\infty$ norm has been the main focus of study in related works.

\begin{figure}[h]
\centering
\begin{subfigure}[b]{.37\linewidth}
\includegraphics[width=\linewidth]{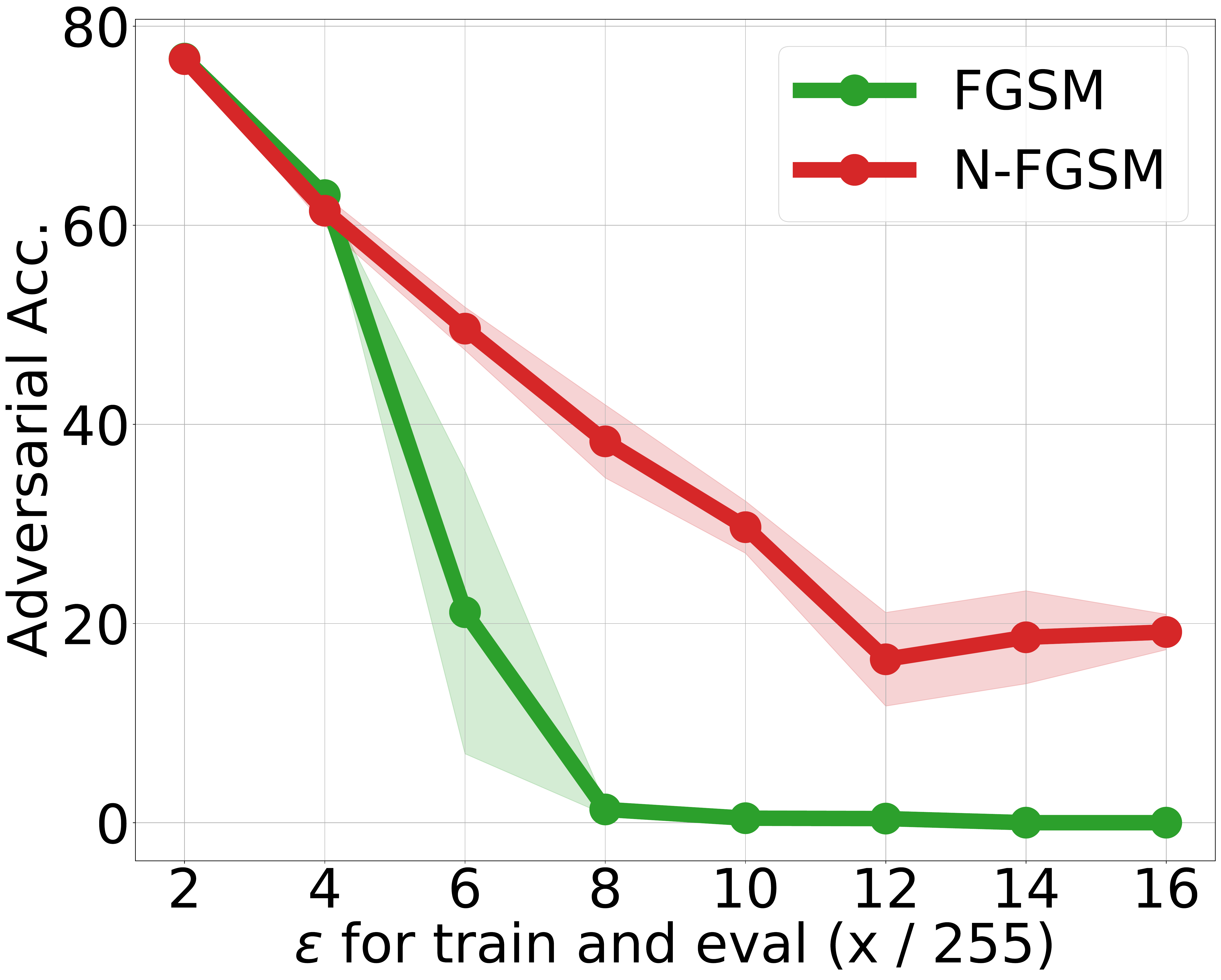}
\end{subfigure}
\hspace{14pt}
\begin{subfigure}[b]{.37\linewidth}
\includegraphics[width=\linewidth]{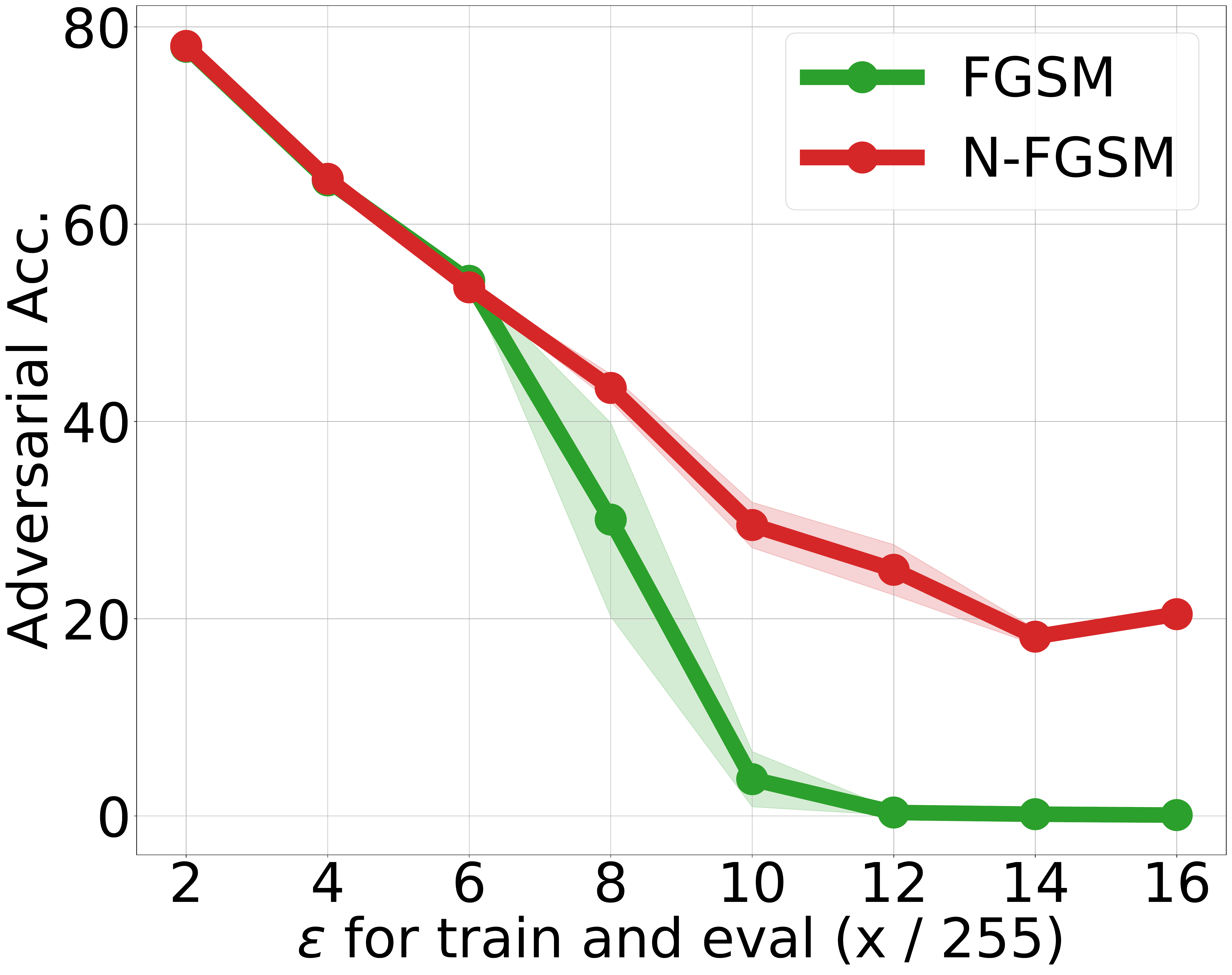}
\end{subfigure}
\caption{Robust accuracy after training with FGSM or N-FGSM using $\ell_1$ (left) and $\ell_2$ (right) perturbations. As observed for $\ell_\infty$ perturbations FGSM leads to CO, while N-FGSM is able to prevent it. Note that the strength of the perturbations is indicated to be equivalent to $\ell_\infty$ perturbations where all pixels have maximum magnitude i.e. $\epsilon=8/255$ indicates perturbations were restricted to an $\ell_p$ norm of a vector where all components are in $\{-\epsilon, +\epsilon\}$. Which would correspond to an $\ell_1$ norm of $n \epsilon$ and an $\ell_2$ norm of $\epsilon \sqrt n$ where $n$ indicates the dimensionality of the input.}
\label{figure:other_norms}
% \vspace{-5pt}
\end{figure}

\section{Combining N-FGSM with additional regularizers} \label{sec:app_gat}

In this section we present the results from \cref{sec:gat} where we combine N-FGSM with additional regularizers \citep{GAT, NuAT} that were proposed for single-step adversarial training to boost the performance. First, we try the proposed methods with the default settings (which use a version of RS-FGSM with Bernoulli noise) and observe they lead to CO for larger $\epsilon$. Then we compare them with N-FGSM + Regularizer where we apply their proposed regularizers to N-FGSM. If we apply GAT or NuAT regularizers to N-FGSM then we do not observe CO and usually a boost in performance. Results are presented in \cref{tab:gat_nuat}.

\begin{table}[h]
\caption{Clean accuracy (top) and PGD50-10 accuracy (bottom) of N-FGSM with additional regularizers introduced in GAT \citep{GAT} and NuAT \citep{NuAT}. Both GAT and NuAT present CO with their default training method. If we apply their proposed regularizers to N-FGSM we can avoid CO while achieving a boost in performance.\looseness=-1}
\label{tab:gat_nuat}
\vspace{2pt}
\renewcommand{\arraystretch}{1.5}
\centering
\scriptsize{
\begin{tabular}{c|c|c|c|c|c|c|c|c}
\toprule
 &
  $\epsilon=\nicefrac{2}{255}$ &
  $\epsilon=\nicefrac{4}{255}$ &
  $\epsilon=\nicefrac{6}{255}$ &
  $\epsilon=\nicefrac{8}{255}$ &
  $\epsilon=\nicefrac{10}{255}$ &
  $\epsilon=\nicefrac{12}{255}$ &
  $\epsilon=\nicefrac{14}{255}$ &
  $\epsilon=\nicefrac{16}{255}$ \\ \hline \hline
GAT &
  \begin{tabular}[c]{@{}l@{}}88.79 $\pm$ 0.15 \\ 80.04 $\pm$ 0.06\end{tabular} &
  \begin{tabular}[c]{@{}l@{}}84.35 $\pm$ 0.11 \\ 68.51 $\pm$ 0.08\end{tabular} &
  \begin{tabular}[c]{@{}l@{}}80.16 $\pm$ 0.15 \\ 59.16 $\pm$ 0.24\end{tabular} &
  \begin{tabular}[c]{@{}l@{}}76.75 $\pm$ 0.38\\ 50.98 $\pm$ 0.12\end{tabular} &
  \multicolumn{1}{l|}{\begin{tabular}[c]{@{}l@{}}73.71 $\pm$ 0.12 \\ 43.34 $\pm$ 0.23\end{tabular}} &
  \begin{tabular}[c]{@{}l@{}}80.44 $\pm$ 5.08\\ 14.93 $\pm$ 9.26\end{tabular} &
  \begin{tabular}[c]{@{}l@{}}83.9 $\pm$ 1.0\\ 2.33 $\pm$ 0.58\end{tabular} &
  \begin{tabular}[c]{@{}l@{}}82.17 $\pm$ 2.47\\ 1.25 $\pm$ 0.51\end{tabular} \\ \hline
N-FGSM+GAT &
  \begin{tabular}[c]{@{}l@{}}89.1 $\pm$ 0.08\\ 79.96 $\pm$ 0.21\end{tabular} &
  \begin{tabular}[c]{@{}l@{}}84.84 $\pm$ 0.05\\ 69.5 $\pm$ 0.18\end{tabular} &
  \begin{tabular}[c]{@{}l@{}}81.38 $\pm$ 0.07\\ 60.06 $\pm$ 0.09\end{tabular} &
  \begin{tabular}[c]{@{}l@{}}78.28 $\pm$ 0.04\\ 51.8 $\pm$ 0.34\end{tabular} &
  \multicolumn{1}{l|}{\begin{tabular}[c]{@{}l@{}}75.66 $\pm$ 0.35\\ 44.97 $\pm$ 0.07\end{tabular}} &
  \begin{tabular}[c]{@{}l@{}}73.56 $\pm$ 0.23\\ 38.71 $\pm$ 0.16\end{tabular} &
  \begin{tabular}[c]{@{}l@{}}70.84 $\pm$ 0.51\\ 32.71 $\pm$ 0.11\end{tabular} &
  \begin{tabular}[c]{@{}l@{}}65.48 $\pm$ 0.96\\ 27.87 $\pm$ 0.35\end{tabular} \\ \hline \hline
NuAT &
  \begin{tabular}[c]{@{}l@{}}87.81 $\pm$ 0.24\\ 79.49 $\pm$ 0.03\end{tabular} &
  \begin{tabular}[c]{@{}l@{}}82.9 $\pm$ 0.18\\ 67.77 $\pm$ 0.13\end{tabular} &
  \begin{tabular}[c]{@{}l@{}}78.06 $\pm$ 0.2\\ 57.93 $\pm$ 0.17\end{tabular} &
  \begin{tabular}[c]{@{}l@{}}73.22 $\pm$ 0.34\\ 50.1 $\pm$ 0.33\end{tabular} &
  \multicolumn{1}{l|}{\begin{tabular}[c]{@{}l@{}}71.08 $\pm$ 4.87\\ 34.35 $\pm$ 9.0\end{tabular}} &
  \begin{tabular}[c]{@{}l@{}}74.38 $\pm$ 7.32\\ 17.54 $\pm$ 8.82\end{tabular} &
  \begin{tabular}[c]{@{}l@{}}78.5 $\pm$ 1.54\\ 6.6 $\pm$ 0.77\end{tabular} &
  \begin{tabular}[c]{@{}l@{}}80.1 $\pm$ 1.08\\ 3.29 $\pm$ 0.87\end{tabular} \\ \hline
NGFSM+NuAT &
  \begin{tabular}[c]{@{}l@{}}87.92 $\pm$ 0.0\\ 79.52 $\pm$ 0.0\end{tabular} &
  \begin{tabular}[c]{@{}l@{}}83.54 $\pm$ 0.0\\ 68.36 $\pm$ 0.0\end{tabular} &
  \begin{tabular}[c]{@{}l@{}}78.86 $\pm$ 0.25\\ 58.88 $\pm$ 0.16\end{tabular} &
  \begin{tabular}[c]{@{}l@{}}74.61 $\pm$ 0.34\\ 51.12 $\pm$ 0.2\end{tabular} &
  \multicolumn{1}{l|}{\begin{tabular}[c]{@{}l@{}}70.37 $\pm$ 0.12\\ 44.62 $\pm$ 0.38\end{tabular}} &
  \begin{tabular}[c]{@{}l@{}}65.56 $\pm$ 0.19\\ 38.24 $\pm$ 0.38\end{tabular} &
  \begin{tabular}[c]{@{}l@{}}60.76 $\pm$ 0.74\\ 32.85 $\pm$ 0.58\end{tabular} &
  \begin{tabular}[c]{@{}l@{}}52.79 $\pm$ 0.66\\ 29.19 $\pm$ 0.35\end{tabular} \\ \hline \hline
N-FGSM &
\begin{tabular}[c]{@{}c@{}}91.48 $\pm$ 0.17 \\ {79.43 $\pm$ 0.21}\end{tabular}  &
\begin{tabular}[c]{@{}c@{}}88.44 $\pm$ 0.09  \\ {67.09 $\pm$ 0.31}\end{tabular} &
\begin{tabular}[c]{@{}c@{}}84.72 $\pm$ 0.04  \\ {56.62 $\pm$ 0.26}\end{tabular} &
\begin{tabular}[c]{@{}c@{}}80.58 $\pm$ 0.22  \\ {48.12 $\pm$ 0.07}\end{tabular} &
\begin{tabular}[c]{@{}c@{}}75.98 $\pm$ 0.1  \\ {41.56 $\pm$ 0.16}\end{tabular}  &
\begin{tabular}[c]{@{}c@{}}71.46 $\pm$ 0.14 \\  {36.43 $\pm$ 0.16}\end{tabular}  &
\begin{tabular}[c]{@{}c@{}}67.11 $\pm$ 0.37  \\ {32.11 $\pm$ 0.2}\end{tabular}  &
\begin{tabular}[c]{@{}c@{}}63.18 $\pm$ 0.49 \\  {27.67 $\pm$ 0.93}\end{tabular}  \\ \hline

\end{tabular}
}
\end{table}

\section{Imagenet experimental details} \label{sec:imagenet_details}
For our experiments on Imagenet we mainly follow the settings from \cite{RS-FGSM}. However, for simplicity we did not do image resizing which requires storing two additional Imagenet datasets. More importantly, we found that the learning rate schedule suggested in \cite{RS-FGSM} was not optimal for N-FGSM. The schedule suggested in \cite{RS-FGSM} follows three different stages in which the learning increases or decreases linearly for some iterations. In particular in the first stage, the learning rate has an initial warm-up where it increases linearly from 0.0 to 0.4 during the first epoch and then decreases linearly to 0.04 during the next 5 epochs. As a lucky coincidence when debugging, we modified this initial stage such that we preserved the initial increase to 0.4 for the first epoch, but then we directly jumped to a learning rate of 0.04 which remained constant for the next 5 epochs. For phase 2 and 3 both schedules remained the same. First decreasing from 0.04 to 0.004 for epoch 6 to 12 and finally from 0.004 to 0.0004 for epoch 12 to 15. This small change made N-FGSM improve both in clean and robust accuracy for $\epsilon=\nicefrac{4}{255},\ \nicefrac{6}{255}$ and these are the numbers reported. This indicates that further tuning the learning rate schedule might be an effective way to improve performance and even help prevent CO, however, due to the computational demands of ImageNet adversarial training we leave it for future work. To be thorough we also trained RS-FGSM and FGSM with the modified schedule and found that neither of them benefit from it. Regarding N-FGSM hyperparameters, for $\epsilon=\nicefrac{2}{255}$ we used $\alpha=\nicefrac{2}{255}$ and $k = 1$; for $\epsilon=\nicefrac{4}{255}$ we used $\alpha=\nicefrac{4}{255}$ and $k = 1$; and for $\epsilon=\nicefrac{6}{255}$ we also used $\alpha=\nicefrac{4}{255}$ and $k = 1$.

\section{Visualization of the loss surface}
\label{sec:viz-loss}
In this section we present a visualization of the loss surface. We adapted the code from \cite{AAAI} to analyse the shape of the loss surface at the end of training for different methods. \cite{AAAI} reported that after adversarial training CO, the loss surface would become non-linear. In particular, they found that the FGSM perturbation seems to be misguided by local maxima very close to the clean image that result in ineffective attacks. We note this was already reported by \cite{tramer2018ensemble} which proposed to perform a random step to \textit{escape} those maxima. We argue that adding noise to the random step, when properly implemented, actually prevents those maxima to appear in the first place. 

\vspace{10pt} 

\begin{figure}[h]
\centering
\begin{subfigure}[b]{.44\linewidth}
\includegraphics[width=\linewidth]{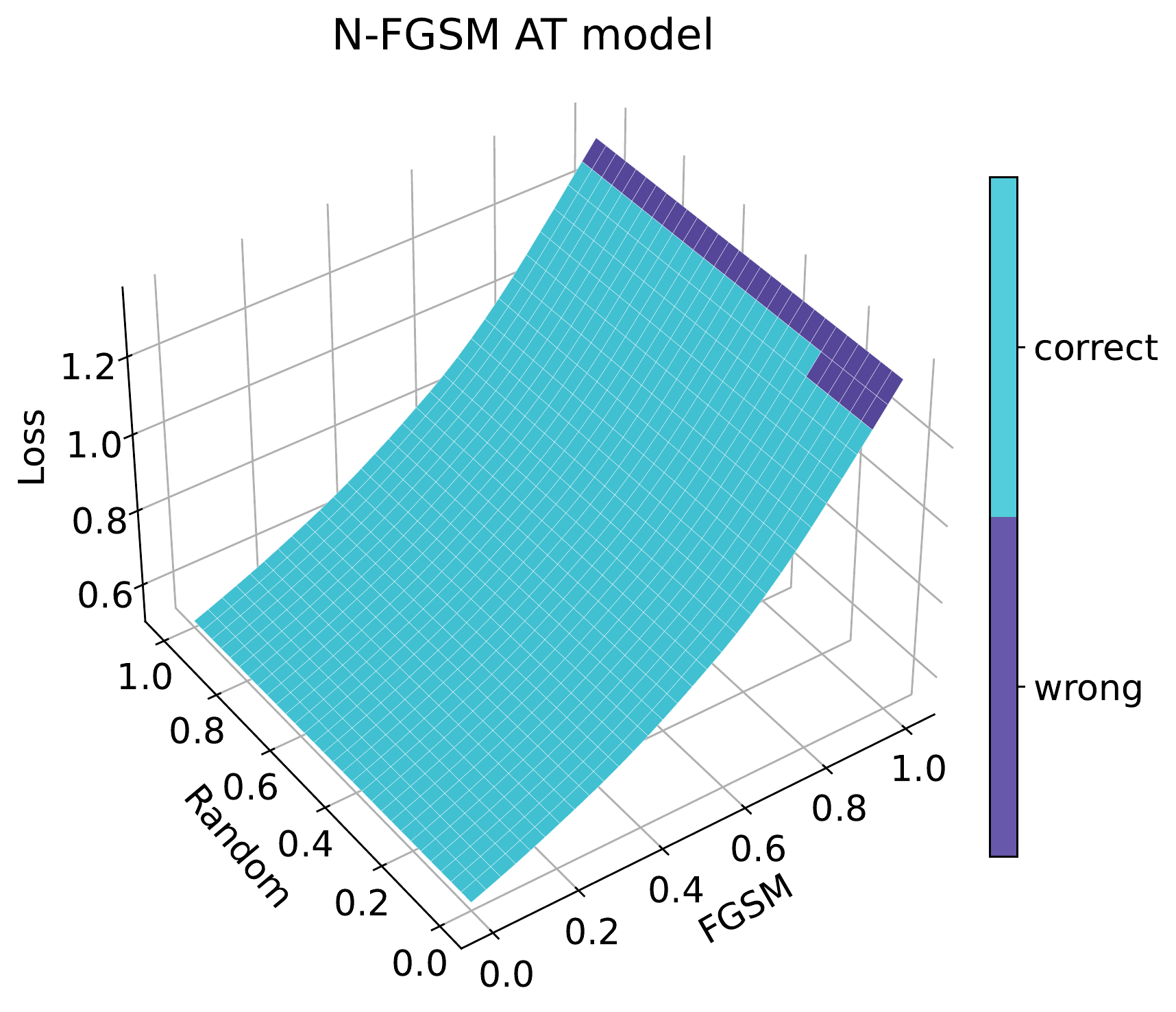}
\end{subfigure}
\begin{subfigure}[b]{.44\linewidth}
\includegraphics[width=\linewidth]{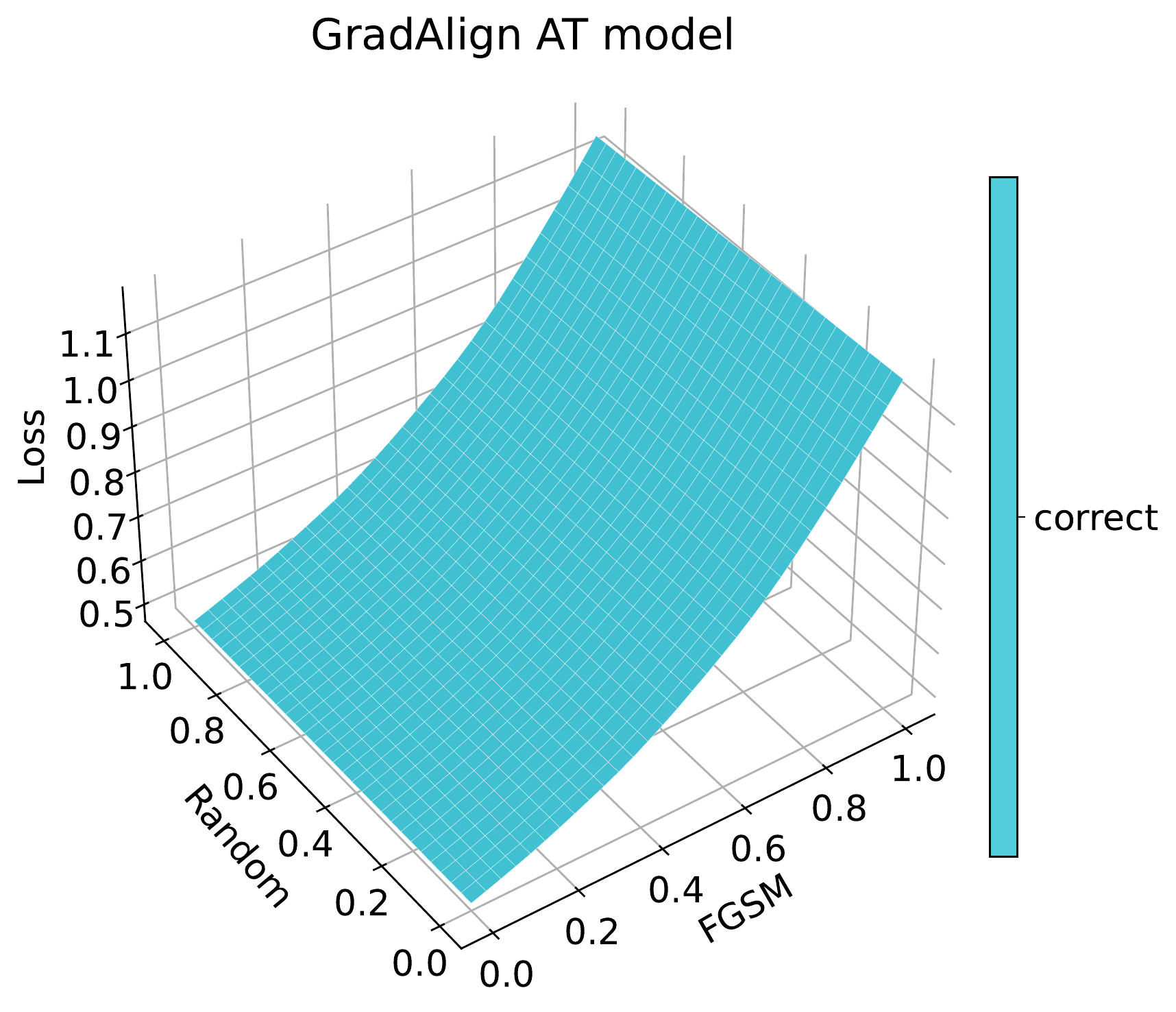}
\end{subfigure}
\vspace{20pt}

\centering
\begin{subfigure}[b]{.44\linewidth}
\includegraphics[width=\linewidth]{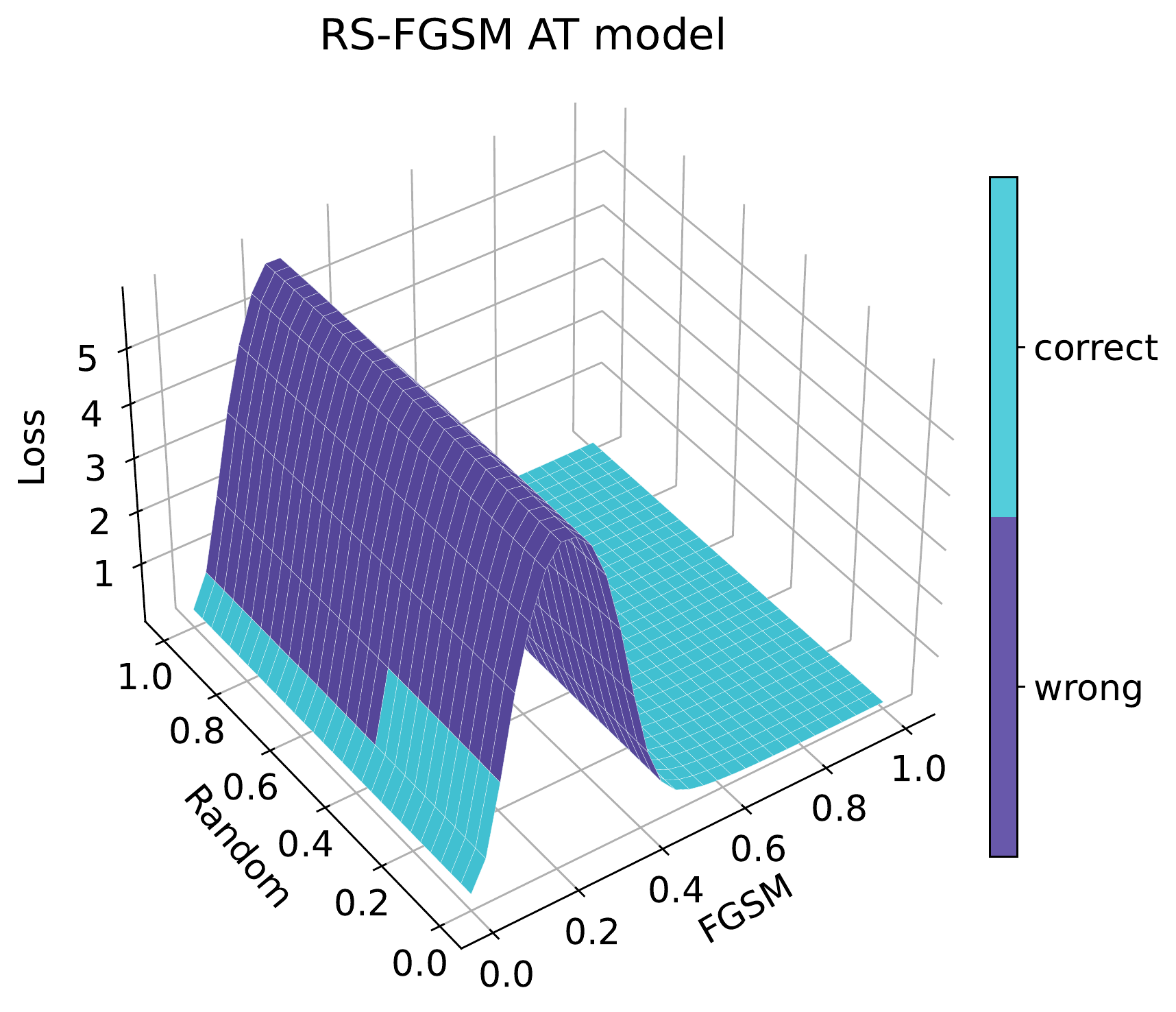}
\end{subfigure}
\begin{subfigure}[b]{.44\linewidth}
\includegraphics[width=\linewidth]{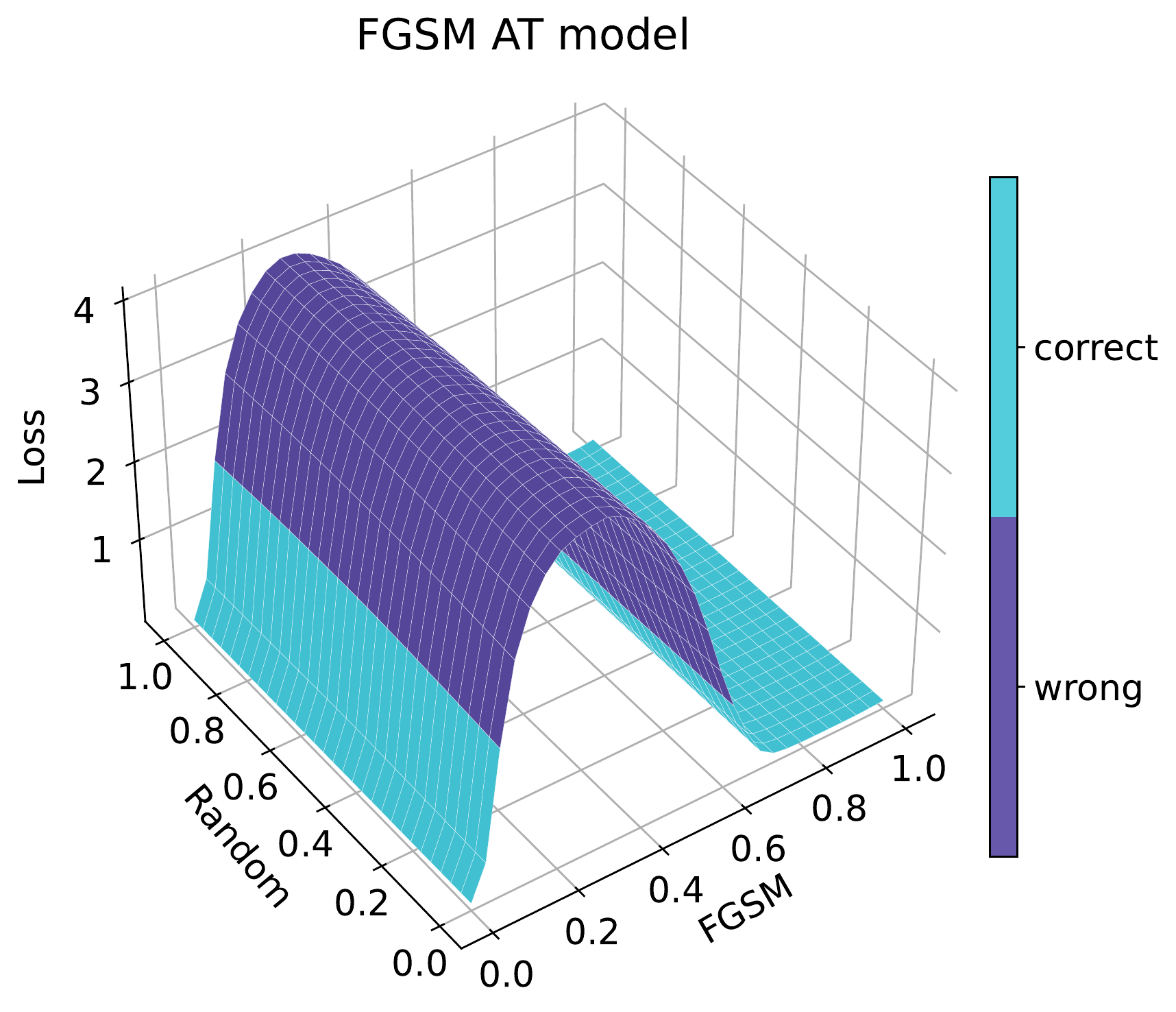}
\end{subfigure}

\caption{Visualization of the loss surface for models trained using different methods. Given a clean sample from the test set in coordinate $(0,0)$, we compute the FGSM perturbation and evaluate the loss on the subspace generated by the FGSM perturbation direction and a random direction. That is, we evaluate $x_{\textrm{clean}} + t_1\cdot \delta_{\textrm{FGSM}} + t_2 \cdot \delta_{\textrm{random}}$, where $t_1,\ t_2 \in [0, 1]$. Note that FGSM and RS-FGSM both have CO and the final models present a highly non-linear loss surface, on the other hand, both N-FGSM and GradAlign produce final models with a very linear loss surface which is key to obtain meaningful perturbations.}
\label{figure:loss_surface}
% \vspace{-5pt}
\end{figure}

\clearpage

\section{Magnitude of N-FGSM perturbations} \label{sec:l2_norm}

\begin{lemma}[Expected perturbation] \label{lemma_1}
Consider the N-FGSM perturbation as defined in~\Cref{equation:n-fgsm}
\[\delta_{\textrm{N-FGSM}} =  \eta + \alpha \cdot \textrm{sign} \left( \nabla_x \ell(f(x + \eta), y)  \right), \ \ \textrm{where} \ \ \eta \sim \Omega.\]

Let the distribution \(\Omega\) be the uniform distribution
\ \(\mathcal{U}\left(\left[-k\epsilon,k\epsilon\right]^d\right)\) and  \(\alpha > 0\). Then,

\[\mathbb{E}_\eta\left[\|\delta_{\textrm{N-FGSM}}\||^2_2\right] = d\left(\dfrac{k^2\epsilon^2}{3} + \alpha^2 \right) \quad \textrm{and} \quad \mathbb{E}_\eta\left[\|\delta_{\textrm{N-FGSM}}\||_2\right]\le \sqrt{d\left(\dfrac{k^2\epsilon^2}{3} + \alpha^2 \right)}\]
\end{lemma}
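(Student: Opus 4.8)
The plan is to expand the squared Euclidean norm coordinate-wise and exploit that the sign vector has unit-magnitude entries while the uniform noise is zero-mean and symmetric. Writing $s = \textrm{sign}\left(\nabla_x \ell(f(x+\eta),y)\right) \in \{-1,+1\}^d$ (assuming the gradient has no zero entries almost surely, so each $s_i^2 = 1$), we have $\delta_{\textrm{N-FGSM}} = \eta + \alpha s$ and hence
\[
\|\delta_{\textrm{N-FGSM}}\|_2^2 = \sum_{i=1}^d (\eta_i + \alpha s_i)^2 = \sum_{i=1}^d \eta_i^2 + 2\alpha \sum_{i=1}^d \eta_i s_i + \alpha^2 \sum_{i=1}^d s_i^2.
\]

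First I would evaluate the three pieces separately under $\mathbb{E}_\eta[\cdot]$. The first sum follows from the standard second moment of a uniform variable: since each $\eta_i \sim \mathcal{U}([-k\epsilon,k\epsilon])$ is zero-mean, $\mathbb{E}[\eta_i^2] = \textrm{Var}(\eta_i) = \tfrac{(2k\epsilon)^2}{12} = \tfrac{k^2\epsilon^2}{3}$, contributing $d\cdot\tfrac{k^2\epsilon^2}{3}$. The third sum is immediate because $s_i^2 = 1$ for every coordinate, contributing exactly $d\alpha^2$ with no expectation required.

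The crux — and the step I expect to be the main obstacle — is the cross term $2\alpha \sum_i \mathbb{E}[\eta_i s_i]$. Here I would argue it vanishes: treating the sign direction as fixed over the noise ball (natural under the local-linearity regime in which these single-step methods operate), $s_i$ is independent of $\eta_i$, so $\mathbb{E}[\eta_i s_i] = s_i\,\mathbb{E}[\eta_i] = 0$ by the symmetry of the uniform distribution about the origin. Combining the three contributions yields the claimed identity $\mathbb{E}_\eta[\|\delta_{\textrm{N-FGSM}}\|_2^2] = d\left(\tfrac{k^2\epsilon^2}{3} + \alpha^2\right)$.

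Finally, the bound on the first moment follows from Jensen's inequality applied to the concave square-root map:
\[
\mathbb{E}_\eta[\|\delta_{\textrm{N-FGSM}}\|_2] = \mathbb{E}_\eta\!\left[\sqrt{\|\delta_{\textrm{N-FGSM}}\|_2^2}\right] \le \sqrt{\mathbb{E}_\eta[\|\delta_{\textrm{N-FGSM}}\|_2^2]} = \sqrt{d\left(\tfrac{k^2\epsilon^2}{3} + \alpha^2\right)}.
\]
The only genuinely delicate point is the cross-term cancellation: if one does not treat the sign as $\eta$-independent, then $\mathbb{E}[\eta_i s_i]$ need not vanish exactly, since $s_i$ is a discontinuous functional of $\eta$ through the gradient. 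I would therefore flag that the clean identity relies on this (mild and standard) assumption, while the Jensen inequality for the first moment holds unconditionally once the second-moment value is fixed.
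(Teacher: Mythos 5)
Your proof is correct and reaches the same result, but it organizes the computation differently from the paper and, in doing so, surfaces an assumption the paper leaves implicit. You expand $\|\eta+\alpha s\|_2^2$ directly into $\sum_i\eta_i^2+2\alpha\sum_i\eta_i s_i+\alpha^2\sum_i s_i^2$ and treat the three pieces separately; the paper instead conditions each coordinate on the event $\mathrm{sign}(\nabla(\eta)_i)=\pm1$, replaces the conditional expectation $\mathbb{E}\left[(\eta_i\pm\alpha)^2\mid\mathrm{sign}(\nabla(\eta)_i)=\pm1\right]$ by the unconditional integral $\frac{1}{2k\epsilon}\int_{-k\epsilon}^{k\epsilon}(\eta_i\pm\alpha)^2\,d\eta_i$, shifts variables so the two integrals coincide, and sums the probabilities to one. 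The two routes are algebraically equivalent, but the paper's substitution of the unconditional uniform density for the conditional one is precisely the independence-of-sign-and-noise step that you isolate as the crux: if the conditional law of $\eta_i$ given the sign event were not uniform on $[-k\epsilon,k\epsilon]$, the paper's integral identity would fail for the same reason your cross term $\mathbb{E}[\eta_i s_i]$ would fail to vanish (the sign is a functional of the full noise vector $\eta$, so exact independence is not automatic). Your version therefore buys transparency -- the needed hypothesis, namely that the sign is effectively independent of the noise or at least that $\mathbb{E}\left[\eta_i\,\mathrm{sign}(\nabla(\eta)_i)\right]=0$, is stated and flagged rather than buried inside a conditioning step -- at no cost in length; the paper's conditioning and change-of-variables route is more laborious and hides the same gap. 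The Jensen step for the first moment is identical in both arguments.
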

\begin{proof}
By Jensen's inequality, we have
    \[\mathbb{E}_\eta\left[\|\delta_{\textrm{N-FGSM}}\|_2\right]\le \sqrt{\mathbb{E}_\eta\left[\|\delta_{\textrm{N-FGSM}}\|_2^2\right]}\]
Then let us consider the term \(\mathbb{E}_\eta\left[\|\delta_{\textrm{N-FGSM}}\|_2^2\right]\) and use the shorthand \(\nabla(\eta)_i= \left( \nabla_x \ell(f(x + \eta), y)  \right)_i\).

\begin{align*}
    \mathbb{E}_\eta\left[\|\delta_{\textrm{N-FGSM}}\|_2^2\right] =& \mathbb{E}_\eta\| \eta + \alpha \cdot \textrm{sign} \left( \nabla_x \ell(f(x + \eta), y)  \right)\|_2^2\\
    =&\mathbb{E}_{\eta}\left[\sum_{i=1}^d \left( \eta_i + \alpha \cdot \textrm{sign}(\nabla(\eta)_i )\right)^2\right]\\
    =&\sum_{i=1}^d \mathbb{E}_{\eta}\left[\left( \eta_i + \alpha \cdot \textrm{sign} (\nabla(\eta)_i ) \right)^2\right]\\
    =&\sum_{i=1}^d\mathbb{E}_{\eta}\left[ \left( \eta_i + \alpha \cdot \textrm{sign} (\nabla(\eta)_i )\right)^2 {\vert\Large} \textrm{sign} (\nabla(\eta)_i )  =1\right]\mathbb{P}_\eta\left[ \textrm{sign}(\nabla(\eta)_i ) =1\right]\\
    &+\sum_{i=1}^d \mathbb{E}_{\eta}\left[\left( \eta_i + \alpha \cdot \textrm{sign} (\nabla(\eta)_i )\right)^2 {\vert\Large} \textrm{sign} (\nabla(\eta)_i )  =-1\right]\mathbb{P}_\eta\left[ \textrm{sign}(\nabla(\eta)_i ) =-1\right]\\
    =&\sum_{i=1}^d\dfrac{1}{2k\epsilon}\int_{-k\epsilon}^{k\epsilon} \left( \eta_i + \alpha\right)^2 d\eta_i\cdot \mathbb{P}_\eta\left[ \textrm{sign}(\nabla(\eta)_i ) =1\right]\\
    &+ \dfrac{1}{2k\epsilon}\sum_{i=1}^d\int_{-k\epsilon}^{k\epsilon}\left( \eta_i - \alpha\right)^2 d\eta_i\cdot\mathbb{P}_\eta\left[ \textrm{sign}(\nabla(\eta)_i ) =-1\right] \\
    % =&\sum_{i=1}^d\dfrac{1}{2k\epsilon}\int_{\alpha-k\epsilon}^{\alpha+k\epsilon}z^2 dz\cdot \mathbb{P}_\eta\left[ \textrm{sign}(\nabla(\eta)_i ) =1\right] + \dfrac{1}{2\epsilon}\sum_{i=1}^d\int_{-\alpha-k\epsilon}^{-\alpha+k\epsilon}z^2 dz\cdot\mathbb{P}_\eta\left[ \textrm{sign}(\nabla(\eta)_i ) =-1\right] \\ 
    =&\sum_{i=1}^d\dfrac{1}{2k\epsilon}\int_{\alpha-k\epsilon}^{\alpha+k\epsilon}z^2 dz\cdot \mathbb{P}_\eta\left[ \textrm{sign}(\nabla(\eta)_i ) =1\right] \\ &+\dfrac{1}{2k\epsilon}\sum_{i=1}^d\int_{-\alpha-k\epsilon}^{-\alpha+k\epsilon}z^2 dz\cdot\mathbb{P}_\eta\left[ \textrm{sign}(\nabla(\eta)_i ) =-1\right] \\ 
    =&\sum_{i=1}^d\dfrac{1}{2k\epsilon}\int_{\alpha-k\epsilon}^{\alpha+k\epsilon}z^2 dz\cdot \mathbb{P}_\eta\left[ \textrm{sign}(\nabla(\eta)_i ) =1\right] \\ &+\dfrac{1}{2k\epsilon}\sum_{i=1}^d\int_{\alpha-k\epsilon}^{\alpha+k\epsilon}z^2 dz\cdot\mathbb{P}_\eta\left[ \textrm{sign}(\nabla(\eta)_i ) =-1\right] \\
    =& \dfrac{1}{2k\epsilon}\int_{\alpha-k\epsilon}^{\alpha+k\epsilon}z^2 dz\sum_{i=1}^d\left(\mathbb{P}_\eta\left[ \textrm{sign}(\nabla(\eta)_i ) =1\right]+\mathbb{P}_\eta\left[ \textrm{sign}(\nabla(\eta)_i ) =-1\right]\right)\\
    =&\dfrac{d}{6k\epsilon}\left[(\alpha+k\epsilon)^3 - (\alpha-k\epsilon)^3\right] = \dfrac{dk^2\epsilon^2}{3}+d\alpha^2
\end{align*}

\clearpage

Therefore,

\[\mathbb{E}_\eta\left[\|\delta_{\textrm{N-FGSM}}\||_2\right]\le \sqrt{d\left(\dfrac{k^2\epsilon^2}{3} + \alpha^2 \right).}\]
\end{proof}

% We state again~\Cref{theorem:l2_norm} and present the proof.
\begin{theorem} \label{theorem:l2_norm}
Let $\delta_{\text{N-FGSM}}$ be our proposed single-step method defined by~\Cref{equation:n-fgsm}, $\delta_{\text{FGSM}}$ be the FGSM method~\citep{fgsm} and $\delta_{\text{RS-FGSM}}$ be the RS-FGSM method~\citep{RS-FGSM}. Then, with default hyperparameter values and for any $\epsilon > 0$, we have that
$$\mathbb{E}_\eta\left[\|\delta_{\text{N-FGSM}}\|_2^2\right] > \mathbb{E}_\eta\left[\|\delta_{\text{FGSM}}\|_2^2\right] > \mathbb{E}_\eta\left[\|\delta_{\text{RS-FGSM}}\|_2^2\right].$$
\end{theorem}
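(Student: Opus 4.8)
The plan is to evaluate each of the three expected squared norms explicitly under the stated default hyperparameters and then compare the resulting closed forms, all of which will turn out to be constant multiples of $d\epsilon^2$. The N-FGSM term is handed to us directly by \Cref{lemma_1}: substituting the default noise magnitude, so that the noise spans $[-2\epsilon,2\epsilon]^d$ (i.e.\ $k=2$ in the lemma's parametrization $[-k\epsilon,k\epsilon]^d$), together with step size $\alpha=\epsilon$, into $d(k^2\epsilon^2/3+\alpha^2)$ yields $\mathbb{E}_\eta[\|\delta_{\text{N-FGSM}}\|_2^2]=\tfrac{7}{3}d\epsilon^2$. The FGSM term is immediate and deterministic: with step size $\alpha=\epsilon$, every coordinate of $\alpha\cdot\textrm{sign}(\nabla)$ has magnitude exactly $\epsilon$, so $\mathbb{E}_\eta[\|\delta_{\text{FGSM}}\|_2^2]=d\epsilon^2$ irrespective of $\eta$.

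First I would handle the remaining and only nontrivial quantity, $\mathbb{E}_\eta[\|\delta_{\text{RS-FGSM}}\|_2^2]$. Here $\eta\sim\mathcal{U}([-\epsilon,\epsilon]^d)$, the step size is $\alpha=1.25\epsilon$, and the perturbation is projected coordinatewise onto $[-\epsilon,\epsilon]$. Since the coordinates are independent and the clip acts separately on each, it suffices to compute the expected square of a single coordinate and multiply by $d$. Conditioning on the sign $s_i=\textrm{sign}(\nabla_i)\in\{-1,+1\}$, the pre-clip value is $\eta_i+\alpha s_i$; by the symmetry $\eta_i\mapsto-\eta_i$ the two sign cases contribute the same second moment, so only the $s_i=+1$ branch is needed. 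There the clip splits the uniform integral into a region where $\eta_i+\alpha\le\epsilon$ (value unchanged) and a saturated region where the value is pinned to $\epsilon$; after the substitution $z=\eta_i+\alpha$, integrating $z^2$ over the unsaturated part and the constant $\epsilon^2$ over the saturated part gives $\tfrac{101}{128}\epsilon^2$ per coordinate, hence $\mathbb{E}_\eta[\|\delta_{\text{RS-FGSM}}\|_2^2]=\tfrac{101}{128}d\epsilon^2$.

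With the three values $\tfrac{7}{3}d\epsilon^2$, $d\epsilon^2$, and $\tfrac{101}{128}d\epsilon^2$ in hand, the claim reduces to the numerical comparison $\tfrac{7}{3}>1>\tfrac{101}{128}$, which holds for every $\epsilon>0$ and every dimension since the common factor $d\epsilon^2$ is strictly positive. The main obstacle is the RS-FGSM computation: one must correctly locate the clipping threshold as a function of $\alpha$ and $\epsilon$, track precisely which portion of the uniform mass saturates, and avoid sign or bookkeeping errors in the piecewise integral; everything else follows from \Cref{lemma_1} and a one-line deterministic evaluation. A minor but important point to state carefully is the translation between the lemma's noise parametrization $[-k\epsilon,k\epsilon]^d$ and the algorithm's $[-k,k]^d$ with default $k=2\epsilon$, so that the correct constant is inserted into the N-FGSM expression.
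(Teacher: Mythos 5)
Your proposal is correct and follows essentially the same route as the paper: evaluate all three expected squared norms in closed form under the default hyperparameters ($k=2\epsilon$, $\alpha=\epsilon$ for N-FGSM via \Cref{lemma_1}; $\alpha=\nicefrac{5\epsilon}{4}$ for RS-FGSM) and compare the constants $\tfrac{7}{3} > 1 > \tfrac{101}{128}$. The only difference is that the paper simply quotes the RS-FGSM formula $d\bigl(-\tfrac{1}{6\epsilon}\alpha^3+\tfrac{1}{2}\alpha^2+\tfrac{1}{3}\epsilon^2\bigr)$ from \citet{grad_align}, whereas you rederive it via the per-coordinate clipped integral; your value $\tfrac{101}{128}d\epsilon^2$ agrees with that formula at $\alpha=\nicefrac{5\epsilon}{4}$, so the derivation checks out (with the same implicit symmetry/conditioning assumption on $\mathrm{sign}(\nabla)$ that the paper itself uses in the proof of \Cref{lemma_1}).
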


\begin{proof}

From Lemma~{\color{red}\ref{lemma_1}} we have that 
\[\mathbb{E}_\eta\left[\|\delta_{\textrm{N-FGSM}}\||^2_2\right] = d\left(\dfrac{k^2\epsilon^2}{3} + \alpha^2 \right).\]

On the other hand, \cite{grad_align} showed that 
\[ \mathbb{E}_\eta\left[\|\delta_{\text{RS-FGSM}}\|_2^2\right] = d \left(-\frac{1}{6\epsilon}\alpha^3 + \frac{1}{2}\alpha^2 + \frac{1}{3}\epsilon^2\right).\] 

Finally, we note that
\[ \mathbb{E}_\eta\left[\|\delta_{\text{FGSM}}\|_2^2\right] = \|\delta_{\text{FGSM}}\|_2^2 = d\epsilon^2. \]

The default hyperparameters for N-FGSM are $k=2,\ \ \alpha=\epsilon$ and RS-FGSM uses $\alpha=5\epsilon/4$. With these hyperparameters and any $\epsilon > 0$ we have
\[\mathbb{E}_\eta\left[\|\delta_{\textrm{N-FGSM}}\||^2_2\right] = \dfrac{7}{3} d\epsilon^2 > \mathbb{E}_\eta\left[\|\delta_{\textrm{FGSM}}\||^2_2\right] = d \epsilon^2 > \mathbb{E}_\eta\left[\|\delta_{\textrm{RS-FGSM}}\||^2_2\right] = \dfrac{101}{128} d\epsilon^2 \]

\end{proof}

In Lemma~{\color{red}\ref{lemma_1}} we compute the expected value of the squared $\ell_2$ norm of N-FGSM perturbations and by Jensen's inequality we obtain an upper bound for the expected $\ell_2$ norm of N-FGSM perturbations. However, obtaining the exact expected magnitude is more complex. To compliment our analytic results, we approximate the $\ell_2$ norm of FGSM, RS-FGSM and N-FGSM via Monte Carlo sampling. Results are presented in~\cref{figure:magnitude_perturbations}. We observe that the empirical estimations are very close to the analytical upper bounds and that indeed, N-FGSM has a magnitude significantly above that of FGSM or RS-FGSM.

\begin{figure}[h]
\centering
\begin{subfigure}[b]{.45\linewidth}
\includegraphics[width=\linewidth]{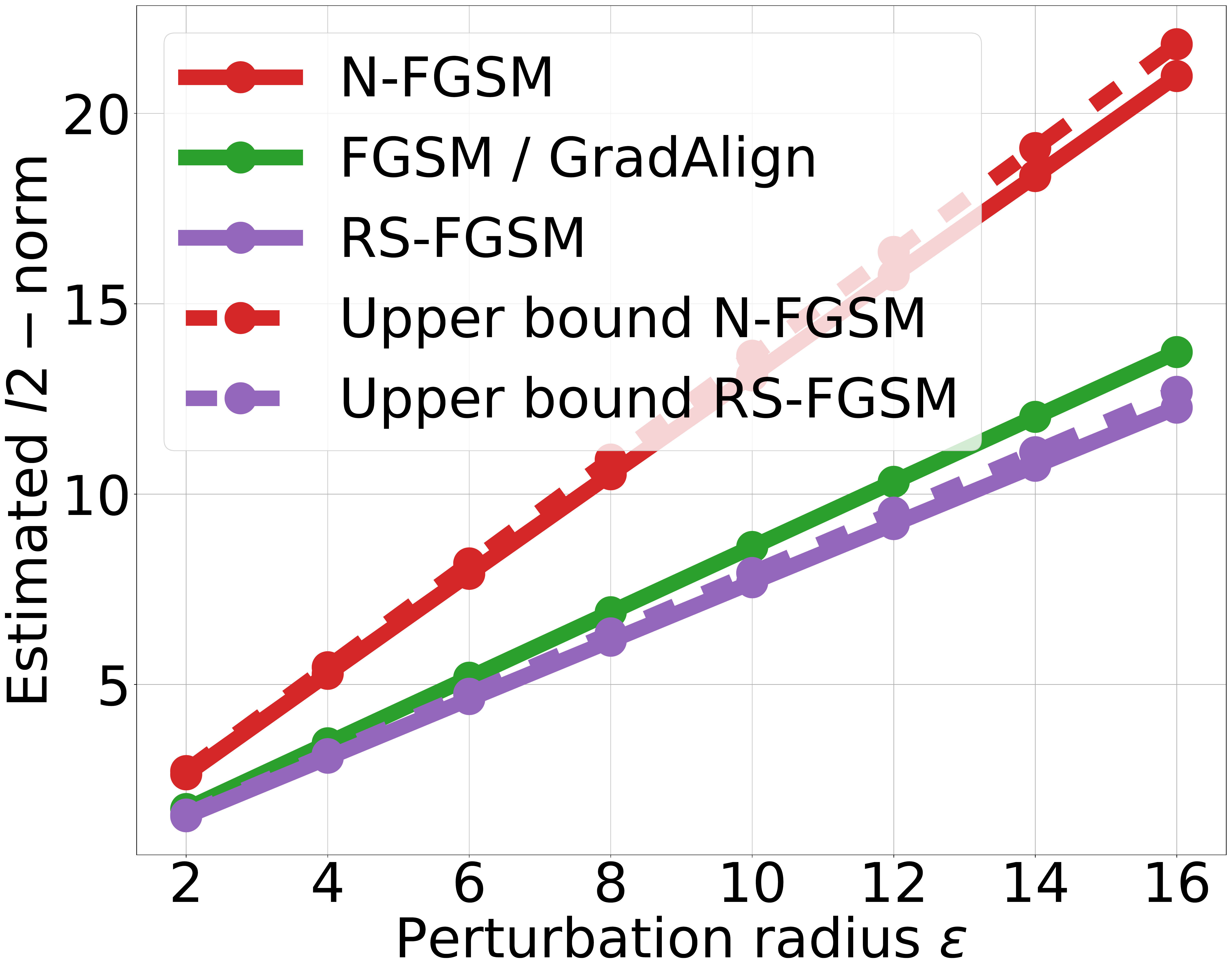}
\end{subfigure}
\caption{Monte Carlo estimations of the expected $l_2-$norm of perturbations from different methods and corresponding analytical upper bounds. As mentioned in \cite{grad_align}, we observe that RS-FGSM perturbations have lower $l_2$ norm than FGSM. However, N-FGSM perturbations have a significantly higher $l_2-$norm than both RS-FGSM and FGSM. This seems to indicate that the role of random step is not simply to lower the $l_2$ norm as previously suggested \citep{grad_align}. }
\label{figure:magnitude_perturbations}
% \vspace{-5pt}
\end{figure} 

\section{N-FGSM with Gaussian noise} \label{sec:gaussian_noise}
In the main paper we have only explored noise sources coming from a Uniform distribution. Since we are measuring robustness against $l_\infty-$ attacks, the Uniform distribution is a natural choice because the random perturbations will be bounded to the $l_\infty$ ball defined by the span of the distribution. However, for the sake of completeness, we also explore the performance of augmenting the samples from a Gaussian distribution where we choose its standard deviation to match that of the uniform distribution. In~\cref{table:gaussian_noise} we present a comparison of the clean (top) and PGD-50-10 (bottom) accuracy for different values of $\alpha$ and noise magnitude with $\epsilon = \nicefrac{8}{255}$. Recall that by default we use Uniform distribution $\mathcal{U}[-k, k]$, therefore hyperparameter $k$ sets the noise magnitude. 

Increasing the FGSM step size without increasing the amount of noise leads to CO. Note results for $k=0.5\epsilon$. More importantly, results are very similar when the two noise distributions share the same standard deviation. Thus, using Gaussian instead of Uniform noise does not seem to alter the results.  Although this might be expected, we remark that the Gaussian is an unbounded noise distribution and the common practice in adversarial training is to always restrict the norm of the perturbations.

\begin{table}
\caption{Comparison of the clean (top) and PGD-50-10 (bottom) accuracy across different values of step-size $\alpha$ and noise magnitude for the Uniform and Gaussian distributions with $\epsilon = 8/255$. For every value of $k$, we use a Gaussian with matching standard deviation. We observe that when we match the standard deviation, both distribution perform similarly.}\label{table:gaussian_noise}
\vspace{4pt}
\renewcommand{\arraystretch}{1.5}
\scriptsize
\centering
\begin{tabular}{cc|c|c||c|c|c}
\toprule
& \multicolumn{3}{c}{\footnotesize{\textbf{Uniform Noise}}} & \multicolumn{3}{c}{\footnotesize{\textbf{Gaussian Noise}}}\\ 
\midrule
\multicolumn{1}{c|}{}  & \scriptsize{$\alpha = \nicefrac{6}{255}$ ($0.75 \epsilon$)}                                                               & \scriptsize{$\alpha = \nicefrac{8}{255}$ ($1 \epsilon$)}                                                       & \scriptsize{$\alpha = \nicefrac{10}{255}$ ($1.25 \epsilon$)}  & \scriptsize{$\alpha = \nicefrac{6}{255}$ ($0.75 \epsilon$)}  & \scriptsize{$\alpha = \nicefrac{8}{255}$ ($1 \epsilon$)} & \multicolumn{1}{c}{\scriptsize{$\alpha = \nicefrac{10}{255}$ ($1.25 \epsilon$)}}                                                              \\ \hline
\multicolumn{1}{r|}{\scriptsize{$k = 0.5\epsilon$}} & \begin{tabular}[c]{@{}c@{}}85.52 $\pm$ 0.23 \\  44.14 $\pm$ 0.24\end{tabular} & \begin{tabular}[c]{@{}c@{}}81.54 $\pm$ 0.19 \\ 47.93 $\pm$ 0.11\end{tabular}  & \begin{tabular}[c]{@{}c@{}}82.81 $\pm$ 1.11\\  0.0 $\pm$ 0.0\end{tabular}     & \begin{tabular}[c]{@{}c@{}}85.27 $\pm$ 0.11 \\  44.23 $\pm$ 0.17\end{tabular} & \begin{tabular}[c]{@{}c@{}}81.71 $\pm$ 0.27 \\  47.98 $\pm$ 0.14\end{tabular} & \multicolumn{1}{c}{\begin{tabular}[c]{@{}c@{}}83.34 $\pm$ 1.48 \\ 0.0 $\pm$ 0.0\end{tabular}}     \\ \hline
\multicolumn{1}{c|}{\scriptsize{$k = 1\epsilon$}}   & \begin{tabular}[c]{@{}c@{}}85.03 $\pm$ 0.09 \\  44.44 $\pm$ 0.13\end{tabular} & \begin{tabular}[c]{@{}c@{}}81.57 $\pm$ 0.07 \\  48.16 $\pm$ 0.21\end{tabular} & \begin{tabular}[c]{@{}c@{}}77.32 $\pm$ 0.14 \\  49.68 $\pm$ 0.25\end{tabular} & \begin{tabular}[c]{@{}c@{}}85.01 $\pm$ 0.17 \\  44.41 $\pm$ 0.04\end{tabular} & \begin{tabular}[c]{@{}c@{}}81.35 $\pm$ 0.14 \\  48.21 $\pm$ 0.11\end{tabular} & \multicolumn{1}{c}{\begin{tabular}[c]{@{}c@{}}77.22 $\pm$ 0.32 \\  49.83 $\pm$ 0.1\end{tabular}}  \\ \hline
\multicolumn{1}{c|}{\scriptsize{$k = 2\epsilon$}}   & \begin{tabular}[c]{@{}c@{}}84.49 $\pm$ 0.1 \\ 44.44 $\pm$ 0.15\end{tabular}   & \begin{tabular}[c]{@{}c@{}}80.58 $\pm$ 0.22 \\ 48.12 $\pm$ 0.07\end{tabular}  & \begin{tabular}[c]{@{}c@{}}76.49 $\pm$ 0.14 \\ 49.77 $\pm$ 0.37\end{tabular}  & \begin{tabular}[c]{@{}c@{}}84.35 $\pm$ 0.24 \\  44.59 $\pm$ 0.22\end{tabular} & \begin{tabular}[c]{@{}c@{}}80.44 $\pm$ 0.31 \\  48.34 $\pm$ 0.1\end{tabular}  & \multicolumn{1}{c}{\begin{tabular}[c]{@{}c@{}}76.33 $\pm$ 0.37 \\  49.77 $\pm$ 0.23\end{tabular}} \\ \hline
\end{tabular}

\end{table}

\section{Training with noise augmented samples} \label{sec:noise_augmentation}
\citet{gilmer2019adversarial} and \citet{fawzi2018empirical} report a close link between robustness to adversarial attacks and robustness to random noise. Actually, \cite{gilmer2019adversarial} report that training with noise-augmented samples can improve adversarial accuracy and vice-versa. We note that N-FGSM can actually be seen as a combination of noise-augmentation and adversarial attacks. Here we perform an ablation where we train models with samples augmented with uniform noise $\mathcal{U}[-k, k]$ and then test the PGD-50-10 accuracy. We observe, that indeed random noise can increase the robustness to worst-case perturbations for small $\epsilon - l_\infty$ balls. However, as we increase $\epsilon$, noise augmentation is no longer very effective. With N-FGSM, we apply a weak attack to these noise-augmented samples and this seems to be enough to make them effective for adversarial training.

\begin{figure}
\centering
\begin{subfigure}[b]{.47\linewidth}
\includegraphics[width=\linewidth]{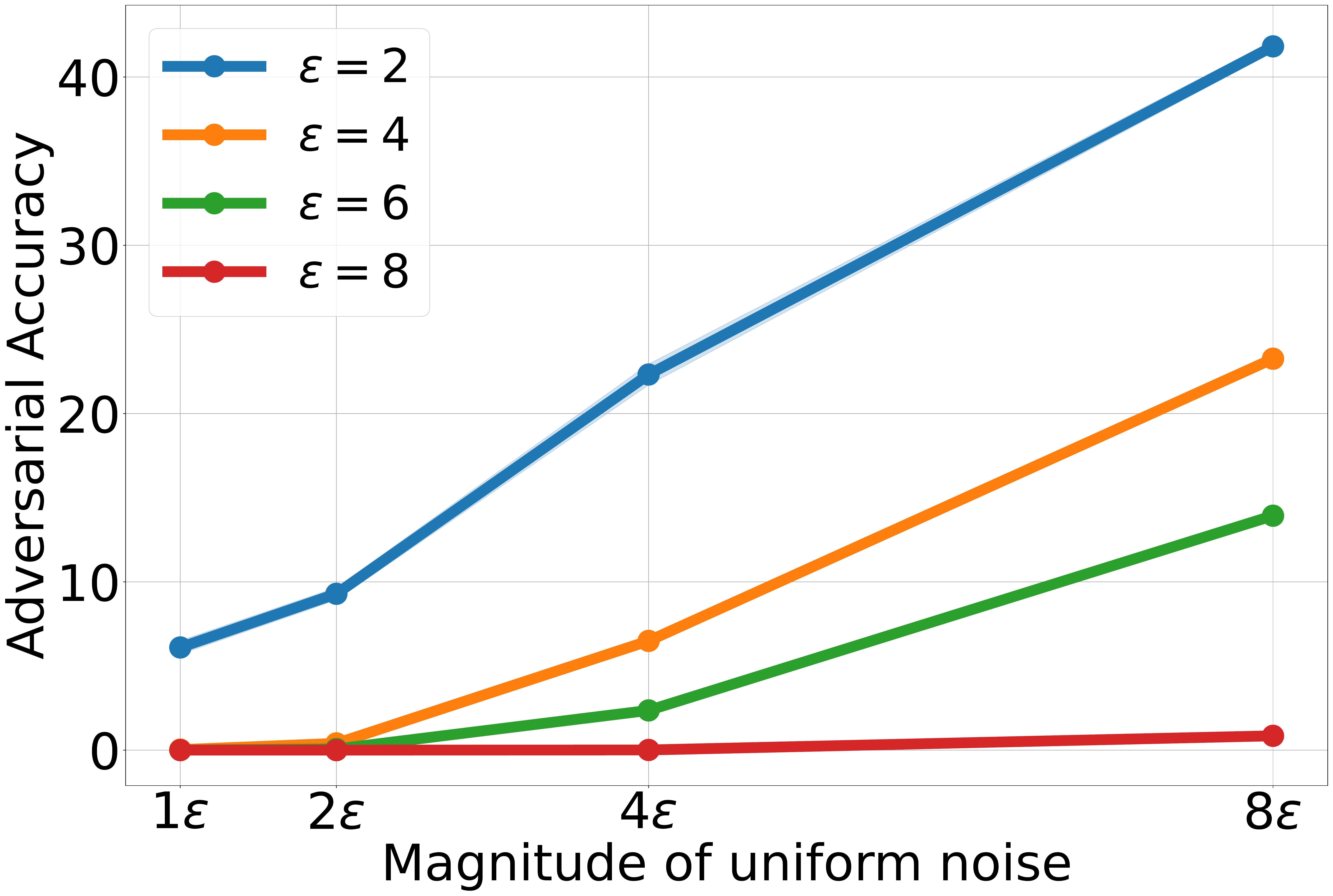}
\end{subfigure}
\caption{Training with uniform noise augmented samples improves adversarial accuracy for small perturbations but is not effective to protect against larger $l_\infty$ radius $\epsilon$. This motivates us to further augment the noisy samples with FGSM. All experiments are averaged over 3 runs.}
\label{figure:noise_augmentation}
\vspace{-0.3cm}
\end{figure}

\section{Comparison of adversarial training cost}\label{sec:train_cost}
In this section we describe how we compute the relative training cost for single-step methods shown in~\cref{figure:splash} (right). We approximate the cost based on the number of forward/backward passes each method uses, disregarding the cost of other additional operations such as adding a random step for RS-FGSM or N-FGSM. We understand these operations have a negligible cost compared to a full forward or backward pass.

\textbf{FGSM:} FGSM is the cheapest of all methods since it only uses one forward/backward to compute the attack and an additional forward/backward to compute the weight update. Hence, Cost FGSM = 2 F/B.

\textbf{RS-FGSM:} As previously mentioned, we do not take into account the cost of random steps or clipping, hence we consider RS-FGSM to have the same cost as standard FGSM. Cost RS-FGSM = 2 F/B.

\textbf{N-FGSM:} Idem as before, cost of N-FGSM = 2 F/B.

\textbf{ZeroGrad:} For ZeroGrad they need to do an additional sorting operation to find the smallest gradient components. This could be potentially expensive, however, since the size of the input image is several orders of magnitude smaller than that of the network, we also ignore this cost. Cost ZeroGrad = 2 F/B.

\textbf{MultiGrad:} MultiGrad computes 3 random steps and evaluates the gradient in all of them. Therefore, it needs to do 3 F/B to compute the attack and an additional one to update the weights. Cost MultiGrad = 4 F/B. 

\textbf{\cite{AAAI}:} \cite{AAAI}  compute the RS-FGSM perturbation and evaluate the model on $c$ points along this direction. Therefore, they will spend 1F/B on the RS-FGSM attack, $c - 1$ F on the evaluations since the clean image has already been evaluated; and 1 F/B for the weight update. In our plot, we used $c=3$ since it was the most chosen setting. \cite{AAAI} assume the cost of a forward is similar to that of a backward pass, following this assumption, cost of \cite{AAAI} is 1 F/B + 2 F + 1 F/B = 3 F/B

\textbf{Free-AT:} \cite{free} re-use the gradient from the previous backward pass to compute the FGSM perturbation of the current iteration. Hence, the cost of their training is only 1 F/B per iteration. However, \cite{RS-FGSM} observed they needed a longer training schedule to produce comparable results. Therefore, the total training cost per iteration (1 F/B) is scaled by 96 in the case of Free-AT, while it is only scaled by 30 for other methods.
Relative cost Free = (96 $\cdot$ 1 F/B) / (30 $\cdot$ 2 F/B).

\textbf{GradAlign:} Finally, GradAlign uses FGSM with a regularizer. However, this regularizer needs to compute second-order derivatives via double backpropagation, which does not have the same cost as regular backpropagation. \cite{grad_align} report that the cost of using GradAlign regularizer increased the cost of FGSM by 3.

\section{Infrastructure details and GPU hours} \label{sec:gpu-hours}
All our training runs have been conducted on either NVIDIA GPU V-100 or P-100 from an internal cluster. The total compute for the results presented in this work is roughly 2500 hours.

\section{Detailed results for~\cref{sec:single-step} and~\cref{sec:wideresnet}} \label{sec:detailed_results}
In this section we present the tables with the exact numbers used in plots comparing adversarial training methods. For each method and $\epsilon-l_\infty$ radius, the top number is the clean accuracy while the bottom number is the PGD-50-10 accuracy. We separate single-step from multi-step methods with a double line.

\begin{table}[h]
\renewcommand{\arraystretch}{1.5}
\centering
\scriptsize{
\begin{tabular}{c|c|c|c|c|c|c|c|c}
\multicolumn{4}{c}{\small{\textbf{PreActResNet18 -- CIFAR-10 Dataset}}} \\
\multicolumn{2}{c}{} \\
\toprule
           & $\epsilon$ = $\nicefrac{2}{255}$                                                                 & $\epsilon$ = $\nicefrac{4}{255}$                                                                 & $\epsilon$ = $\nicefrac{6}{255}$                                                                 & $\epsilon$ = $\nicefrac{8}{255}$                                                                 & $\epsilon$ = $\nicefrac{10}{255}$                                                                & $\epsilon$ = $\nicefrac{12}{255}$                                                                 & $\epsilon$ = $\nicefrac{14}{255}$                                                                & $\epsilon$ = $\nicefrac{16}{255}$    \\ \hline
           \midrule
\textbf{N-FGSM}     & \begin{tabular}[c]{@{}c@{}}91.48 $\pm$ 0.17 \\ \textbf{79.43 $\pm$ 0.21}\end{tabular}  & \begin{tabular}[c]{@{}c@{}}88.44 $\pm$ 0.09  \\ \textbf{67.09 $\pm$ 0.31}\end{tabular} & \begin{tabular}[c]{@{}c@{}}84.72 $\pm$ 0.04  \\ \textbf{56.62 $\pm$ 0.26}\end{tabular} & \begin{tabular}[c]{@{}c@{}}80.58 $\pm$ 0.22  \\ \textbf{48.12 $\pm$ 0.07}\end{tabular} & \begin{tabular}[c]{@{}c@{}}75.98 $\pm$ 0.1  \\ \textbf{41.56 $\pm$ 0.16}\end{tabular}  & \begin{tabular}[c]{@{}c@{}}71.46 $\pm$ 0.14 \\  \textbf{36.43 $\pm$ 0.16}\end{tabular}  & \begin{tabular}[c]{@{}c@{}}67.11 $\pm$ 0.37  \\ \textbf{32.11 $\pm$ 0.2}\end{tabular}  & \begin{tabular}[c]{@{}c@{}}63.18 $\pm$ 0.49 \\  \textbf{27.67 $\pm$ 0.93}\end{tabular}  \\ \hline
Grad Align & \begin{tabular}[c]{@{}c@{}}91.73 $\pm$ 0.04 \\  79.16 $\pm$ 0.03\end{tabular} & \begin{tabular}[c]{@{}c@{}}88.76 $\pm$ 0.0\\   \textbf{67.13 $\pm$ 0.26}\end{tabular}  & \begin{tabular}[c]{@{}c@{}}85.67 $\pm$ 0.02\\   \textbf{56.27 $\pm$ 0.31}\end{tabular} & \begin{tabular}[c]{@{}c@{}}81.9 $\pm$ 0.22\\   \textbf{48.14 $\pm$ 0.15}\end{tabular}  & \begin{tabular}[c]{@{}c@{}}77.54 $\pm$ 0.06\\   40.75 $\pm$ 0.28\end{tabular} & \begin{tabular}[c]{@{}c@{}}73.29 $\pm$ 0.23  \\ 34.51 $\pm$ 0.63\end{tabular}  & \begin{tabular}[c]{@{}c@{}}68.01 $\pm$ 0.32 \\  30.36 $\pm$ 0.27\end{tabular} & \begin{tabular}[c]{@{}c@{}}61.3 $\pm$ 0.15  \\ \textbf{26.64 $\pm$ 0.27}\end{tabular}   \\ \hline
FGSM       & \begin{tabular}[c]{@{}c@{}}91.6 $\pm$ 0.1 \\ 79.35 $\pm$ 0.06\end{tabular}    & \begin{tabular}[c]{@{}c@{}}88.77 $\pm$ 0.04  \\ 67.11 $\pm$ 0.09\end{tabular} & \begin{tabular}[c]{@{}c@{}}85.58 $\pm$ 0.11 \\  56.33 $\pm$ 0.41\end{tabular} & \begin{tabular}[c]{@{}c@{}}86.41 $\pm$ 0.7  \\ 0.0 $\pm$ 0.0\end{tabular}     & \begin{tabular}[c]{@{}c@{}}82.08 $\pm$ 1.62 \\  0.0 $\pm$ 0.0\end{tabular}    & \begin{tabular}[c]{@{}c@{}}80.6 $\pm$ 2.59 \\  0.0 $\pm$ 0.0\end{tabular}      & \begin{tabular}[c]{@{}c@{}}76.04 $\pm$ 2.37  \\ 0.0 $\pm$ 0.0\end{tabular}    & \begin{tabular}[c]{@{}c@{}}77.14 $\pm$ 2.46 \\  0.0 $\pm$ 0.0\end{tabular}     \\ \hline
RS-FGSM    & \begin{tabular}[c]{@{}c@{}}92.09 $\pm$ 0.05 \\ 78.64 $\pm$ 0.08\end{tabular}  & \begin{tabular}[c]{@{}c@{}}89.69 $\pm$ 0.01 \\  66.12 $\pm$ 0.22\end{tabular} & \begin{tabular}[c]{@{}c@{}}87.0 $\pm$ 0.12 \\  54.87 $\pm$ 0.22\end{tabular}  & \begin{tabular}[c]{@{}c@{}}84.05 $\pm$ 0.13  \\ 46.08 $\pm$ 0.18\end{tabular} & \begin{tabular}[c]{@{}c@{}}85.21 $\pm$ 0.51 \\  0.0 $\pm$ 0.0\end{tabular}    & \begin{tabular}[c]{@{}c@{}}65.22 $\pm$ 23.23  \\ 0.0 $\pm$ 0.0\end{tabular}    & \begin{tabular}[c]{@{}c@{}}43.59 $\pm$ 25.01  \\ 0.0 $\pm$ 0.0\end{tabular}   & \begin{tabular}[c]{@{}c@{}}76.66 $\pm$ 0.38 \\  0.0 $\pm$ 0.0\end{tabular}     \\ \hline
Kim et. al.       & \begin{tabular}[c]{@{}c@{}}92.85 $\pm$ 0.11 \\ 74.74 $\pm$ 0.35\end{tabular}  & \begin{tabular}[c]{@{}c@{}}91.1 $\pm$ 0.04  \\ 60.51 $\pm$ 0.4\end{tabular}   & \begin{tabular}[c]{@{}c@{}}89.34 $\pm$ 0.05\\  48.95 $\pm$ 0.45\end{tabular}  & \begin{tabular}[c]{@{}c@{}}89.02 $\pm$ 0.1  \\ 33.01 $\pm$ 0.09\end{tabular}  & \begin{tabular}[c]{@{}c@{}}88.27 $\pm$ 0.14  \\ 24.43 $\pm$ 0.84\end{tabular} & \begin{tabular}[c]{@{}c@{}}88.35 $\pm$ 0.31\\   13.11 $\pm$ 0.63\end{tabular}  & \begin{tabular}[c]{@{}c@{}}90.01 $\pm$ 0.25  \\ 5.86 $\pm$ 0.57\end{tabular}  & \begin{tabular}[c]{@{}c@{}}90.45 $\pm$ 0.08  \\ 1.88 $\pm$ 0.05\end{tabular}   \\ \hline
AT Free    & \begin{tabular}[c]{@{}c@{}}87.99 $\pm$ 0.16 \\ 74.27 $\pm$ 0.33\end{tabular}  & \begin{tabular}[c]{@{}c@{}}84.98 $\pm$ 0.13  \\ 62.47 $\pm$ 0.25\end{tabular} & \begin{tabular}[c]{@{}c@{}}81.77 $\pm$ 0.11 \\  53.18 $\pm$ 0.15\end{tabular} & \begin{tabular}[c]{@{}c@{}}78.41 $\pm$ 0.18  \\ 46.03 $\pm$ 0.36\end{tabular} & \begin{tabular}[c]{@{}c@{}}74.79 $\pm$ 0.22 \\  39.87 $\pm$ 0.07\end{tabular} & \begin{tabular}[c]{@{}c@{}}73.91 $\pm$ 4.19  \\ 22.99 $\pm$ 16.26\end{tabular} & \begin{tabular}[c]{@{}c@{}}61.92 $\pm$ 14.94 \\  0.0 $\pm$ 0.0\end{tabular}   & \begin{tabular}[c]{@{}c@{}}71.64 $\pm$ 3.89  \\ 0.0 $\pm$ 0.0\end{tabular}     \\ \hline
ZeroGrad   & \begin{tabular}[c]{@{}c@{}}91.71 $\pm$ 0.08 \\ 79.36 $\pm$ 0.05\end{tabular}  & \begin{tabular}[c]{@{}c@{}}88.8 $\pm$ 0.11  \\ \textbf{67.32 $\pm$ 0.02}\end{tabular}  & \begin{tabular}[c]{@{}c@{}}85.71 $\pm$ 0.1  \\ 56.14 $\pm$ 0.21\end{tabular}  & \begin{tabular}[c]{@{}c@{}}82.62 $\pm$ 0.05  \\ 47.08 $\pm$ 0.1\end{tabular}  & \begin{tabular}[c]{@{}c@{}}79.91 $\pm$ 0.12  \\ 37.58 $\pm$ 0.2\end{tabular}  & \begin{tabular}[c]{@{}c@{}}78.11 $\pm$ 0.2  \\ 27.41 $\pm$ 0.27\end{tabular}   & \begin{tabular}[c]{@{}c@{}}75.66 $\pm$ 0.46  \\ 21.29 $\pm$ 0.97\end{tabular} & \begin{tabular}[c]{@{}c@{}}75.42 $\pm$ 0.13  \\ 13.06 $\pm$ 0.22\end{tabular}  \\ \hline
MultiGrad  & \begin{tabular}[c]{@{}c@{}}91.57 $\pm$ 0.16 \\ 79.34 $\pm$ 0.02\end{tabular}  & \begin{tabular}[c]{@{}c@{}}88.74 $\pm$ 0.12  \\ 66.81 $\pm$ 0.02\end{tabular} & \begin{tabular}[c]{@{}c@{}}85.75 $\pm$ 0.05 \\  56.02 $\pm$ 0.3\end{tabular}  & \begin{tabular}[c]{@{}c@{}}82.33 $\pm$ 0.14  \\ 47.29 $\pm$ 0.07\end{tabular} & \begin{tabular}[c]{@{}c@{}}78.73 $\pm$ 0.16 \\  40.11 $\pm$ 0.24\end{tabular} & \begin{tabular}[c]{@{}c@{}}75.28 $\pm$ 0.2  \\ 33.87 $\pm$ 0.17\end{tabular}   & \begin{tabular}[c]{@{}c@{}}80.94 $\pm$ 5.94 \\  9.55 $\pm$ 13.5\end{tabular}  & \begin{tabular}[c]{@{}c@{}}71.42 $\pm$ 5.63  \\ 16.35 $\pm$ 11.57\end{tabular} \\ \hline \hline
PGD-2      & \begin{tabular}[c]{@{}c@{}}91.4 $\pm$ 0.07 \\  \textbf{79.55 $\pm$ 0.15}\end{tabular}  & \begin{tabular}[c]{@{}c@{}}88.46 $\pm$ 0.13 \\ 67.62 $\pm$ 0.03\end{tabular}  & \begin{tabular}[c]{@{}c@{}}85.14 $\pm$ 0.13 \\ 57.39 $\pm$ 0.13\end{tabular}  & \begin{tabular}[c]{@{}c@{}}81.41 $\pm$ 0.05 \\ 49.58 $\pm$ 0.08\end{tabular}  & \begin{tabular}[c]{@{}c@{}}77.18 $\pm$ 0.15 \\ 43.3 $\pm$ 0.11\end{tabular}   & \begin{tabular}[c]{@{}c@{}}72.9 $\pm$ 0.26 \\ 38.13 $\pm$ 0.15\end{tabular}    & \begin{tabular}[c]{@{}c@{}}70.39 $\pm$ 2.71 \\ 22.89 $\pm$ 15.26\end{tabular} & \begin{tabular}[c]{@{}c@{}}64.81 $\pm$ 11.58 \\ 9.6 $\pm$ 13.37\end{tabular}   \\ \hline
PGD-10     & \begin{tabular}[c]{@{}c@{}}91.25 $\pm$ 0.04 \\  \textbf{79.47 $\pm$ 0.13}\end{tabular} & \begin{tabular}[c]{@{}c@{}}88.34 $\pm$ 0.11 \\ \textbf{68.29 $\pm$ 0.24}\end{tabular}  & \begin{tabular}[c]{@{}c@{}}84.79 $\pm$ 0.11 \\ \textbf{58.85 $\pm$ 0.18}\end{tabular}  & \begin{tabular}[c]{@{}c@{}}80.71 $\pm$ 0.14 \\ \textbf{51.33 $\pm$ 0.31}\end{tabular}  & \begin{tabular}[c]{@{}c@{}}76.13 $\pm$ 0.35 \\ \textbf{45.02 $\pm$ 0.49}\end{tabular}  & \begin{tabular}[c]{@{}c@{}}71.24 $\pm$ 0.3 \\ \textbf{39.93 $\pm$ 0.5}\end{tabular}     & \begin{tabular}[c]{@{}c@{}}66.7 $\pm$ 0.39 \\ \textbf{36.02 $\pm$ 0.67}\end{tabular}   & \begin{tabular}[c]{@{}c@{}}62.11 $\pm$ 0.62 \\ \textbf{32.22 $\pm$ 0.64}\end{tabular}   \\ \hline
\end{tabular}
}
\end{table}

\begin{table}[h]
\renewcommand{\arraystretch}{1.5}
\centering
\scriptsize{
\begin{tabular}{c|c|c|c|c|c|c|c|c}
\multicolumn{4}{c}{\small{\textbf{PreActResNet18 -- CIFAR-100 Dataset}}} \\
\multicolumn{2}{c}{} \\
\toprule
           & $\epsilon$ = $\nicefrac{2}{255}$                                                                 & $\epsilon$ = $\nicefrac{4}{255}$                                                                & $\epsilon$ = $\nicefrac{6}{255}$                                                                 & $\epsilon$ = $\nicefrac{8}{255}$                                                                & $\epsilon$ = $\nicefrac{10}{255}$                                                                & $\epsilon$ = $\nicefrac{12}{255}$                                                                & $\epsilon$ = $\nicefrac{14}{255}$                                                               & $\epsilon$ = $\nicefrac{16}{255}$                                                                \\ \hline
           \midrule
\textbf{N-FGSM}     & \begin{tabular}[c]{@{}c@{}}69.12 $\pm$ 0.27 \\  \textbf{51.02 $\pm$ 0.34}\end{tabular} & \begin{tabular}[c]{@{}c@{}}64.0 $\pm$ 0.06 \\ \textbf{39.5 $\pm$ 0.12}\end{tabular}   & \begin{tabular}[c]{@{}c@{}}59.53 $\pm$ 0.02\\  \textbf{32.06 $\pm$ 0.37}\end{tabular}  & \begin{tabular}[c]{@{}c@{}}54.9 $\pm$ 0.2 \\ \textbf{26.46 $\pm$ 0.22}\end{tabular}   & \begin{tabular}[c]{@{}c@{}}50.6 $\pm$ 0.16 \\  \textbf{22.23 $\pm$ 0.17}\end{tabular}  & \begin{tabular}[c]{@{}c@{}}46.06 $\pm$ 0.14 \\ \textbf{18.95 $\pm$ 0.15}\end{tabular}  & \begin{tabular}[c]{@{}c@{}}41.67 $\pm$ 0.25 \\ \textbf{16.33 $\pm$ 0.15}\end{tabular} & \begin{tabular}[c]{@{}c@{}}37.91 $\pm$ 0.11 \\  \textbf{14.34 $\pm$ 0.07}\end{tabular} \\ \hline
Grad Align & \begin{tabular}[c]{@{}c@{}}68.96 $\pm$ 0.15 \\ \textbf{51.31 $\pm$ 0.12}\end{tabular}  & \begin{tabular}[c]{@{}c@{}}64.71 $\pm$ 0.16 \\ \textbf{39.37 $\pm$ 0.25}\end{tabular} & \begin{tabular}[c]{@{}c@{}}60.42 $\pm$ 0.23 \\\textbf{ 31.91 $\pm$ 0.28}\end{tabular}  & \begin{tabular}[c]{@{}c@{}}56.53 $\pm$ 0.31 \\ 25.8 $\pm$ 0.14\end{tabular}  & \begin{tabular}[c]{@{}c@{}}54.06 $\pm$ 0.44 \\  18.7 $\pm$ 1.92\end{tabular}  & \begin{tabular}[c]{@{}c@{}}48.87 $\pm$ 0.32 \\  17.86 $\pm$ 0.04\end{tabular} & \begin{tabular}[c]{@{}c@{}}43.84 $\pm$ 0.14 \\ 15.51 $\pm$ 0.16\end{tabular} & \begin{tabular}[c]{@{}c@{}}38.93 $\pm$ 0.21 \\ 13.62 $\pm$ 0.19\end{tabular}  \\ \hline
FGSM       & \begin{tabular}[c]{@{}c@{}}69.01 $\pm$ 0.13 \\ 51.3 $\pm$ 0.19\end{tabular}   & \begin{tabular}[c]{@{}c@{}}64.47 $\pm$ 0.15 \\ 39.7 $\pm$ 0.16\end{tabular}  & \begin{tabular}[c]{@{}c@{}}63.85 $\pm$ 2.18 \\ 10.93 $\pm$ 14.64\end{tabular} & \begin{tabular}[c]{@{}c@{}}53.42 $\pm$ 0.65 \\ 0.0 $\pm$ 0.0\end{tabular}    & \begin{tabular}[c]{@{}c@{}}45.06 $\pm$ 2.29 \\ 0.0 $\pm$ 0.0\end{tabular}     & \begin{tabular}[c]{@{}c@{}}46.14 $\pm$ 2.58 \\ 0.0 $\pm$ 0.0\end{tabular}     & \begin{tabular}[c]{@{}c@{}}41.66 $\pm$ 0.88\\  0.0 $\pm$ 0.0\end{tabular}    & \begin{tabular}[c]{@{}c@{}}44.68 $\pm$ 1.74 \\ 0.0 $\pm$ 0.0\end{tabular}     \\ \hline
RS-FGSM    & \begin{tabular}[c]{@{}c@{}}69.83 $\pm$ 0.29 \\ 50.13 $\pm$ 0.32\end{tabular}  & \begin{tabular}[c]{@{}c@{}}65.9 $\pm$ 0.36 \\ 38.36 $\pm$ 0.19\end{tabular}  & \begin{tabular}[c]{@{}c@{}}62.15 $\pm$ 0.23\\  30.82 $\pm$ 0.08\end{tabular}  & \begin{tabular}[c]{@{}c@{}}55.26 $\pm$ 6.86\\  0.01 $\pm$ 0.01\end{tabular}  & \begin{tabular}[c]{@{}c@{}}32.33 $\pm$ 12.12 \\  0.0 $\pm$ 0.0\end{tabular}   & \begin{tabular}[c]{@{}c@{}}36.07 $\pm$ 2.59 \\ 0.0 $\pm$ 0.0\end{tabular}     & \begin{tabular}[c]{@{}c@{}}21.52 $\pm$ 5.56 \\ 0.0 $\pm$ 0.0\end{tabular}    & \begin{tabular}[c]{@{}c@{}}20.38 $\pm$ 6.15 \\ 0.0 $\pm$ 0.0\end{tabular}     \\ \hline
Kim et. al.       & \begin{tabular}[c]{@{}c@{}}72.92 $\pm$ 0.41 \\ 44.19 $\pm$ 0.25\end{tabular}  & \begin{tabular}[c]{@{}c@{}}70.16 $\pm$ 0.07 \\ 30.63 $\pm$ 0.28\end{tabular} & \begin{tabular}[c]{@{}c@{}}67.98 $\pm$ 0.19 \\ 22.0 $\pm$ 0.02\end{tabular}   & \begin{tabular}[c]{@{}c@{}}68.07 $\pm$ 0.1 \\ 12.75 $\pm$ 0.21\end{tabular}  & \begin{tabular}[c]{@{}c@{}}68.37 $\pm$ 0.21 \\  6.98 $\pm$ 0.23\end{tabular}  & \begin{tabular}[c]{@{}c@{}}74.09 $\pm$ 0.06 \\ 0.0 $\pm$ 0.0\end{tabular}     & \begin{tabular}[c]{@{}c@{}}74.06 $\pm$ 0.34 \\ 0.0 $\pm$ 0.0\end{tabular}    & \begin{tabular}[c]{@{}c@{}}74.01 $\pm$ 0.36 \\ 0.0 $\pm$ 0.0\end{tabular}     \\ \hline
AT Free    & \begin{tabular}[c]{@{}c@{}}63.01 $\pm$ 0.19 \\ 45.7 $\pm$ 0.33\end{tabular}   & \begin{tabular}[c]{@{}c@{}}59.41 $\pm$ 0.27 \\ 35.95 $\pm$ 0.09\end{tabular} & \begin{tabular}[c]{@{}c@{}}55.43 $\pm$ 0.37\\  29.37 $\pm$ 0.21\end{tabular}  & \begin{tabular}[c]{@{}c@{}}51.91 $\pm$ 0.08 \\ 24.32 $\pm$ 0.4\end{tabular}  & \begin{tabular}[c]{@{}c@{}}48.11 $\pm$ 0.09 \\ 20.64 $\pm$ 0.22\end{tabular}  & \begin{tabular}[c]{@{}c@{}}43.48 $\pm$ 1.25 \\ 5.71 $\pm$ 8.05\end{tabular}   & \begin{tabular}[c]{@{}c@{}}18.33 $\pm$ 4.86 \\ 0.0 $\pm$ 0.0\end{tabular}    & \begin{tabular}[c]{@{}c@{}}20.43 $\pm$ 11.25 \\  0.0 $\pm$ 0.0\end{tabular}   \\ \hline
ZeroGrad   & \begin{tabular}[c]{@{}c@{}}69.35 $\pm$ 0.36 \\ \textbf{51.1 $\pm$ 0.09}\end{tabular}   & \begin{tabular}[c]{@{}c@{}}64.59 $\pm$ 0.32 \\ \textbf{39.38 $\pm$ 0.15}\end{tabular} & \begin{tabular}[c]{@{}c@{}}60.69 $\pm$ 0.09 \\ 31.72 $\pm$ 0.21\end{tabular}  & \begin{tabular}[c]{@{}c@{}}56.94 $\pm$ 0.13 \\ 25.87 $\pm$ 0.09\end{tabular} & \begin{tabular}[c]{@{}c@{}}54.55 $\pm$ 0.17 \\  19.49 $\pm$ 0.08\end{tabular} & \begin{tabular}[c]{@{}c@{}}52.97 $\pm$ 0.34\\ 14.32 $\pm$ 0.08\end{tabular}   & \begin{tabular}[c]{@{}c@{}}50.87 $\pm$ 0.26 \\ 10.92 $\pm$ 0.59\end{tabular} & \begin{tabular}[c]{@{}c@{}}50.73 $\pm$ 0.3 \\ 7.3 $\pm$ 0.16\end{tabular}     \\ \hline
MultiGrad  & \begin{tabular}[c]{@{}c@{}}69.01 $\pm$ 0.16 \\ 51.15 $\pm$ 0.03\end{tabular}  & \begin{tabular}[c]{@{}c@{}}64.44 $\pm$ 0.11 \\ 39.16 $\pm$ 0.03\end{tabular} & \begin{tabular}[c]{@{}c@{}}60.65 $\pm$ 0.26\\  31.73 $\pm$ 0.09\end{tabular}  & \begin{tabular}[c]{@{}c@{}}56.84 $\pm$ 0.2 \\ 25.96 $\pm$ 0.11\end{tabular}  & \begin{tabular}[c]{@{}c@{}}53.62 $\pm$ 0.25 \\ 21.37 $\pm$ 0.16\end{tabular}  & \begin{tabular}[c]{@{}c@{}}53.05 $\pm$ 1.85 \\ 9.57 $\pm$ 7.32\end{tabular}   & \begin{tabular}[c]{@{}c@{}}48.28 $\pm$ 0.66 \\  3.2 $\pm$ 4.49\end{tabular}  & \begin{tabular}[c]{@{}c@{}}45.28 $\pm$ 11.14 \\ 0.0 $\pm$ 0.0\end{tabular}    \\ \hline \hline
PGD-2      & \begin{tabular}[c]{@{}c@{}}69.18 $\pm$ 0.1 \\ \textbf{51.36 $\pm$ 0.03}\end{tabular}   & \begin{tabular}[c]{@{}c@{}}64.32 $\pm$ 0.14 \\ 40.06 $\pm$ 0.14\end{tabular} & \begin{tabular}[c]{@{}c@{}}60.21 $\pm$ 0.13 \\  32.99 $\pm$ 0.24\end{tabular} & \begin{tabular}[c]{@{}c@{}}55.8 $\pm$ 0.16 \\ 27.38 $\pm$ 0.16\end{tabular}  & \begin{tabular}[c]{@{}c@{}}51.68 $\pm$ 0.1 \\ 23.39 $\pm$ 0.19\end{tabular}   & \begin{tabular}[c]{@{}c@{}}48.2 $\pm$ 0.1 \\ 19.83 $\pm$ 0.29\end{tabular}    & \begin{tabular}[c]{@{}c@{}}46.14 $\pm$ 1.24 \\ 10.55 $\pm$ 7.51\end{tabular} & \begin{tabular}[c]{@{}c@{}}37.97 $\pm$ 10.52 \\ 4.79 $\pm$ 6.75\end{tabular}  \\ \hline
PGD-10     & \begin{tabular}[c]{@{}c@{}}68.83 $\pm$ 0.07 \\ \textbf{51.51 $\pm$ 0.27}\end{tabular}  & \begin{tabular}[c]{@{}c@{}}63.87 $\pm$ 0.09 \\ \textbf{40.59 $\pm$ 0.36}\end{tabular} & \begin{tabular}[c]{@{}c@{}}59.37 $\pm$ 0.07 \\ \textbf{33.65 $\pm$ 0.02}\end{tabular}  & \begin{tabular}[c]{@{}c@{}}54.79 $\pm$ 0.38 \\ \textbf{28.55 $\pm$ 0.27}\end{tabular} & \begin{tabular}[c]{@{}c@{}}50.53 $\pm$ 0.15 \\ \textbf{24.17 $\pm$ 0.12}\end{tabular}  & \begin{tabular}[c]{@{}c@{}}46.05 $\pm$ 0.21 \\ \textbf{21.2 $\pm$ 0.12}\end{tabular}   & \begin{tabular}[c]{@{}c@{}}41.76 $\pm$ 0.07\\  \textbf{18.72 $\pm$ 0.06}\end{tabular} & \begin{tabular}[c]{@{}c@{}}37.81 $\pm$ 0.14 \\ \textbf{16.59 $\pm$ 0.16}\end{tabular}  \\ \hline
\end{tabular}
}
\end{table}

\begin{table}[h]
\centering
\renewcommand{\arraystretch}{1.5}
\scriptsize{
\begin{tabular}{c|c|c|c|c|c|c}
\multicolumn{4}{c}{\small{\textbf{PreActResNet18 -- SVHN Dataset}}} \\
\multicolumn{2}{c}{} \\
\toprule
           & $\epsilon$ = $\nicefrac{2}{255}$                                                                & $\epsilon$ = $\nicefrac{4}{255}$                                                                & $\epsilon$ = $\nicefrac{6}{255}$                                                     & $\epsilon$ = $\nicefrac{8}{255}$                                                                & $\epsilon$ = $\nicefrac{10}{255}$                                                               & $\epsilon$ = $\nicefrac{12}{255}$                                                               \\ \hline
           \midrule
\textbf{N-FGSM }    & \begin{tabular}[c]{@{}c@{}}96.01 $\pm$ 0.04\\ \textbf{ 86.44 $\pm$ 0.1}\end{tabular}  & \begin{tabular}[c]{@{}c@{}}94.54 $\pm$ 0.15\\  \textbf{72.53 $\pm$ 0.19}\end{tabular} & \begin{tabular}[c]{@{}c@{}}92.25 $\pm$ 0.33\\  58.42 $\pm$ 0.14\end{tabular} & \begin{tabular}[c]{@{}c@{}}89.56 $\pm$ 0.49\\ \textbf{45.63 $\pm$ 0.11}\end{tabular}   & \begin{tabular}[c]{@{}c@{}}86.74 $\pm$ 0.86\\ \textbf{33.96 $\pm$ 0.49}\end{tabular}  & \begin{tabular}[c]{@{}c@{}}81.48 $\pm$ 1.64\\ \textbf{26.13 $\pm$ 0.81}\end{tabular}  \\ \hline
Grad Align & \begin{tabular}[c]{@{}c@{}}96.02 $\pm$ 0.05\\  \textbf{86.43 $\pm$ 0.1}\end{tabular}  & \begin{tabular}[c]{@{}c@{}}94.56 $\pm$ 0.21\\ 72.12 $\pm$ 0.19\end{tabular}  & \begin{tabular}[c]{@{}c@{}}92.53 $\pm$ 0.24\\  57.34 $\pm$ 0.24\end{tabular} & \begin{tabular}[c]{@{}c@{}}90.1 $\pm$ 0.34\\  43.85 $\pm$ 0.14\end{tabular}   & \begin{tabular}[c]{@{}c@{}}87.23 $\pm$ 0.75\\  32.87 $\pm$ 0.33\end{tabular} & \begin{tabular}[c]{@{}c@{}}84.01 $\pm$ 0.46\\  23.62 $\pm$ 0.41\end{tabular} \\ \hline
FGSM       & \begin{tabular}[c]{@{}c@{}}96.04 $\pm$ 0.07\\ \textbf{86.5 $\pm$ 0.05}\end{tabular}   & \begin{tabular}[c]{@{}c@{}}95.67 $\pm$ 0.07\\  13.61 $\pm$ 5.83\end{tabular} & \begin{tabular}[c]{@{}c@{}}93.73 $\pm$ 0.68\\  0.56 $\pm$ 0.72\end{tabular}  & \begin{tabular}[c]{@{}c@{}}91.74 $\pm$ 0.86\\  0.26 $\pm$ 0.36\end{tabular}   & \begin{tabular}[c]{@{}c@{}}90.76 $\pm$ 0.63\\ 0.07 $\pm$ 0.1\end{tabular}    & \begin{tabular}[c]{@{}c@{}}87.17 $\pm$ 0.43\\ 0.0 $\pm$ 0.0\end{tabular}     \\ \hline
RS-FGSM    & \begin{tabular}[c]{@{}c@{}}96.18 $\pm$ 0.11\\ 86.16 $\pm$ 0.14\end{tabular}  & \begin{tabular}[c]{@{}c@{}}95.09 $\pm$ 0.09\\  71.28 $\pm$ 0.4\end{tabular}  & \begin{tabular}[c]{@{}c@{}}95.11 $\pm$ 0.44\\  0.11 $\pm$ 0.08\end{tabular}  & \begin{tabular}[c]{@{}c@{}}94.46 $\pm$ 0.16\\  0.0 $\pm$ 0.0\end{tabular}     & \begin{tabular}[c]{@{}c@{}}93.88 $\pm$ 0.24\\  0.0 $\pm$ 0.0\end{tabular}    & \begin{tabular}[c]{@{}c@{}}92.74 $\pm$ 0.5\\ 0.0 $\pm$ 0.0\end{tabular}      \\ \hline
Kim et. al.       & \begin{tabular}[c]{@{}c@{}}96.35 $\pm$ 0.02\\  83.26 $\pm$ 0.24\end{tabular} & \begin{tabular}[c]{@{}c@{}}95.25 $\pm$ 0.08\\  66.32 $\pm$ 0.63\end{tabular} & \begin{tabular}[c]{@{}c@{}}94.83 $\pm$ 0.02\\  48.27 $\pm$ 0.52\end{tabular} & \begin{tabular}[c]{@{}c@{}}94.88 $\pm$ 0.29\\  31.8 $\pm$ 1.1\end{tabular}    & \begin{tabular}[c]{@{}c@{}}96.61 $\pm$ 0.09\\  0.18 $\pm$ 0.21\end{tabular}  & \begin{tabular}[c]{@{}c@{}}96.61 $\pm$ 0.01\\ 0.0 $\pm$ 0.0\end{tabular}     \\ \hline
AT Free    & \begin{tabular}[c]{@{}c@{}}95.01 $\pm$ 0.09\\  84.55 $\pm$ 0.27\end{tabular} & \begin{tabular}[c]{@{}c@{}}93.66 $\pm$ 0.12\\  71.61 $\pm$ 0.75\end{tabular} & \begin{tabular}[c]{@{}c@{}}91.72 $\pm$ 0.29\\  \textbf{59.31 $\pm$ 1.0}\end{tabular}  & \begin{tabular}[c]{@{}c@{}}91.29 $\pm$ 4.07\\ 0.01 $\pm$ 0.0\end{tabular}     & \begin{tabular}[c]{@{}c@{}}91.86 $\pm$ 3.66\\ 0.0 $\pm$ 0.0\end{tabular}     & \begin{tabular}[c]{@{}c@{}}92.36 $\pm$ 1.0\\ 0.0 $\pm$ 0.0\end{tabular}      \\ \hline
ZeroGrad   & \begin{tabular}[c]{@{}c@{}}96.06 $\pm$ 0.03\\ \textbf{86.43 $\pm$ 0.1}\end{tabular}   & \begin{tabular}[c]{@{}c@{}}94.81 $\pm$ 0.16\\  71.59 $\pm$ 0.22\end{tabular} & \begin{tabular}[c]{@{}c@{}}93.53 $\pm$ 0.26\\ 51.72 $\pm$ 0.53\end{tabular}  & \begin{tabular}[c]{@{}c@{}}92.42 $\pm$ 1.29\\  35.93 $\pm$ 2.73\end{tabular}  & \begin{tabular}[c]{@{}c@{}}90.34 $\pm$ 0.32\\  21.34 $\pm$ 0.31\end{tabular} & \begin{tabular}[c]{@{}c@{}}88.09 $\pm$ 0.4\\  14.14 $\pm$ 0.32\end{tabular}  \\ \hline
MultiGrad  & \begin{tabular}[c]{@{}c@{}}96.01 $\pm$ 0.08\\ \textbf{86.4 $\pm$ 0.08}\end{tabular}   & \begin{tabular}[c]{@{}c@{}}94.71 $\pm$ 0.17\\ \textbf{71.98 $\pm$ 0.26}\end{tabular}  & \begin{tabular}[c]{@{}c@{}}95.75 $\pm$ 0.58\\ 28.1 $\pm$ 18.85\end{tabular}  & \begin{tabular}[c]{@{}c@{}}94.86 $\pm$ 0.97\\  11.49 $\pm$ 16.19\end{tabular} & \begin{tabular}[c]{@{}c@{}}94.7 $\pm$ 0.12\\ 0.0 $\pm$ 0.0\end{tabular}      & \begin{tabular}[c]{@{}c@{}}94.48 $\pm$ 0.19\\  0.0 $\pm$ 0.0\end{tabular}    \\ \hline \hline
PGD-2      & \begin{tabular}[c]{@{}c@{}}96.03 $\pm$ 0.14\\ 86.72 $\pm$ 0.06\end{tabular}  & \begin{tabular}[c]{@{}c@{}}94.66 $\pm$ 0.1\\ 73.29 $\pm$ 0.29\end{tabular}   & \begin{tabular}[c]{@{}c@{}}93.77 $\pm$ 0.61\\  60.53 $\pm$ 0.73\end{tabular} & \begin{tabular}[c]{@{}c@{}}94.63 $\pm$ 1.29\\  20.68 $\pm$ 18.56\end{tabular} & \begin{tabular}[c]{@{}c@{}}84.09 $\pm$ 14.99\\  0.41 $\pm$ 0.29\end{tabular} & \begin{tabular}[c]{@{}c@{}}94.16 $\pm$ 0.54\\  0.02 $\pm$ 0.03\end{tabular}  \\ \hline
PGD-10     & \begin{tabular}[c]{@{}c@{}}95.92 $\pm$ 0.08\\ \textbf{86.94 $\pm$ 0.14}\end{tabular}  & \begin{tabular}[c]{@{}c@{}}94.37 $\pm$ 0.13\\\textbf{ 74.76 $\pm$ 0.19}\end{tabular}  & \begin{tabular}[c]{@{}c@{}}92.46 $\pm$ 0.25\\  \textbf{63.9 $\pm$ 0.48}\end{tabular}  & \begin{tabular}[c]{@{}c@{}}89.67 $\pm$ 0.34\\  \textbf{53.95 $\pm$ 0.55}\end{tabular}  & \begin{tabular}[c]{@{}c@{}}85.75 $\pm$ 0.65\\ \textbf{44.91 $\pm$ 0.45}\end{tabular}  & \begin{tabular}[c]{@{}c@{}}80.08 $\pm$ 0.93\\ \textbf{ 37.65 $\pm$ 0.53}\end{tabular} \\ \hline
\end{tabular}

}
\end{table}

\begin{table}[h]
\centering
\renewcommand{\arraystretch}{1.5}
\scriptsize{
\begin{tabular}{c|c|c|c|c|c|c|c|c}
\multicolumn{4}{c}{\small{\textbf{WideResNet28-10 -- CIFAR-10 Dataset}}} \\
\multicolumn{2}{c}{} \\
\toprule
           & $\epsilon$ = $\nicefrac{2}{255}$                                                                & $\epsilon$ = $\nicefrac{4}{255}$                                                                & $\epsilon$ = $\nicefrac{6}{255}$                                                                 & $\epsilon$ = $\nicefrac{8}{255}$                                                                & $\epsilon$ = $\nicefrac{10}{255}$                                                               & $\epsilon$ = $\nicefrac{12}{255}$                            & $\epsilon$ = $\nicefrac{14}{255}$                                    & $\epsilon$ = $\nicefrac{16}{255}$                                                               \\ \hline
           \midrule
\textbf{N-FGSM}     & \begin{tabular}[c]{@{}c@{}}92.51 $\pm$ 0.11 \\ \textbf{81.43 $\pm$ 0.3}\end{tabular}  & \begin{tabular}[c]{@{}c@{}}89.65 $\pm$ 0.09 \\ 69.11 $\pm$ 0.24\end{tabular} & \begin{tabular}[c]{@{}c@{}}85.8 $\pm$ 0.23 \\ 58.29 $\pm$ 0.14\end{tabular}   & \begin{tabular}[c]{@{}c@{}}81.59 $\pm$ 0.32 \\ 49.53 $\pm$ 0.25\end{tabular} & \begin{tabular}[c]{@{}c@{}}76.92 $\pm$ 0.04 \\ \textbf{42.37 $\pm$ 0.36}\end{tabular} & \begin{tabular}[c]{@{}c@{}}72.13 $\pm$ 0.15 \\ \textbf{36.85 $\pm$ 0.2}\end{tabular}  & \begin{tabular}[c]{@{}c@{}}67.82 $\pm$ 0.43 \\ \textbf{31.66 $\pm$ 0.6}\end{tabular}   & \begin{tabular}[c]{@{}c@{}}56.73 $\pm$ 0.42 \\ 25.01 $\pm$ 0.23\end{tabular} \\ \hline
Grad Align & \begin{tabular}[c]{@{}c@{}}92.59 $\pm$ 0.05 \\ \textbf{81.33 $\pm$ 0.4}\end{tabular}  & \begin{tabular}[c]{@{}c@{}}89.95 $\pm$ 0.3 \\ \textbf{69.81 $\pm$ 0.47}\end{tabular}  & \begin{tabular}[c]{@{}c@{}}86.98 $\pm$ 0.06 \\ \textbf{59.0 $\pm$ 0.13}\end{tabular}   & \begin{tabular}[c]{@{}c@{}}83.19 $\pm$ 0.26 \\ \textbf{50.0 $\pm$ 0.05}\end{tabular}  & \begin{tabular}[c]{@{}c@{}}79.35 $\pm$ 0.26 \\ 41.48 $\pm$ 0.51\end{tabular} & \begin{tabular}[c]{@{}c@{}}73.79 $\pm$ 0.72 \\ 35.06 $\pm$ 0.74\end{tabular} & \begin{tabular}[c]{@{}c@{}}66.38 $\pm$ 0.53 \\ 30.83 $\pm$ 0.39\end{tabular}  & \begin{tabular}[c]{@{}c@{}}57.75 $\pm$ 0.75 \\ \textbf{26.26 $\pm$ 0.13}\end{tabular} \\ \hline
FGSM       & \begin{tabular}[c]{@{}c@{}}92.65 $\pm$ 0.17 \\ \textbf{81.38 $\pm$ 0.22}\end{tabular} & \begin{tabular}[c]{@{}c@{}}90.06 $\pm$ 0.18 \\ \textbf{69.59 $\pm$ 0.25}\end{tabular} & \begin{tabular}[c]{@{}c@{}}87.99 $\pm$ 1.3 \\  38.69 $\pm$ 26.54\end{tabular} & \begin{tabular}[c]{@{}c@{}}86.46 $\pm$ 0.45 \\ 0.0 $\pm$ 0.0\end{tabular}    & \begin{tabular}[c]{@{}c@{}}82.67 $\pm$ 1.78 \\ 0.0 $\pm$ 0.0\end{tabular}    & \begin{tabular}[c]{@{}c@{}}80.14 $\pm$ 1.2 \\ 0.0 $\pm$ 0.0\end{tabular}     & \begin{tabular}[c]{@{}c@{}}74.54 $\pm$ 4.01\\  0.0 $\pm$ 0.0\end{tabular}     & \begin{tabular}[c]{@{}c@{}}71.56 $\pm$ 3.78 \\ 0.0 $\pm$ 0.0\end{tabular}    \\ \hline
RS-FGSM    & \begin{tabular}[c]{@{}c@{}}92.85 $\pm$ 0.1\\  80.9 $\pm$ 0.13\end{tabular}   & \begin{tabular}[c]{@{}c@{}}90.73 $\pm$ 0.2 \\ 68.23 $\pm$ 0.17\end{tabular}  & \begin{tabular}[c]{@{}c@{}}88.24 $\pm$ 0.19 \\ 57.21 $\pm$ 0.17\end{tabular}  & \begin{tabular}[c]{@{}c@{}}83.64 $\pm$ 1.74 \\ 0.0 $\pm$ 0.0\end{tabular}    & \begin{tabular}[c]{@{}c@{}}82.1 $\pm$ 1.45 \\ 0.0 $\pm$ 0.0\end{tabular}     & \begin{tabular}[c]{@{}c@{}}78.62 $\pm$ 0.7 \\ 0.0 $\pm$ 0.0\end{tabular}     & \begin{tabular}[c]{@{}c@{}}73.25 $\pm$ 8.16 \\ 0.0 $\pm$ 0.0\end{tabular}     & \begin{tabular}[c]{@{}c@{}}68.64 $\pm$ 4.3 \\ 0.0 $\pm$ 0.0\end{tabular}     \\ \hline
RandAlpha  & \begin{tabular}[c]{@{}c@{}}93.37 $\pm$ 0.22\\  77.67 $\pm$ 0.66\end{tabular} & \begin{tabular}[c]{@{}c@{}}92.17 $\pm$ 0.21 \\ 63.73 $\pm$ 0.31\end{tabular} & \begin{tabular}[c]{@{}c@{}}90.71 $\pm$ 0.14 \\ 50.4 $\pm$ 0.14\end{tabular}   & \begin{tabular}[c]{@{}c@{}}89.16 $\pm$ 0.19 \\ 39.37 $\pm$ 0.42\end{tabular} & \begin{tabular}[c]{@{}c@{}}87.44 $\pm$ 0.31 \\ 30.13 $\pm$ 0.9\end{tabular}  & \begin{tabular}[c]{@{}c@{}}85.69 $\pm$ 0.28 \\ 23.13 $\pm$ 0.33\end{tabular} & \begin{tabular}[c]{@{}c@{}}83.98 $\pm$ 0.24 \\ 16.0 $\pm$ 0.22\end{tabular}   & \begin{tabular}[c]{@{}c@{}}83.23 $\pm$ 0.46\\  8.47 $\pm$ 0.66\end{tabular}  \\ \hline
AT Free    & \begin{tabular}[c]{@{}c@{}}90.66 $\pm$ 0.25 \\ 77.0 $\pm$ 0.27\end{tabular}  & \begin{tabular}[c]{@{}c@{}}88.37 $\pm$ 0.15 \\ 64.25 $\pm$ 0.33\end{tabular} & \begin{tabular}[c]{@{}c@{}}86.11 $\pm$ 0.29 \\ 53.76 $\pm$ 0.48\end{tabular}  & \begin{tabular}[c]{@{}c@{}}83.5 $\pm$ 0.27 \\ 44.85 $\pm$ 0.39\end{tabular}  & \begin{tabular}[c]{@{}c@{}}80.52 $\pm$ 0.32 \\ 31.87 $\pm$ 5.53\end{tabular} & \begin{tabular}[c]{@{}c@{}}83.59 $\pm$ 1.35 \\ 0.0 $\pm$ 0.0\end{tabular}    & \begin{tabular}[c]{@{}c@{}}39.58 $\pm$ 15.8 \\ 0.0 $\pm$ 0.0\end{tabular}     & \begin{tabular}[c]{@{}c@{}}42.59 $\pm$ 27.96 \\ 0.0 $\pm$ 0.0\end{tabular}   \\ \hline
ZeroGrad   & \begin{tabular}[c]{@{}c@{}}92.62 $\pm$ 0.11 \\\textbf{ 81.42 $\pm$ 0.28}\end{tabular} & \begin{tabular}[c]{@{}c@{}}90.17 $\pm$ 0.05 \\ 69.28 $\pm$ 0.29\end{tabular} & \begin{tabular}[c]{@{}c@{}}86.98 $\pm$ 0.28 \\ 58.4 $\pm$ 0.14\end{tabular}   & \begin{tabular}[c]{@{}c@{}}84.25 $\pm$ 0.28 \\ 48.29 $\pm$ 0.16\end{tabular} & \begin{tabular}[c]{@{}c@{}}81.72 $\pm$ 0.29 \\ 36.08 $\pm$ 0.29\end{tabular} & \begin{tabular}[c]{@{}c@{}}79.24 $\pm$ 0.82 \\ 28.24 $\pm$ 1.79\end{tabular} & \begin{tabular}[c]{@{}c@{}}78.14 $\pm$ 0.46\\  18.54 $\pm$ 0.31\end{tabular}  & \begin{tabular}[c]{@{}c@{}}75.34 $\pm$ 0.12 \\ 14.6 $\pm$ 0.12\end{tabular}  \\ \hline
MultiGrad  & \begin{tabular}[c]{@{}c@{}}92.64 $\pm$ 0.1\\ \textbf{81.19 $\pm$ 0.28}\end{tabular}   & \begin{tabular}[c]{@{}c@{}}90.18 $\pm$ 0.13\\ 69.3 $\pm$ 0.2\end{tabular}    & \begin{tabular}[c]{@{}c@{}}87.11 $\pm$ 0.36\\ 57.98 $\pm$ 0.08\end{tabular}   & \begin{tabular}[c]{@{}c@{}}83.87 $\pm$ 0.46\\ 48.74 $\pm$ 0.09\end{tabular}  & \begin{tabular}[c]{@{}c@{}}80.89 $\pm$ 0.14\\ 41.22 $\pm$ 0.57\end{tabular}  & \begin{tabular}[c]{@{}c@{}}82.88 $\pm$ 2.85\\ 4.46 $\pm$ 6.09\end{tabular}   & \begin{tabular}[c]{@{}c@{}}86.6 $\pm$ 1.52\\ 0.0 $\pm$ 0.0\end{tabular}       & \begin{tabular}[c]{@{}c@{}}85.46 $\pm$ 3.73\\ 0.0 $\pm$ 0.0\end{tabular}     \\ \hline \hline
PGD-2      & \begin{tabular}[c]{@{}c@{}}92.69 $\pm$ 0.14 \\ \textbf{81.54 $\pm$ 0.18}\end{tabular} & \begin{tabular}[c]{@{}c@{}}90.18 $\pm$ 0.19 \\ 69.87 $\pm$ 0.26\end{tabular} & \begin{tabular}[c]{@{}c@{}}86.87 $\pm$ 0.18 \\ 59.4 $\pm$ 0.19\end{tabular}   & \begin{tabular}[c]{@{}c@{}}83.31 $\pm$ 0.16 \\ 50.88 $\pm$ 0.16\end{tabular} & \begin{tabular}[c]{@{}c@{}}79.61 $\pm$ 0.47 \\ 43.94 $\pm$ 0.24\end{tabular} & \begin{tabular}[c]{@{}c@{}}75.81 $\pm$ 0.24 \\ 37.77 $\pm$ 0.57\end{tabular} & \begin{tabular}[c]{@{}c@{}}71.41 $\pm$ 1.38 \\ 21.06 $\pm$ 13.39\end{tabular} & \begin{tabular}[c]{@{}c@{}}67.2 $\pm$ 14.94 \\ 0.0 $\pm$ 0.0\end{tabular}    \\ \hline
PGD-10     & \begin{tabular}[c]{@{}c@{}}92.24 $\pm$ 0.31\\  81.18 $\pm$ 0.57\end{tabular} & \begin{tabular}[c]{@{}c@{}}89.65 $\pm$ 0.33 \\ \textbf{70.34 $\pm$ 0.26}\end{tabular} & \begin{tabular}[c]{@{}c@{}}86.91 $\pm$ 0.51 \\ \textbf{60.59 $\pm$ 0.21}\end{tabular}  & \begin{tabular}[c]{@{}c@{}}82.82 $\pm$ 0.7\\  \textbf{52.58 $\pm$ 0.2}\end{tabular}   & \begin{tabular}[c]{@{}c@{}}78.63 $\pm$ 0.66 \\ \textbf{45.92 $\pm$ 0.38}\end{tabular} & \begin{tabular}[c]{@{}c@{}}74.0 $\pm$ 0.67 \\ \textbf{40.44 $\pm$ 0.17}\end{tabular}  & \begin{tabular}[c]{@{}c@{}}68.6 $\pm$ 0.58 \\ \textbf{35.98 $\pm$ 0.56}\end{tabular}   & \begin{tabular}[c]{@{}c@{}}64.17 $\pm$ 0.72 \\ \textbf{32.5 $\pm$ 0.61}\end{tabular}  \\ \hline
\end{tabular}
}
\end{table}

\begin{table}[h]
\centering
\renewcommand{\arraystretch}{1.5}
\scriptsize{
\begin{tabular}{c|c|c|c|c|c|c|c|c}
\multicolumn{4}{c}{\small{\textbf{WideResNet28-10 -- CIFAR-100 Dataset}}} \\
\multicolumn{2}{c}{} \\
\toprule
           & $\epsilon$ = $\nicefrac{2}{255}$                                                                 & $\epsilon$ = $\nicefrac{5}{255}$                                                                 & $\epsilon$ = $\nicefrac{6}{255}$                                                                & $\epsilon$ = $\nicefrac{8}{255}$                                                                & $\epsilon$ = $\nicefrac{10}{255}$                                                               & $\epsilon$ = $\nicefrac{12}{255}$                                                               & $\epsilon$ = $\nicefrac{14}{255}$                                                               & $\epsilon$ = $\nicefrac{16}{255}$                                                                \\ 
           \hline
           \midrule
\textbf{N-FGSM }    & \begin{tabular}[c]{@{}c@{}}71.56 $\pm$ 0.13 \\ \textbf{52.23 $\pm$ 0.33}\end{tabular}  & \begin{tabular}[c]{@{}c@{}}66.49 $\pm$ 0.46 \\  \textbf{39.93 $\pm$ 0.37}\end{tabular} & \begin{tabular}[c]{@{}c@{}}61.38 $\pm$ 0.68 \\ 30.97 $\pm$ 0.21\end{tabular} & \begin{tabular}[c]{@{}c@{}}56.23 $\pm$ 0.59 \\ \textbf{26.77 $\pm$ 0.65}\end{tabular} & \begin{tabular}[c]{@{}c@{}}51.54 $\pm$ 0.63\\  \textbf{23.03 $\pm$ 0.54}\end{tabular} & \begin{tabular}[c]{@{}c@{}}46.43 $\pm$ 0.61 \\ \textbf{19.3 $\pm$ 0.59}\end{tabular}  & \begin{tabular}[c]{@{}c@{}}42.11 $\pm$ 0.32 \\ \textbf{16.67 $\pm$ 0.4}\end{tabular}  & \begin{tabular}[c]{@{}c@{}}38.34 $\pm$ 0.47 \\ \textbf{14.27 $\pm$ 0.33}\end{tabular}  \\ \hline
Grad Align & \begin{tabular}[c]{@{}c@{}}71.68 $\pm$ 0.33 \\  51.5 $\pm$ 0.45\end{tabular}  & \begin{tabular}[c]{@{}c@{}}67.09 $\pm$ 0.19 \\ \textbf{39.9 $\pm$ 0.42}\end{tabular}   & \begin{tabular}[c]{@{}c@{}}62.86 $\pm$ 0.1 \\ \textbf{32.0 $\pm$ 0.22}\end{tabular}   & \begin{tabular}[c]{@{}c@{}}58.55 $\pm$ 0.41 \\ \textbf{26.9 $\pm$ 0.62}\end{tabular}  & \begin{tabular}[c]{@{}c@{}}53.85 $\pm$ 0.73 \\ \textbf{22.63 $\pm$ 0.62}\end{tabular} & \begin{tabular}[c]{@{}c@{}}46.94 $\pm$ 0.86 \\ \textbf{19.9 $\pm$ 0.65}\end{tabular}  & \begin{tabular}[c]{@{}c@{}}42.63 $\pm$ 0.5 \\ \textbf{16.93 $\pm$ 0.12}\end{tabular}  & \begin{tabular}[c]{@{}c@{}}36.17 $\pm$ 0.45 \\  \textbf{14.03 $\pm$ 0.24}\end{tabular} \\ \hline
FGSM       & \begin{tabular}[c]{@{}c@{}}71.92 $\pm$ 0.33 \\  \textbf{52.83 $\pm$ 0.37}\end{tabular} & \begin{tabular}[c]{@{}c@{}}67.34 $\pm$ 0.36 \\ \textbf{39.83 $\pm$ 0.31}\end{tabular}  & \begin{tabular}[c]{@{}c@{}}64.72 $\pm$ 1.12 \\ 0.0 $\pm$ 0.0\end{tabular}    & \begin{tabular}[c]{@{}c@{}}56.87 $\pm$ 1.24 \\ 0.03 $\pm$ 0.05\end{tabular}  & \begin{tabular}[c]{@{}c@{}}52.31 $\pm$ 2.11 \\ 0.0 $\pm$ 0.0\end{tabular}    & \begin{tabular}[c]{@{}c@{}}48.99 $\pm$ 1.17\\  0.0 $\pm$ 0.0\end{tabular}    & \begin{tabular}[c]{@{}c@{}}44.27 $\pm$ 1.4 \\ 0.0 $\pm$ 0.0\end{tabular}     & \begin{tabular}[c]{@{}c@{}}42.05 $\pm$ 1.03 \\  0.0 $\pm$ 0.0\end{tabular}    \\ \hline
RS-FGSM    & \begin{tabular}[c]{@{}c@{}}72.65 $\pm$ 0.28 \\  51.63 $\pm$ 0.52\end{tabular} & \begin{tabular}[c]{@{}c@{}}68.26 $\pm$ 0.2 \\ 39.57 $\pm$ 0.09\end{tabular}   & \begin{tabular}[c]{@{}c@{}}65.58 $\pm$ 0.69 \\ 26.63 $\pm$ 2.8\end{tabular}  & \begin{tabular}[c]{@{}c@{}}54.25 $\pm$ 5.85 \\ 0.0 $\pm$ 0.0\end{tabular}    & \begin{tabular}[c]{@{}c@{}}46.08 $\pm$ 4.87 \\ 0.0 $\pm$ 0.0\end{tabular}    & \begin{tabular}[c]{@{}c@{}}35.84 $\pm$ 0.17\\  0.0 $\pm$ 0.0\end{tabular}    & \begin{tabular}[c]{@{}c@{}}24.4 $\pm$ 1.25\\  0.0 $\pm$ 0.0\end{tabular}     & \begin{tabular}[c]{@{}c@{}}21.37 $\pm$ 5.04 \\ 0.0 $\pm$ 0.0\end{tabular}     \\ \hline
RandAlpha  & \begin{tabular}[c]{@{}c@{}}73.9 $\pm$ 0.15 \\ 49.13 $\pm$ 0.91\end{tabular}   & \begin{tabular}[c]{@{}c@{}}71.17 $\pm$ 0.12\\  34.3 $\pm$ 0.54\end{tabular}   & \begin{tabular}[c]{@{}c@{}}68.65 $\pm$ 0.22 \\ 25.5 $\pm$ 0.33\end{tabular}  & \begin{tabular}[c]{@{}c@{}}66.42 $\pm$ 0.13 \\ 20.27 $\pm$ 0.98\end{tabular} & \begin{tabular}[c]{@{}c@{}}64.05 $\pm$ 0.5 \\ 16.3 $\pm$ 0.14\end{tabular}   & \begin{tabular}[c]{@{}c@{}}61.99 $\pm$ 0.6 \\ 12.4 $\pm$ 0.29\end{tabular}   & \begin{tabular}[c]{@{}c@{}}59.74 $\pm$ 0.57 \\  6.93 $\pm$ 0.19\end{tabular} & \begin{tabular}[c]{@{}c@{}}58.9 $\pm$ 0.78\\  3.63 $\pm$ 0.12\end{tabular}    \\ \hline
AT Free    & \begin{tabular}[c]{@{}c@{}}67.62 $\pm$ 0.24 \\ 48.07 $\pm$ 0.31\end{tabular}  & \begin{tabular}[c]{@{}c@{}}63.27 $\pm$ 0.72 \\ 37.93 $\pm$ 0.69\end{tabular}  & \begin{tabular}[c]{@{}c@{}}59.53 $\pm$ 0.31 \\ 29.7 $\pm$ 0.51\end{tabular}  & \begin{tabular}[c]{@{}c@{}}55.77 $\pm$ 0.28 \\ 24.43 $\pm$ 0.37\end{tabular} & \begin{tabular}[c]{@{}c@{}}47.02 $\pm$ 3.83 \\ 3.23 $\pm$ 4.43\end{tabular}  & \begin{tabular}[c]{@{}c@{}}33.52 $\pm$ 9.24 \\ 0.0 $\pm$ 0.0\end{tabular}    & \begin{tabular}[c]{@{}c@{}}7.87 $\pm$ 1.78 \\ 0.0 $\pm$ 0.0\end{tabular}     & \begin{tabular}[c]{@{}c@{}}20.92 $\pm$ 21.48 \\ 0.0 $\pm$ 0.0\end{tabular}    \\ \hline
ZeroGrad   & \begin{tabular}[c]{@{}c@{}}71.68 $\pm$ 0.07 \\ \textbf{52.63 $\pm$ 0.61}\end{tabular}  & \begin{tabular}[c]{@{}c@{}}67.2 $\pm$ 0.14 \\ 39.57 $\pm$ 0.33\end{tabular}   & \begin{tabular}[c]{@{}c@{}}63.69 $\pm$ 0.14 \\ 30.27 $\pm$ 0.54\end{tabular} & \begin{tabular}[c]{@{}c@{}}60.77 $\pm$ 0.26 \\  23.7 $\pm$ 0.08\end{tabular} & \begin{tabular}[c]{@{}c@{}}61.05 $\pm$ 0.38 \\ 15.1 $\pm$ 0.49\end{tabular}  & \begin{tabular}[c]{@{}c@{}}58.39 $\pm$ 0.16 \\ 11.13 $\pm$ 0.68\end{tabular} & \begin{tabular}[c]{@{}c@{}}56.19 $\pm$ 0.11 \\ 8.8 $\pm$ 0.36\end{tabular}   & \begin{tabular}[c]{@{}c@{}}56.38 $\pm$ 0.18 \\ 4.9 $\pm$ 0.36\end{tabular}    \\ \hline
MultiGrad  & \begin{tabular}[c]{@{}c@{}}71.8 $\pm$ 0.15 \\ 51.9 $\pm$ 0.29\end{tabular}    & \begin{tabular}[c]{@{}c@{}}67.73 $\pm$ 0.48 \\ 39.7 $\pm$ 0.37\end{tabular}   & \begin{tabular}[c]{@{}c@{}}63.24 $\pm$ 0.33\\  31.5 $\pm$ 0.62\end{tabular}  & \begin{tabular}[c]{@{}c@{}}60.05 $\pm$ 0.79 \\ 26.03 $\pm$ 0.09\end{tabular} & \begin{tabular}[c]{@{}c@{}}56.39 $\pm$ 0.49 \\ 20.8 $\pm$ 0.29\end{tabular}  & \begin{tabular}[c]{@{}c@{}}56.79 $\pm$ 8.27 \\ 0.0 $\pm$ 0.0\end{tabular}    & \begin{tabular}[c]{@{}c@{}}59.8 $\pm$ 3.77 \\ 0.0 $\pm$ 0.0\end{tabular}     & \begin{tabular}[c]{@{}c@{}}52.96 $\pm$ 5.58 \\ 0.0 $\pm$ 0.0\end{tabular}     \\ \hline \hline
PGD-2      & \begin{tabular}[c]{@{}c@{}}71.62 $\pm$ 0.15 \\ 51.73 $\pm$ 0.48\end{tabular}  & \begin{tabular}[c]{@{}c@{}}67.25 $\pm$ 0.43 \\ \textbf{40.27 $\pm$ 0.7}\end{tabular}   & \begin{tabular}[c]{@{}c@{}}63.18 $\pm$ 0.36 \\ 32.23 $\pm$ 0.19\end{tabular} & \begin{tabular}[c]{@{}c@{}}59.02 $\pm$ 0.4 \\ 27.13 $\pm$ 0.37\end{tabular}  & \begin{tabular}[c]{@{}c@{}}54.47 $\pm$ 0.45 \\ 23.43 $\pm$ 0.31\end{tabular} & \begin{tabular}[c]{@{}c@{}}50.91 $\pm$ 0.35 \\ 20.23 $\pm$ 0.39\end{tabular} & \begin{tabular}[c]{@{}c@{}}41.03 $\pm$ 3.18 \\  0.03 $\pm$ 0.05\end{tabular} & \begin{tabular}[c]{@{}c@{}}40.13 $\pm$ 3.66 \\ 0.0 $\pm$ 0.0\end{tabular}     \\ \hline
PGD-10     & \begin{tabular}[c]{@{}c@{}}71.11 $\pm$ 0.62\\  \textbf{52.5 $\pm$ 0.59}\end{tabular}   & \begin{tabular}[c]{@{}c@{}}66.9 $\pm$ 0.57 \\ \textbf{40.73 $\pm$ 0.56}\end{tabular}   & \begin{tabular}[c]{@{}c@{}}62.05 $\pm$ 0.47 \\ \textbf{32.8 $\pm$ 0.29}\end{tabular}  & \begin{tabular}[c]{@{}c@{}}57.64 $\pm$ 0.81 \\ \textbf{27.97 $\pm$ 0.59}\end{tabular} & \begin{tabular}[c]{@{}c@{}}52.84 $\pm$ 0.88 \\ \textbf{24.7 $\pm$ 0.36}\end{tabular}  & \begin{tabular}[c]{@{}c@{}}48.14 $\pm$ 0.73 \\ \textbf{21.8 $\pm$ 0.57}\end{tabular}  & \begin{tabular}[c]{@{}c@{}}43.14 $\pm$ 0.87 \\  \textbf{18.87 $\pm$ 0.6}\end{tabular} & \begin{tabular}[c]{@{}c@{}}39.2 $\pm$ 0.62 \\\textbf{ 16.8 $\pm$ 0.57}\end{tabular}    \\ \hline
\end{tabular}
}
\end{table}

\begin{table}[h]
\centering
\renewcommand{\arraystretch}{1.5}
\scriptsize{
\begin{tabular}{c|c|c|c|c|c|c}
\multicolumn{4}{c}{\small{\textbf{WideResNet28-10 -- SVHN Dataset}}} \\
\multicolumn{2}{c}{} \\
\toprule
           & $\epsilon$ = $\nicefrac{2}{255}$                                                                 & $\epsilon$ = $\nicefrac{4}{255}$                                                                 & $\epsilon$ = $\nicefrac{6}{255}$                                                       & $\epsilon$ = $\nicefrac{8}{255}$                                                               & $\epsilon$ = $\nicefrac{10}{255}$                                                                & $\epsilon$ = $\nicefrac{12}{255}$                                                                \\ \hline
           \midrule
\textbf{N-FGSM}     & \begin{tabular}[c]{@{}c@{}}95.64 $\pm$ 0.09 \\  \textbf{84.1 $\pm$ 0.73}\end{tabular}  & \begin{tabular}[c]{@{}c@{}}93.66 $\pm$ 0.41 \\  66.9 $\pm$ 0.86\end{tabular}  & \begin{tabular}[c]{@{}c@{}}91.77 $\pm$ 0.42 \\ \textbf{ 53.0 $\pm$ 0.36}\end{tabular}   & \begin{tabular}[c]{@{}c@{}}88.89 $\pm$ 0.58 \\\textbf{ 40.5 $\pm$ 0.37}\end{tabular}  & \begin{tabular}[c]{@{}c@{}}88.07 $\pm$ 0.59 \\ \textbf{30.47 $\pm$ 0.76}\end{tabular}  & \begin{tabular}[c]{@{}c@{}}87.52 $\pm$ 0.49 \\ \textbf{22.43 $\pm$ 0.53}\end{tabular}  \\ \hline
Grad Align & \begin{tabular}[c]{@{}c@{}}95.41 $\pm$ 0.06\\ \textbf{ 84.57 $\pm$ 0.56}\end{tabular}  & \begin{tabular}[c]{@{}c@{}}93.9 $\pm$ 0.48 \\ 67.27 $\pm$ 0.54\end{tabular}   & \begin{tabular}[c]{@{}c@{}}68.36 $\pm$ 34.49 \\ 39.53 $\pm$ 14.89\end{tabular} & \begin{tabular}[c]{@{}c@{}}42.62 $\pm$ 32.73 \\ 24.7 $\pm$ 9.34\end{tabular} & \begin{tabular}[c]{@{}c@{}}19.3 $\pm$ 0.21 \\ 17.63 $\pm$ 0.62\end{tabular}   & \begin{tabular}[c]{@{}c@{}}19.53 $\pm$ 0.08 \\ 18.13 $\pm$ 0.52\end{tabular}  \\ \hline
FGSM       & \begin{tabular}[c]{@{}c@{}}95.83 $\pm$ 0.1 \\ \textbf{85.03 $\pm$ 0.37}\end{tabular}   & \begin{tabular}[c]{@{}c@{}}95.0 $\pm$ 0.24 \\  31.53 $\pm$ 6.57\end{tabular}  & \begin{tabular}[c]{@{}c@{}}94.23 $\pm$ 0.79 \\  1.7 $\pm$ 1.36\end{tabular}    & \begin{tabular}[c]{@{}c@{}}91.11 $\pm$ 1.36 \\  0.13 $\pm$ 0.19\end{tabular} & \begin{tabular}[c]{@{}c@{}}88.83 $\pm$ 1.71 \\ 0.0 $\pm$ 0.0\end{tabular}     & \begin{tabular}[c]{@{}c@{}}86.74 $\pm$ 0.7 \\  0.0 $\pm$ 0.0\end{tabular}     \\ \hline
RS-FGSM    & \begin{tabular}[c]{@{}c@{}}95.81 $\pm$ 0.25 \\ 83.8 $\pm$ 0.43\end{tabular}   & \begin{tabular}[c]{@{}c@{}}94.53 $\pm$ 0.4 \\ 66.67 $\pm$ 0.65\end{tabular}   & \begin{tabular}[c]{@{}c@{}}95.23 $\pm$ 0.26 \\ 0.53 $\pm$ 0.26\end{tabular}    & \begin{tabular}[c]{@{}c@{}}94.68 $\pm$ 0.62 \\  0.0 $\pm$ 0.0\end{tabular}   & \begin{tabular}[c]{@{}c@{}}93.9 $\pm$ 0.52 \\ 0.0 $\pm$ 0.0\end{tabular}      & \begin{tabular}[c]{@{}c@{}}91.64 $\pm$ 2.98 \\ 0.0 $\pm$ 0.0\end{tabular}     \\ \hline
RandAlpha  & \begin{tabular}[c]{@{}c@{}}96.02 $\pm$ 0.23 \\ 82.5 $\pm$ 0.45\end{tabular}   & \begin{tabular}[c]{@{}c@{}}95.47 $\pm$ 0.18 \\ 63.33 $\pm$ 0.53\end{tabular}  & \begin{tabular}[c]{@{}c@{}}94.69 $\pm$ 0.26 \\  47.7 $\pm$ 0.99\end{tabular}   & \begin{tabular}[c]{@{}c@{}}93.72 $\pm$ 0.44 \\ 35.73 $\pm$ 0.34\end{tabular} & \begin{tabular}[c]{@{}c@{}}93.08 $\pm$ 1.45 \\  23.17 $\pm$ 1.97\end{tabular} & \begin{tabular}[c]{@{}c@{}}93.96 $\pm$ 0.68 \\ 11.1 $\pm$ 3.05\end{tabular}   \\ \hline
AT Free    & \begin{tabular}[c]{@{}c@{}}94.85 $\pm$ 0.39 \\  83.13 $\pm$ 0.17\end{tabular} & \begin{tabular}[c]{@{}c@{}}92.95 $\pm$ 0.65 \\ \textbf{68.67 $\pm$ 0.53}\end{tabular}  & \begin{tabular}[c]{@{}c@{}}91.62 $\pm$ 1.93 \\ \textbf{54.93 $\pm$ 2.58}\end{tabular}   & \begin{tabular}[c]{@{}c@{}}93.74 $\pm$ 0.69\\ 0.03 $\pm$ 0.05\end{tabular} & \begin{tabular}[c]{@{}c@{}}92.47 $\pm$ 0.97 \\ 0.0 $\pm$ 0.0\end{tabular}     & \begin{tabular}[c]{@{}c@{}}90.5 $\pm$ 1.41 \\ 0.0 $\pm$ 0.0\end{tabular}      \\ \hline
ZeroGrad   & \begin{tabular}[c]{@{}c@{}}95.78 $\pm$ 0.21 \\  \textbf{84.47 $\pm$ 0.83}\end{tabular} & \begin{tabular}[c]{@{}c@{}}94.06 $\pm$ 0.52 \\ 66.1 $\pm$ 0.37\end{tabular}   & \begin{tabular}[c]{@{}c@{}}92.13 $\pm$ 0.98\\   47.3 $\pm$ 0.62\end{tabular}   & \begin{tabular}[c]{@{}c@{}}91.04 $\pm$ 0.4 \\ 29.33 $\pm$ 0.56\end{tabular}  & \begin{tabular}[c]{@{}c@{}}88.85 $\pm$ 0.92 \\  20.77 $\pm$ 0.63\end{tabular} & \begin{tabular}[c]{@{}c@{}}89.8 $\pm$ 1.36 \\ 9.33 $\pm$ 0.76\end{tabular}    \\ \hline
MultiGrad  & \begin{tabular}[c]{@{}c@{}}95.63 $\pm$ 0.16 \\  \textbf{84.37 $\pm$ 0.59}\end{tabular} & \begin{tabular}[c]{@{}c@{}}94.27 $\pm$ 0.38 \\ 67.27 $\pm$ 0.31\end{tabular}  & \begin{tabular}[c]{@{}c@{}}93.64 $\pm$ 1.21 \\ 50.1 $\pm$ 0.9\end{tabular}     & \begin{tabular}[c]{@{}c@{}}94.83 $\pm$ 1.55 \\  1.77 $\pm$ 1.72\end{tabular} & \begin{tabular}[c]{@{}c@{}}95.26 $\pm$ 0.34 \\ 0.0 $\pm$ 0.0\end{tabular}     & \begin{tabular}[c]{@{}c@{}}95.22 $\pm$ 0.15 \\ 0.0 $\pm$ 0.0\end{tabular}     \\ \hline \hline
PGD-2      & \begin{tabular}[c]{@{}c@{}}95.88 $\pm$ 0.35 \\\textbf{ 86.25 $\pm$ 0.7}\end{tabular}   & \begin{tabular}[c]{@{}c@{}}94.66 $\pm$ 0.1 \\ 73.29 $\pm$ 0.25\end{tabular}   & \begin{tabular}[c]{@{}c@{}}93.77 $\pm$ 0.61 \\ 60.53 $\pm$ 0.72\end{tabular}   & \begin{tabular}[c]{@{}c@{}}92.99 $\pm$ 1.11 \\ 40.77 $\pm$ 4.39\end{tabular} & \begin{tabular}[c]{@{}c@{}}88.81 $\pm$ 0.93 \\ 34.33 $\pm$ 2.76\end{tabular}  & \begin{tabular}[c]{@{}c@{}}83.17 $\pm$ 4.78 \\ 26.8 $\pm$ 3.31\end{tabular}   \\ \hline
PGD-10     & \begin{tabular}[c]{@{}c@{}}95.92 $\pm$ 0.08 \\ \textbf{86.94 $\pm$ 0.13}\end{tabular}  & \begin{tabular}[c]{@{}c@{}}94.36 $\pm$ 0.13 \\  \textbf{74.46 $\pm$ 0.54}\end{tabular} & \begin{tabular}[c]{@{}c@{}}92.46 $\pm$ 0.25 \\ \textbf{63.87 $\pm$ 0.49}\end{tabular}   & \begin{tabular}[c]{@{}c@{}}89.67 $\pm$ 0.34 \\ \textbf{53.95 $\pm$ 0.55}\end{tabular} & \begin{tabular}[c]{@{}c@{}}85.98 $\pm$ 0.59 \\ \textbf{44.59 $\pm$ 0.14}\end{tabular}  & \begin{tabular}[c]{@{}c@{}}80.08 $\pm$ 0.93 \\  \textbf{37.64 $\pm$ 0.49}\end{tabular} \\ \hline
\end{tabular}
}
\end{table}

\newpage

%%%%%%%%%%%%%%%%%%%%%%%%%%%%%%%%%%%%%%%%%%%%%%%%%%%%%%%%%%%%%%%%%%%%%%%%%%%%%%%
%%%%%%%%%%%%%%%%%%%%%%%%%%%%%%%%%%%%%%%%%%%%%%%%%%%%%%%%%%%%%%%%%%%%%%%%%%%%%%%

\end{document}